\newtheorem{theorem}{Theorem}
\newtheorem{lemma}[theorem]{Lemma} 
\newtheorem{proposition}[theorem]{Proposition} 
\newtheorem{corollary}[theorem]{Corollary}
\newtheorem{definition}[theorem]{Definition}
\newtheorem{assumption}[theorem]{Assumption}
\newcommand{\R}{\mathbb{R}}
\newcommand{\N}{\mathbb{N}}
\newcommand{\E}{\mathbb{E}}
\newcommand{\hilb}{\mathcal{H}}
\newcommand{\meas}{\mathcal{P}}
\newcommand{\prob}{\mathbb{P}}
\newcommand{\var}{\operatorname{Var}}
\newcommand{\law}{\operatorname{law}}
\newcommand{\supp}{\operatorname{supp}}
\newcommand{\diam}{\operatorname{diam}}
\newcommand{\cov}{\operatorname{Cov}}
\newcommand{\tr}{\operatorname{Tr}}
\newcommand{\dsm}{{\operatorname{dsm}}}
\newcommand{\sm}{{\operatorname{sm}}}
\newcommand{\nud}{\nu_{\operatorname{data}}}
\newcommand{\emm}{{\operatorname{em}}}
\newcommand{\erm}{{\operatorname{erm}}}
\newcommand{\stab}{\varepsilon_{\text{stab}}}
\newcommand{\tw}{\tau}
\newcommand{\argmin}{{\operatorname{argmin}}}
\newcommand{\clip}{{\operatorname{Clip}}}
\newcommand{\copyenv}[3]{\begin{#1}\label{#2}\Copy{#2}{#3}\end{#1}}
\newcommand{\pasteenv}[2]{\begin{custom#1}{\ref{#2}}\Paste{#2}\end{custom#1}}
\title{Implicit Regularisation in Diffusion Models:\\An Algorithm-Dependent Generalisation Analysis}
\author{Tyler Farghly\thanks{farghly@stats.ox.ac.uk}}
\author{Patrick Rebeschini\thanks{rebeschini@stats.ox.ac.uk}}
\author{George Deligiannidis\thanks{deligian@stats.ox.ac.uk}}
\author{Arnaud Doucet\thanks{arnauddoucet@google.com}}
\affil{Department of Statistics, University of Oxford}
\begin{document}

\maketitle

\begin{abstract}
The success of denoising diffusion models raises important questions regarding their generalisation behaviour, particularly in high-dimensional settings. Notably, it has been shown that when training and sampling are performed perfectly, these models memorise training data---implying that some form of regularisation is essential for generalisation. Existing theoretical analyses primarily rely on algorithm-independent techniques such as uniform convergence, heavily utilising model structure to obtain generalisation bounds. In this work, we instead leverage the algorithmic aspects that promote generalisation in diffusion models, developing a general theory of algorithm-dependent generalisation for this setting. Borrowing from the framework of algorithmic stability, we introduce the notion of score stability, which quantifies the sensitivity of score-matching algorithms to dataset perturbations. We derive generalisation bounds in terms of score stability, and apply our framework to several fundamental learning settings, identifying sources of regularisation. In particular, we consider denoising score matching with early stopping (denoising regularisation), sampler-wide coarse discretisation (sampler regularisation) and optimising with SGD (optimisation regularisation). By grounding our analysis in algorithmic properties rather than model structure, we identify multiple sources of implicit regularisation unique to diffusion models that have so far been over-looked in the literature.
\end{abstract}

\section{Introduction}
% Diffusion models, also known as score-based generative models, have emerged as a powerful class of generative modelling techniques, quickly setting new benchmarks in image, audio, video and text synthesis tasks \citep{Dhariwal2021-yo, Kong2020-dy}, and have been integrated into widely used commercial applications such as DALL-E 2, Stable Diffusion, and Imagen \citep{Radford2021-cb, Rombach2022-kn, Saharia2022-zc, Ramesh2022-ac, Nichol2022-tr}. Their success in these settings highlights their proficiency in reliably synthesizing samples from high-dimensional, multi-modal distributions with comparably limited training data. The unprecedented success of diffusion models sparks a need to  develop a theoretical understanding of the mechanisms underpinning their strong generalisation capabilities.

Diffusion models \citep{sohl2015deep,Ho2020-pq,Song2021-uf} are a class of generative models that have achieved state-of-the-art performance across image, audio, video, and protein synthesis tasks \citep{Rombach2022-kn, Saharia2022-zc, Ramesh2022-ac,watson2023novo,esser2024scaling}. Their ability to generate high-quality samples from complex, high-dimensional distributions with limited data motivates the need for a theoretical understanding of the mechanisms underpinning their strong generalisation capabilities.

The goal of diffusion models is to generate new synthetic samples from a data distribution \(\nud\) using a finite set of $N$ data points \(\{x_i\}_{i=1}^N\). Central to the methodology is a unique approach to generating data, formulating it as the iterative transformation of noise to data or, equivalently, the reversal of a diffusion process \citep{Song2021-uf}. This diffusion process, called the \textit{forward process}, is defined by the stochastic differential equation (SDE),
\begin{equation}\label{eq:forward_sde}
    dX_t = - \alpha X_t \, dt + \sqrt{2} \, dW_t, \qquad X_0 \sim \nud, \qquad t \in [0, T],
\end{equation}
for some \(\alpha \geq 0\), where \(W_t\) denotes the Brownian motion in \(\R^d\) and \(T > 0\) is the terminal time. It can then be shown that the time-reversal of this process, \(Y_t := X_{T-t}\) admits a weak formulation as a solution to the SDE,
\begin{equation}\label{eq:backwards_sde}
    dY_t = \alpha Y_t dt + 2 \nabla \log p_{T-t}(Y_t)dt + \sqrt{2} dW_t, \qquad Y_0 \sim p_T, \qquad t \in [0, T),
\end{equation}
where \(p_t\) denotes the marginal density of \(X_t\) \citep{Haussmann1986-hw}. Therefore, simulating samples from $\nud=p_0$ can be achieved by solving the diffusion process in \eqref{eq:backwards_sde}, which requires an approximation of the score function, \(\nabla \log p_t\). 
%
% This score function is modelled using a deep neural network which is tasked with minimising the \textit{(population) denoising score matching loss}:
% \begin{equation}\label{eq:dsm}
%     \ell_\dsm(s) := \E_{X_0 \sim \nu} \bigg [ \int \E_{X_t|X_0} [\|s(X_t, t) - \nabla \log p_{t|0}(X_t|X_0)\|^2|X_0] \, \tw(dt) \bigg ],
% \end{equation}
% where \(\tau\) is a probability measure on \([0, T]\) that serves as a weighting. This objective is identical, up to a constant, to a weighted \(L^2\) norm between \(s\) and the true score function \(\nabla \log p_t\) (see section \ref{sec:prelim_overfitting}) whilst being easy to approximate without access to \(\nabla \log p_t\). Through this setup, the task of density estimation is transformed in to something resembling a supervised learning problem. Using the dataset this objective is approximated by the \textit{empirical denoising score matching loss},
This is achieved by fitting a time-dependent deep neural network to minimise a weighted \(L^2\) distance called the \textit{(population) score matching loss}:
\begin{equation}\label{eq:sm}
    \ell_{\sm}(s; \tau) := \int \E_{X_t} [\|s(X_t, t) - \nabla \log p_t(X_t)\|^2] \, \tw(dt),
\end{equation}
where \(\tau\) is a probability measure over \((0, T]\) that determines the weighting of the timepoints. Since \(\nabla \log p_t\) is unknown, typically the (population) \textit{denoising score matching loss} \(\ell_{\dsm}\), which differs from \(\ell_\sm(s)\) only by a constant, is used instead (see Section \ref{sec:prelim_overfitting} for definition) and is then approximated using the dataset, forming the \textit{empirical denoising score matching loss} \(\hat{\ell}_{\dsm}\). The score network $s(x, t)$ is trained on this objective using standard stochastic optimisation methods relying on mini-batching. Once an approximation is obtained, samples are generated by numerically solving the reverse-time SDE \eqref{eq:backwards_sde}. Both \textit{score matching} and \textit{backwards sampling} introduce distinct challenges and design choices that impact the quality of model output \citep{Karras2022-us}.

\begin{figure}\label{fig:exp}
\centering
\includegraphics[width=\textwidth]{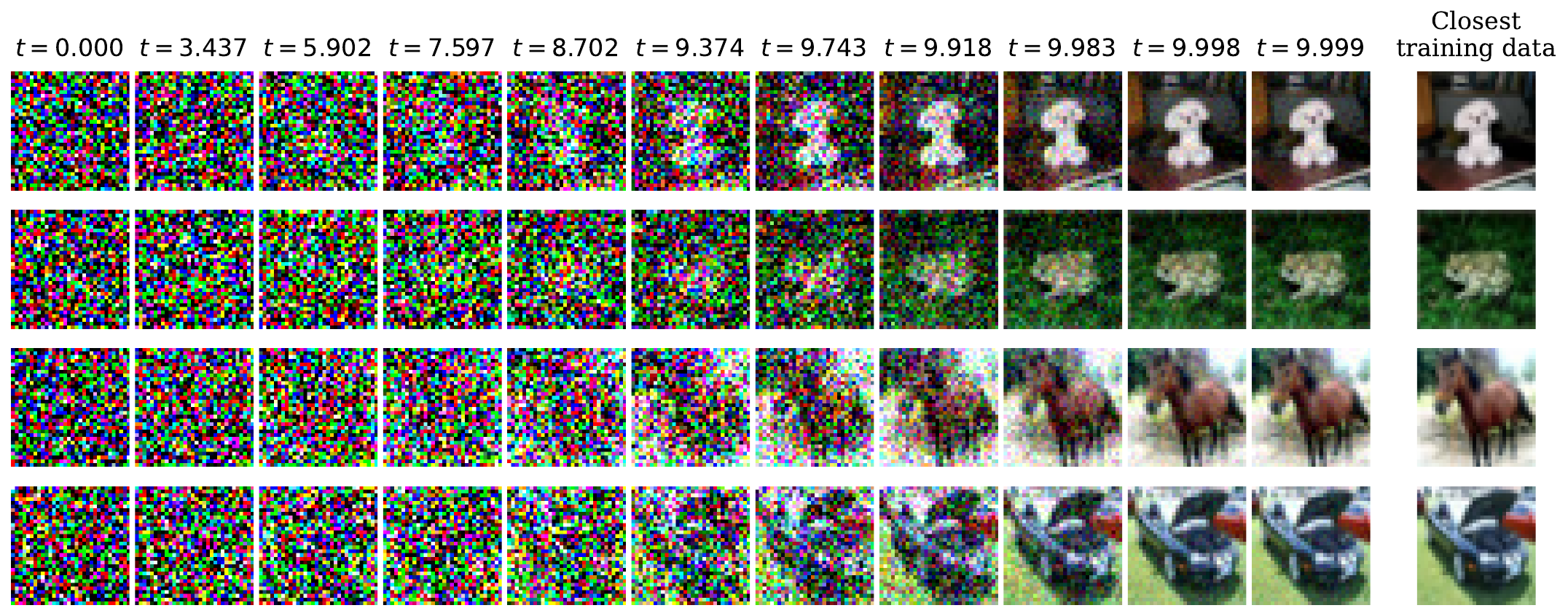}
\caption{Samples generated using the empirical score function on CIFAR-10 compared to the closest image in the dataset, illustrating memorisation of the training data.}
\end{figure}

Score matching presents a key difference from standard supervised learning. In the space of all \(L^2\) score functions, the empirical objective \(\hat{\ell}_\dsm\) possesses a unique minimiser—the empirical score function—as a result of the integration over \(X_t|X_0\) (see Lemma \ref{lem:dsm_sm_emp}). This contrasts with traditional supervised learning where the empirical risk minimisation problem can have infinitely many solutions (for example, overparameterised regression) and regularisation is often required to ensure well-posedness and enable generalisation. As shown in Figure~\ref{fig:exp}, sampling with the empirical score can lead to exact recovery of the training data, as formalised in \cite{Pidstrigach2022-qr}.
%This effect was formalised in \cite{Pidstrigach2022-qr}, where it was shown that if sampling is performed perfectly, the diffusion model using the empirical score (or any sufficiently accurate approximation at small \(t\)) must return samples from the dataset. 
This behaviour differs from that of interpolating models in supervised learning, where overfitting does not necessarily prevent generalisation—a phenomenon known as benign overfitting \citep{Bartlett2021-tf, Zhang2021-ly}, which plays a central role in explaining generalisation in deep learning.
% This differs from traditional supervised learning, where overparameterised models such as deep neural networks can generalise well despite interpolating on training data and perfectly minimising the empirical risk. This property of overparameterised regression, known as `benign overfitting', motivates a popular line of work important in explaining the theoretical underpinnings of deep learning's success \citep{Bartlett2021-tf, Zhang2021-ly}. 
This divergence from traditional learning problems suggests that existing theory for deep learning, and supervised learning more broadly, may not be sufficient for understanding the unique success of diffusion models, and highlights the need for new frameworks specifically targetting this setting.

Recently, there has been a drive towards developing theory for better understanding the unique structure of diffusion models. The most developed subset of this work focuses on connecting sample quality to score matching by deriving upper bounds on distribution error (e.g. KL divergence, total variation, or Wasserstein distance) between model samples and the data distribution, controlling it by the population score matching loss \citep{De_Bortoli2021-vi, De-Bortoli2022-tv,Lee2022-yu,Chen2022-wa, Benton2023-ov, Potaptchik2024-ue}. These results, often referred to as \textit{convergence bounds}, typically take the form
\begin{equation*}
    \text{Distribution error} \lesssim \ell_\sm(s) + \Delta,
\end{equation*}
where \(\Delta\) is the discretisation error of the sampling scheme, that can be made small with sufficiently fine discretisation. However, since \(\ell_\sm\) is not computable, these bounds say little about performance under empirical guarantees—that is, their \textit{generalisation} properties.
% However, in practice, \(\ell_\sm\) is not directly computable; instead, only its empirical counterpart is accessible. A less developed but increasingly important direction concerns how these models perform under \textit{empirical} score matching loss guarantees—that is, their \textit{generalisation} properties.
One line of work, initiated by \cite{Oko2023-sa} and extended in \citep{Azangulov2024-uf, Tang2024-qk}, applies classical uniform convergence theory to bound the generalisation gap from the decomposition,
\begin{equation}\label{eq:sm_decomp}
    \ell_\sm(s) = \hat{\ell}_\sm(s) + \underbrace{\ell_\sm(s) - \hat{\ell}_\sm(s)}_{\text{generalisation gap}},
\end{equation}
where \(\hat{\ell}_\sm\) denotes the empirical counterpart to \(\ell_\sm\). These results rely on covering number bounds for specific classes of neural networks and, while informative, they are limited to carefully chosen model classes and do not account for algorithmic properties.
% While this yields insights into the relationship between model structure and generalisation, the results are tailored to specific, carefully chosen classes of neural networks, offering limited flexibility outside these assumptions. By utilising uniform bounds, the analysis primarily utilises known properties of deep neural networks, failing to engage with algorithmic aspects unique to diffusion models. 
% Separate to this, \citet{De_Bortoli2021-vi} propose a different strategy for generalisation bounds that avoids dependence on the score model class. Their analysis is based on a decomposition of the Wasserstein distance that leverages convergence properties of the empirical measure. This approach is broadly applicable, however, by comparing with the empirical measure it fails to capture how diffusion models are capable of generating novel data, outside of the dataset. Both of these approaches are \textit{algorithm-independent} in that they lack any utilisation of the algorithmic aspects that uniquely define diffusion models.
An alternative approach by \citet{De-Bortoli2022-tv} uses a decomposition of the Wasserstein distance that leverages convergence properties of the empirical measure. Though more model-agnostic, this method overlooks how diffusion models generate novel data. Both lines of work are fundamentally algorithm-independent, in that they lack any utilisation of the algorithmic aspects that uniquely define diffusion models.
Recent efforts aim to incorporate algorithmic effects by restricting the problem. For instance, \citet{Shah2023-ka, Chen2024-pf} consider Gaussian mixture targets, while \citet{Li2023-cf, Yang2022-hq} study random feature models. These settings allow for finer analysis of the role of the score matching algorithm, but remain limited in scope, leaving open the challenge of developing a more general algorithm-dependent theory of generalisation in diffusion models.

% By simplifying the setting, these works are able to take into account the possible influence of the optimisation schemes used during score matching. However, the theoretical techniques used in these works do not lend themselves to a more general analysis.

% Several recent works have approached this problem using ideas from statistical learning theory, borrowing techniques from the uniform learning literature to prove generalisation bounds under empirical risk minimisation \citep{Oko2023-sa, Chen2023-an}. These works control the expected \textit{generalisation gap} of the score matching loss—the difference between \(\ell_\sm\) and \(\hat{\ell}_\sm\)—by upper bounding its worst case, which is controlled by the size of the function space accessible by the score matching algorithm. In another direction, population error bounds have been obtained using a decomposition in Wasserstein distance, which uses convergence properties of the empirical measure \citep{De-Bortoli2022-tv}. Both of these approaches suffer from the fact that they are independent of the algorithm used to select the score function, and as a result, lead to poor dependence on the dimension of the sampling space or the complexity of the model used for score matching—both of which are typically exorbitantly large.

As noted earlier, if the empirical score matching loss was completely minimised and sampling was performed perfectly, the diffusion model would simply return training data, failing to generalise. Therefore, the observed success of diffusion models in producing novel data implies that, in practice, they either avoid completely minimising \(\hat{\ell}_\sm\) or must avoid perfectly sampling. This suggests that (implicit) regularisation in the score matching or sampling algorithm is crucial for generalisation, making algorithmic considerations crucial for understanding diffusion models. 

% The fact that \(\hat{\ell}_\dsm\) admits a unique minimiser--the empirical score--suggests that if empirical score matching and sampling is performed \textit{perfectly}, the diffusion model will simply return training data, failing to generalise. Consequently, the observed success of diffusion models in efficiently generalising and producing novel data must follow from some way in which, in practice, these models either avoid perfectly minimising \(\hat{\ell}_\dsm\) or avoid perfectly sampling.
% % In other words from some kind of implicit (or explicit) regularisation in the score matching algorithm or sampling procedure.
% This implies that some form of implicit regularization—either in the score matching algorithm or the sampling procedure—must play a critical role in enabling generalization. Therefore, understanding generalization in diffusion models \textit{requires} taking the algorithm into account.

% In the present work, we develop an algorithm-dependent framework for analyse the generalisation performance of diffusion models with the objective of uncovering the unique regularisation properties within the learning problem.

\subsection{Our contributions}
We present here a general-purpose algorithm-dependent framework for analysing diffusion model generalisation. We introduce score stability, based on the classical approach of algorithmic stability \citep{Devroye1979-du,Kearns1999-jq,Bousquet2002-ox}, which quantifies a score matching algorithm's dependence on individual training examples. Under score stability guarantees, we derive expected generalisation gap bounds for the score matching and denoising score matching losses (Theorem \ref{thm:dsm_stability}). Using the score stability framework, we then analyse several examples of score matching algorithms, identifying several sources of implicit regularisation in diffusion model training and sampling. By performing an algorithm-dependent analysis and only using basic properties of the model class, our analysis disentangles and identifies three sources of implicit regularisation in the setting of diffusion models: noising regularisation, sampler regularisation, and optimisation-induced regularisation.

% In this work, we develop an algorithm-dependent framework for analysing the generalization properties of diffusion models. Building on techniques from algorithmic stability, we introduce the concept of \textit{score stability}, which captures how dependent a score matching algorithm is on individual examples in the train set. Under score stability, we develop generalisation gap bounds for the score matching and denoising score matching loss in both expectation (Theorem \ref{thm:dsm_stability}) and high probability (Theorem \ref{thm:dsm_stability_prob}). Through the lens of score stability, we analyse several fundamental learning settings, identifying several sources of implicit regularisation present in the training and sampling of diffusion models.

% We then analyse several examples of score matching algorithms through the lens of score stability. Through these examples, we elucidate several qualitative features, typical in the application of diffusion models, that promote generalization. In particular, the role of stochastic optimisation schemes, explicit regularisation and smoothness of the score function in the time-domain, as well as properties of the sampling algorithm, such as early stopping and coarse discretisation.

\paragraph{Denoising regularisation} To begin with, we consider the empirical risk minimisation algorithm (ERM) that selects the score function that minimises \(\hat{\ell}_\dsm\) over some hypothesis class \(\mathcal{H}\). Through a score stability analysis, we reveal a regularisation source within this objective when early stopping of the forward process is used—a standard practice in the diffusion model literature. Utilising properties of the noising forward process, we obtain stability guarantees of its minimiser, irrespective of the hypothesis class chosen. In particular, we obtain generalisation gap bounds with near-linear rate, \(\epsilon^{-d^*/4} (\epsilon^{-d^*/2} N^{-2} + \min_{\mathcal{H}} \hat{\ell}_\sm)^{c/2}\) for any \(c < 1\), where \(\epsilon > 0\) is the early stopping constant and \(d^*\) is the dimension of the data manifold.

\paragraph{Sampler regularisation} We then apply this analysis to discrete-time sampling algorithms, deriving statistical guarantees for the expected KL divergence between the true data distribution and samples generated by the diffusion model. The bound we derive is formed of two stages: we obtain generic rates \(\epsilon^{-1/2} (\epsilon^{-d^*/2} N^{-2} + \min_{\mathcal{H}} \hat{\ell}_\sm)^{c/d^*}\) but when \(N^{-2}\) and \(\min_{\mathcal{H}} \hat{\ell}_\sm\) are sufficiently small relative to \(\epsilon\), we obtain bounds with rates \(\epsilon^{-d^*/4} (\epsilon^{-d^*/2} N^{-2} + \min_{\mathcal{H}} \hat{\ell}_\sm)^{c/2}\) that are faster in \(N\) and \(\min_{\mathcal{H}} \hat{\ell}_\sm\). To derive this bound, we utilise regularisation brought about by the coarseness of the discretisation. We find that by increasing discretisation coarseness, we can improve the generalisation gap bound at the expense of worsening the discretisation error term.

% Our algorithm-dependent analysis, requiring minimal hypothesis class properties, simplifies derivations compared to related works and allows us to disentangle and identify several regularisation sources in diffusion models: sampler regularisation, score approximation smoothness, and implicit regularisation within the denoising objective.

% By performing an algorithm-dependent analysis, and only using minimal properties of the hypothesis class, our derivation is simpler than related works and we are able to disentangle and identify several sources of regularisation in the setting of diffusion models: sampler regularisation, score approximation smoothness, and the implicit regularisation within the denoising objective.

% In section \ref{sec:explicit}, we explore how explicit regularisation influences generalisation and improves the above bounds. We first consider regularisation that influences the smoothness in the time-domain. We show that this promotes stability at large values of \(t\). We then consider regularisation that promotes smoothness in the space-domain. We consider score functions chosen from an reproducing kernel Hilbert space (RKHS) using Tikhonov regularisation, obtaining dimension free rates.

\paragraph{Optimisation regularisation} Next, we consider the role of the optimisation schemes often employed to learn the score approximation. As an illustrative example, we consider stochastic gradient descent (SGD) applied to the denoising score matching objective. Furthermore, we employ gradient clipping and weight decay—two techniques that are frequently used in practice when training diffusion models. On the model class, we assume only structural assumptions typical in the optimisation literature, including non-global Lipschitz and smoothness assumptions.
% Under minimal structural assumptions on the model class, we develop model dimension-independent score stability bounds—that is, bounds that depend on model dimension only through the structural parameters of the problem, such as the Lipschitz constants of the model function.
Under the assumption that the learning rate decays sufficiently quickly, we obtain generalisation gap bounds with rate \(K^{\frac{\bar{\eta}\upsilon}{\bar{\eta}\upsilon+1}} N^{-1}\) where \(K\) is the number of iterations, \(c\) depends on the step-size and \(\upsilon = \mathcal{O}(d \overline{M} + \overline{L}^2)\), where \(\overline{L}\) and \(\overline{M}\) are the average Lipschitz and smoothness constants. Then, we more closely inspect the impact of the high variance gradient estimator employed when training diffusion models, investigating the interplay between the optimisation scheme and gradient noise. We consider a modification to the previous scheme that uses a second-order Gaussian approximation of the gradient noise. Using this additional noise, we identify a contractive behaviour in the training dynamics that we harness to obtain stability bounds that do not grow with number of iterations (see Proposition \ref{prop:time_indep_bounds}). Through this analysis, we show that the heightened noise present in diffusion models training dynamics enables tighter generalisation guarantees.

\subsection{Related works}

% emp measure
% \citet{De-Bortoli2022-tv}
\citet{De-Bortoli2022-tv} provides guarantees in Wasserstein distance under guarantees on the empirical score matching loss using a uniform bound between the empirical distribution of the training set and the underlying data distribution. In the work of \cite{Oko2023-sa}, the authors obtain statistical rates for an algorithm that chooses from a class of ReLU networks using training data as well as knowledge about the smoothness of the target distribution. They obtain bounds on the expected population total variation that decays with rate \(\tilde{\mathcal{O}}(N^{-\frac{s}{2s + d}})\) as well as bounds in 1-Wasserstein distance. Here, \(s\) captures the smoothness of the target. \cite{Tang2024-qk} and \cite{Azangulov2024-uf} extend the Wasserstein distance analysis to the setting of the manifold hypothesis obtaining bounds that are faster when the target is supported on a low dimensional submanifold. This line of work utilises uniform convergence bounds on a class of feed-forward neural networks that is carefully chosen to regularise for target smoothness. In the works of \cite{Li2023-cf} and \cite{Yang2022-hq} a similar approach is used in the setting of random feature models. They use an algorithm-dependent notion of Rademacher complexity, deriving bounds for gradient descent. However, this type of analysis fundamentally relies on the linear structure of the model class and thus is restricted to the setting of linear models.

% Gaussian mixture models
% \citet{Shah2023-ka, Chen2024-pf}
Recently, there has been progress in analysing diffusion models in specific settings that lend themselves to theoretical analysis, in particular, the setting where the target is given by a Gaussian mixture model \citep{Wang2024-jc,Shah2023-ka,Gatmiry2024-ds,Chen2024-pf}. These works benefit from the fact that the forward flow from a Gaussian mixture is tractable and that the true score is given by a two-layer neural network.

Concurrent to this work, \cite{Dupuis2025-au} recently posted an algorithm/data-dependent analysis of diffusion models that targets optimisation algorithms used during training. The motivation of their work is to apply information-theoretic bounds in the setting of diffusion models and they do not use algorithmic stability.

% Concurrent to this work, \cite{Dupuis2025-au} recently independently published an algorithm/data-dependent analysis of diffusion models that targets optimisation algorithms used during training. In their work, they show that the score matching objective can be controlled using traditional PAC-Bayes bounds once the algorithm-independent quantity \(\hat{\ell}_\dsm(\nabla \log \hat{p}_t) - \ell_\dsm(\nabla \log p_t)\) is controlled. The control of this quantity is the primary technical contribution of this work (Proposition 3.1 of their work). While our work shares similar motivations, our work focusses more on identifying sources of implicit regularisation present in and unique to diffusion model training, either through the advantageous properties of the empirical score matching objective, the nature of the sampling process, or through the noise present in the training of diffusion models. Through the development of the score stability framework, our analysis not only permits the use of existing algorithm-dependent bounds, but also identifies algorithmic properties unique to the setting of diffusion models that promote generalisation.

\section{Background}\label{sec:background}

\subsection{An algorithmic formulation of diffusion models}
Suppose that the data distribution \(\nud\) is on \(\R^d\) and we are provided a finite dataset of samples \(S = \{x_1, ..., x_N\}\) which, throughout this work, we assume are sampled independently and identically (i.i.d.) from \(\nud\). As discussed in the introduction, diffusion models are formed of two distinct stages. The first stage, \textit{score matching}, consists of learning an approximation to the score function \(\nabla \log p_t\) using the dataset \(S\). In this work, we take a \textit{score function} to be any function belonging to the set \(L^0(\R^d \times [0, T]; \R^d)\), the set of Borel measurable functions of the form \(\R^d \times [0, T] \to \R^d\). Then, a \textit{score matching algorithm} is taken to be any mapping of the form \(A_\sm: (\cup_{N=1}^{\infty} (\R^d)^{\otimes N}) \times \Omega \to \mathcal{H}\) where \(\mathcal{H}\) is a measurable subset of \(L^0(\R^d \times [0, T]; \R^d)\). Here, \(\Omega\) is the event space belonging to some probability space \((\Omega, \mathcal{F}, \prob)\).

The second stage of diffusion models, \textit{backwards sampling}, consists of generating samples with the learned score function. We take a \textit{sampling algorithm} to be a mapping of the form, \(A_{\operatorname{samp}}:~\mathcal{H}~\to~\meas(\R^d)\) where \(\meas(\R^d)\) denotes the set of Borel measures on \(\R^d\). Typically, sampling is performed using an approximation to the backwards process given in \eqref{eq:backwards_sde}, replacing \(\nabla \log p_t\) with the learned score function \(s(\cdot, t)\). To do this, we must make a further approximation: by replacing its initial distribution, \(p_T\), which we do not have access to, with a data-independent prior distribution, \(p_{\text{prior}} = \mathcal{N}(\mathbf{0}, \sigma^2_{\text{prior}} I)\). In the case of \(\alpha > 0\), we choose \(\sigma^2_{\text{prior}} = \alpha^{-1}\) so that the prior coincides with the stationary distribution of the forward process, and when \(\alpha = 0\) we simply set \(\sigma^2_{\text{prior}} = 2T\). With this, we arrive at the SDE,
\begin{equation}\label{eq:sde_sampler}
    d\hat{Y}_t = \alpha \hat{Y}_t dt + 2 s(\hat{Y}_t, T-t) dt + \sqrt{2} dW_t, \quad \hat{Y}_0 \sim p_{\text{prior}}.
\end{equation}
Thus, a sample is generated by sampling from \(\hat{Y}_T\), or more commonly, the process is terminated early, sampling from \(\hat{Y}_{T-\epsilon}\) for some small \(\epsilon > 0\). Therefore a diffusion model is the density estimation algorithm formed of the composition \(A_{\operatorname{samp}} \circ A_\sm\).

% For example, in \cite{Song2021-uf}, they consider \(\bar{\beta}_k = \bar{\beta}'_k = \int_{k\gamma}^{(k+1)\gamma} \beta_{T-s} ds\), in \cite{Song2021-uf, De_Bortoli2021-vi}, they consider \(\bar{\beta}_k = \bar{\beta}'_k = \beta_{T-k\gamma}\) and in \cite{Ho2020-pq, De_Bortoli2022-lw, Benton2023-ov}, they consider a modification based on the exponential integrator scheme.

% While this exposition has focused on the setting of \(\alpha > 0\) (the variance-preserving setting), it is not uncommon to use \(\alpha = 0\) (the variance-exploding setting), in which case, the prior is modified by setting \(\sigma^2_{\text{prior}} = 2 \phi(1)\) and, unless specified otherwise, the results of this work hold across both settings.

% While we have introduced diffusion models via their SDE formulation, they are also frequently formulated as the approximate reversal of a discrete-time noising process \cite{Ho2020-pq}. The work of this paper can readily be applied in the setting of the discrete formulation by carefully choosing \(\alpha_k, \beta_k, \sigma_k, \gamma_k\) (see Appendix \ref{app:?} for details). Furthermore, there are several samplers based on deterministic iterative schemes–in this paper we focus on the stochastic algorithms and leave this for future work.

\subsection{Denoising score matching and overfitting}\label{sec:prelim_overfitting}
As stated in the introduction, computing \(\ell_\sm\) requires access to the population score function, \(\nabla \log p_t\). So instead,S the \textit{(population) denoising score matching loss} is used in its place:
\begin{equation}\label{eq:dsm}
    \ell_\dsm(s; \tau) := \E_{X_0 \sim \nu} \bigg [ \int \E_{X_t|X_0} [\|s(X_t, t) - \nabla \log p_{t|0}(X_t|X_0)\|^2|X_0] \, \tw(dt) \bigg ],
\end{equation}
which differs from \(\ell_\sm(s)\) only by a constant,
\begin{equation*}
    C_\sm = \int \frac{\mu_t^2}{\sigma_t^4} \E[\tr \cov(X_0|X_t)] \tau(dt) ,
\end{equation*}
(see Lemma \ref{lem:dsm_decomposition}) whilst being easier to approximate without access to \(\nabla \log p_t\) \citep{Hyvarinen2005-zz}. Since \(p_{t|0}\) is a Gaussian kernel, its score function can be directly computed yielding,
\begin{equation}
    \nabla_y \log p_{t|0}(y|x) = \frac{\mu_t x - y}{\sigma_t^2}, \qquad \mu_t = e^{- \alpha t}, \qquad \sigma^2_t = \begin{cases}
        \alpha^{-1} (1 - \mu_t^2), & \text{if } \alpha > 0,\\
        2t, & \text{otherwise.}
    \end{cases}\label{eq:cond_score_form}
\end{equation}
In practice, the objective in \eqref{eq:dsm} is further approximated via Monte Carlo estimation using the dataset which leads to the \textit{empirical denoising score matching loss},
\begin{equation}\label{eq:emp_dsm}
    \hat{\ell}_\dsm(s; S, \tau) := \frac{1}{N} \sum_{i=1}^N \int \E_{X_t|X_0} [\|s(X_t, t) - \nabla \log p_{t|0}(X_t|x_i)\|^2 | X_0 = x_i] \, \tw(dt).
\end{equation}

In the following lemma, we highlight the important property that this can equivalently be defined as the denoising score matching objective for the process \(\hat{X}_t\) which evolves as in \eqref{eq:forward_sde} but with the initial distribution given by the empirical distribution, \(\hat{X}_0 \sim \frac{1}{N} \sum_{i=1}^N \delta_{x_i}(dx)\).

% , i.e.,
% \begin{equation}\label{eq:emp_forward}
%     d \hat{X}_t = - \beta_t \alpha \hat{X}_t dt + \sqrt{2 \beta_t} dW_t, \qquad \hat{X}_0 \sim \hud, \qquad \hud(dx) =\frac{1}{N} \sum_{i=1}^N \delta_{x_i}(dx).
% \end{equation}
\copyenv{lemma}{lem:dsm_sm_emp}{
The objective \(\hat{\ell}_\dsm(s; S, \tau)\) is identical, up to a constant, to the objective
\begin{equation}\label{eq:emp_sm}
    \hat{\ell}_{\operatorname{sm}}(s; S, \tau) := \int \E[\|s(\hat{X}_t, t) - \nabla \log \hat{p}_t(\hat{X}_t)\|^2 | S] \tw(dt),
\end{equation}
where \(\hat{p}_t\) is the marginal density of \(\hat{X}_t\). Therefore, any minimiser of \(\hat{\ell}_\dsm(\cdot; S, \tau)\) on \(L^0(\R^d \times [0, T]; \R^d)\) is identical to \(\nabla \log \hat{p}_t\) a.e. for any \(t \in \supp(\tau)\).
}

% \copyenv{lemma}{lem:dsm_sm_emp}{
% The objective \(\hat{\ell}_\dsm(s; S)\) is identical, up to a constant, to the objective
% \begin{equation}\label{eq:emp_sm}
%     \hat{\ell}_{\operatorname{sm}}(s; S) = \int \E[\|s(\hat{X}_t, t) - \nabla \log \hat{p}_t(\hat{X}_t)\|^2 | S] \tw(dt),
% \end{equation}
% where \(\hat{p}_t\) is the marginal density of \(\hat{X}_t\). Therefore any minimiser of \(\hat{\ell}_\dsm(\cdot; S)\) on \(L^0(\R^d \times [0, 1]; \R^d)\) is identical to \(\nabla \log \hat{p}_t\) a.e.
% }

See Appendix \ref{app:prelim_score} for the proof. This lemma shows that, unlike in traditional supervised learning problems, the empirical objective here admits a single unique minimiser, the \textit{empirical score function}, \(\nabla \log \hat{p}_t\). The nature of this score function and the samples it generates has been the focus of several recent studies, notably \citep{Pidstrigach2022-qr} which shows that with perfect sampling, any score function sufficiently close to \(\nabla \log \hat{p}_t\) recovers the training data.
% \begin{lemma}[\cite{Pidstrigach2022-qr}]
% If the score function \(s\) satisfies
% \begin{equation}\label{eq:emp_sm}
%     \E \bigg [ \exp \bigg ( \int_0^T \|s(X_t, t) - \nabla \log \hat{p}_t\|^2 dt \bigg ) \bigg | S \bigg ] < \infty,
% \end{equation}
% then \(\supp(\hat{Y}_T) \subseteq S\).
% \end{lemma}
% From this, it would seem that the typical approach of using the empirical denoising score matching loss for training could only lead to poor results. However, the success of diffusion models proves otherwise. Therefore, there must be some part in how the denoising score matching loss is used—the learning algorithm—that must prevent perfect minimisation.

% In this work, we are concerned with how effectively diffusion models generalise despite the fact that optimization is performed on the empirical objective \(\hat{\ell}_\dsm\). Borrowing the approach from statistical learning theory, we decompose the population risk into a sum of the empirical objective and the \textit{generalisation gap}—the difference between the population 
% For example, for the denoising score matching loss, we have the trivial decomposition,
% \begin{equation}\label{eq:sm_decomp}
%     \ell_\dsm(\hat{s}) = \hat{\ell}_\dsm(\hat{s}) + \underbrace{\ell_\dsm(\hat{s}) - \hat{\ell}_\dsm(\hat{s})}_{\text{generalisation gap}}.
% \end{equation}
% The techniques introduced in this work are primarily concerned with controlling the generalisation gap. The empirical objective in the first term can be controlled by analysing the optimisation dynamics or using a universal approximation theorem.

\subsection{Other notation}

When the score matching algorithm \(A_\sm\) is random, we use \(A_\sm(S)\) as shorthand for the random score function \((x, t, \omega) \mapsto A_\sm(S, \omega)(x, t)\). Given two random score functions \(s, s'\), we let \(\Gamma(s, s')\) denote the set of all couplings of these random functions (see Appendix \ref{app:random_sm} for further details). We also use the KL divergence,
\begin{equation*}
    D(p\|q) = \int \log \frac{dp}{dq} \, dp, \qquad p,q \in \meas(\R^d), \qquad p \ll q,
\end{equation*}
as well as the notation \([k] := \{1, ..., k\}\). We use the relation \(\lesssim\) to denote bounds up to multiplicative constants and \(\simeq\) to denote that two variables share the same marginal distribution.

\section{Score stability and generalisation}\label{sec:stability}

Algorithmic stability is a classical method in learning theory used to understand the generalisation properties of a variety of important learning algorithms \citep{Kearns1999-jq, Devroye1979-du, Bousquet2002-ox, Hardt2016-ng}. While there are various formulations, they all share the common aim of connecting properties of a learning algorithm to its robustness under changes in the dataset. Its use has primarily been focused around regression and classification problems—in this section we propose a notion of stability that applies specifically to diffusion models.

\subsection{Algorithmic stability for score matching algorithms}
We introduce the notion of score stability which quantifies how sensitive a score matching algorithm \(A_\sm\) is to individual changes in the dataset. We do this by defining the adjacent dataset \(S^i := \{x_1, ..., x_{i-1}, \tilde{x}, x_{i+1}, ..., x_N\}\) where \(\tilde{x} \sim \nud\), independent from \(S\), and then measuring the similarity between the score functions \(\hat{s} = A_\sm(S)\) and \(\hat{s}^i = A_\sm(S^i)\).

\begin{definition}\label{def:forward_stability}
A score matching algorithm \(A_\sm\) is \textit{score stable} with constant \(\stab > 0\) if for any \(i \in [N]\) it holds that,
\begin{equation*}
    \E_{S, \tilde{x}} \bigg [ \inf_{(\hat{s}, \hat{s}^i) \in \Gamma_i} \int \E[\|\hat{s}(X_t, t) - \hat{s}^i(X_t, t)\|^2|X_0=\tilde{x}, S, \tilde{x}] \ \tw(dt) \bigg ] \leq \stab^2,
\end{equation*}
where \(\Gamma_i = \Gamma(A_\sm(S), A_\sm(S^i))\).
\end{definition}

Since \(A_\sm\) may be random we define score stability in terms of the best-case coupling of the random score functions \(\hat{s}\), \(\hat{s}^i\). We recall that \(\Gamma(\cdot, \cdot)\) denotes the set of couplings between two random score functions and that when the score matching algorithm is not random, it is formed from the singleton \(\Gamma_i = \{(A_\sm(S), A_\sm(S^i))\}\).

\subsection{Generalisation gap bounds under score stability}

In the following theorem, we connect score stability to generalisation by controlling the expected generalisation gap by the score stability constant. We obtain two generalisation gap bounds: one controlling the gap for the denoising score matching loss, and one controlling the gap for the score matching loss.

\copyenv{theorem}{thm:dsm_stability}{
Suppose that the score matching algorithm \(A_\sm\) is score stable with constant \(\stab\). Then, with \(\hat{s} = A_\sm(S)\), it holds that
\begin{equation}\label{eq:dsm_stability_0}
    \Big | \E \big [ \ell_{\operatorname{dsm}}(\hat{s}; \tau) \big ]^{1/2} - \E \big [ \hat{\ell}_{\operatorname{dsm}}(\hat{s}; S, \tau) \big ]^{1/2} \Big | \leq \stab.
\end{equation}
Furthermore, it holds that
\begin{equation}\label{eq:dsm_stability_1}
    \E \big [ \ell_{\operatorname{sm}}(\hat{s}; \tau) \big ] - \E \big [ \hat{\ell}_{\operatorname{sm}}(\hat{s}; S, \tau) \big ] \leq 2 \, \stab \, \E \big [ \hat{\ell}_{\operatorname{dsm}}(\hat{s}; S, \tau) \big ]^{1/2} + \stab^2.
\end{equation}
}
% A similar object is bounded in \citet{Oko2023-sa} to obtain algorithm-free generalization bounds for diffusion models based on feedforward neural networks. In this work, the authors use uniform convergence techniques to bound the difference between the expected score matching loss and its empirical counterpart (see the proof of Theorem C.4). They obtain a bound that scales with \(D/N\), where \(D\) is the number of non-zero model parameters. What separates our bound from these is that the quantity \(\varepsilon\) depends on the nature of the score matching algorithm and thus, could be smaller than the uniform bound suggests.
With Theorem \ref{thm:dsm_stability}, we obtain that the generalisation gap for both the denoising score matching loss and the score matching loss decays at the same rate as score stability. We can further simplify the bound for the score matching loss using the fact that \(\hat{\ell}_\dsm\) and \(\hat{\ell}_\sm\) are identical up to a constant to obtain,
\begin{equation*}
    \E \big [ \ell_{\operatorname{sm}}(\hat{s}; \tau) \big ] \lesssim \E \big [ \hat{\ell}_{\operatorname{sm}}(\hat{s}; \tau) \big ] + \stab \, C_\sm^{1/2} + \stab^2.
\end{equation*}

One should expect that if the score matching algorithm is effective, both \(\hat{s}\) and \(\hat{s}^i\) converge to the ground truth as \(N\) grows, and thus \(\stab\) should decrease to \(0\). Ascertaining the rate at which \(N\) decreases requires an analysis of the algorithm at hand, hence the categorisation of algorithmic stability as an algorithm-dependent approach. This contrasts with uniform learning which utilises control over the hypothesis class, providing a worst-case bound that is independent from the algorithm.

To highlight this, one can consider the illustrative example where the algorithm \(A_\sm\) chooses the score function from a highly complex class but does so entirely randomly and independent of the dataset. Despite the nature of the algorithm, the approach of uniform learning would lead to loose bounds due to the high complexity of the hypothesis class, whereas with score stability, the randomness can be coupled in such a way that the stability constant is \(0\), irrespective of the complexity of the hypothesis class, and therefore the generalisation gap is \(0\) also.

In the following sections, we apply the framework of score stability to some common learning settings for diffusion models. We derive estimates of the score stability constant for these algorithms and identify features that promote generalisation.

\section{Empirical score matching and implicit regularisation}\label{sec:score_matching}

We begin our examples by considering the score matching algorithm that minimises the empirical denoising score matching loss. Given a hypothesis class \(\mathcal{H} \subseteq L^0(\R^d \times [0, T]; \R^d)\), we define this algorithm by,
\begin{equation*}
    A_{\operatorname{erm}}(S) = \argmin_{s \in \mathcal{H}} \hat{\ell}_\dsm(s; S, \tau).
\end{equation*}
While this algorithm is not often used in practice, it is the natural analogue to empirical risk minimisation from traditional supervised learning and thus serves as a canonical example. We consider the setting of the manifold hypothesis where the data distribution is supported on a submanifold within \(\R^d\).
\begin{assumption}\label{ass:manifold}
Suppose that \(\nud\) is supported on a smooth submanifold of \(\R^d\) that has dimension \(d^*\) and reach \(\uptau_{\text{reach}} > 0\). Furthermore, its density on the submanifold, \(p_\nu\), satisfies \(c_\nu := \inf p_\nu > 0\).
\end{assumption}

The reach describes the maximum distance where the projection to the manifold is uniquely defined and therefore, it quantifies the maximum curvature of the manifold. We refer to Appendix \ref{sec:manifold} for the full definition. Several recent works have considered the assumption that \(\nud\) lies on a submanifold of \(\R^d\). These works argue that \(d^*\) can often be far smaller than \(d\) and so dependence with respect to \(d^*\) over \(d\) is favourable \cite{De-Bortoli2022-tv, Pidstrigach2022-qr, Loaiza-Ganem2024-eu, Potaptchik2024-ue, Huang2024-ky}. The assumption that the density is bounded from below has appeared in several of these works also \cite{Potaptchik2024-ue, Huang2024-ky}.

We make the following assumption about the class of score networks.

\begin{assumption}\label{ass:hypoth_diam}
Suppose there exists \(D_\mathcal{H} \geq 0\) such that for any \(s, s' \in \mathcal{H}\), it holds that
\begin{equation*}
    \| s(\cdot, t) - s'(\cdot, t) \|_{L^\infty} \leq D_\hilb/\sigma_t^2, \qquad \text{for all } t \in \supp(\tau).
\end{equation*}
\end{assumption}

Under these assumptions, we obtain the following estimate for the stability constant.
\copyenv{proposition}{prop:stability_erm}{
Suppose that assumptions \ref{ass:manifold} and \ref{ass:hypoth_diam} hold and that \(\epsilon = \inf \supp(\tau) \in (0, \uptau_{\text{reach}}^2)\), then for any \(c \in (0, 1)\) and sufficiently large \(N\), the score matching algorithm \(A_{\text{erm}}\) is score stable with constant,
\begin{equation*}
    \stab^2 \lesssim C \Big ( C C_\sm N^{-2} + \E[\hat{\ell}_\sm(\hat{s})] \Big )^c, \qquad C = \frac{D_\hilb^2}{\sigma_\epsilon^4} \vee \frac{1}{c_\nu \sigma_\epsilon^{d^*}}.
\end{equation*}
}

Since we have made only basic assumptions about the structure of the hypothesis class this result suggests that the denoising score matching loss possesses a form of implicit regularisation. This contrasts with algorithmic stability in the setting of traditional supervised learning, where empirical risk minimisation is stable only when restricting the the hypothesis class or with the use of explicit regularisation \cite{Zhang2021-ly, Bousquet2002-ox}. In Proposition \ref{prop:stability_erm}, we show that the denoising score matching loss possesses the unique property that it is stable without the need for additional regularisation.

When \(d^* > 4\), for \(\epsilon\) sufficiently small, we have that \(C = \mathcal{O}(c_\nu^{-1} \epsilon^{-d^*/2}), C_\sm = \mathcal{O}(d^* \epsilon^{-1})\). Since the bound only depends on manifold dimension and not affine dimension, this suggests that diffusion models, via score matching, are automatically manifold-adaptive. The bound also heavily depends on \(\epsilon\), with it being smaller for larger \(\epsilon\) and growing exponentially fast as \(\epsilon\) approaches zero. This suggests that the natural regularisation present in the score matching objective is more prevalent at larger noise scales.

The requirement to have \(\epsilon > 0\) is closely related to the technique of early stopping which is frequently used in the diffusion model literature \citep{Song2021-aj, Karras2022-us}. This is where the backwards process \(\hat{Y}_t\) is terminated early by some small amount of time to avoid irregularity issues of the score function when close to convergence. The relationship between early stopping and the choice of time-weighting is captured in a simple result from \cite{Song2021-rf}: if we let \(\hat{q}_t\) denote the marginal distribution of \(\hat{Y}_t\) (recall the definition in \eqref{eq:backwards_sde}), we obtain that
\begin{equation}\label{eq:early_stop}
    D(p_\epsilon \| \hat{q}_{T-\epsilon}) \leq \ell_\sm(s;\tau_\epsilon) + D(p_T\|p_\infty),
\end{equation}
where \(\tau_\epsilon(dt) = \mathbbm{1}_{t \in [\epsilon, T]} dt\). Other theoretical works have identified the use of early stopping in the generalisation properties of diffusion models \citep{Beyler2025-tt}: in particular, \cite{Oko2023-sa} require that the sampler is stopped early by an amount proportional to \(N^{-1/d}\) which was later improved \cite{Azangulov2024-uf} to \(N^{-1/d^*}\).
% However, it must be noted that if the sampler is stopped too early the resulting output will be noisy and so it is rarely the case that the early stopping constant is taken large or tuned at all—frequently it is fixed to an arbitrary small value (for example, \(10^{-8}\) in the implementations of ? and ?).

\paragraph{Proof summary}
We now provide a brief summary of the proof of Proposition \ref{prop:stability_erm}. The first step of the proof technique utilises a fundamental property of the empirical denoising score matching objective, \(\hat{\ell}_\dsm(s; S, \tau)\): that through its equivalence to the empirical score matching objective (see Lemma \ref{lem:dsm_sm_emp}) it is strongly convex in \(s\) in a data-dependent weighted \(L^2\) space. The use of strong convexity in algorithmic stability analyses is commonplace, frequently used to analyse linear methods—here we borrow a similar approach to analyse stability in function space. With this, we arrive at the following inequality (see Lemma \ref{lem:erm_avg_stability}):
\begin{equation}\label{eq:erm_proof_outline_1}
    \int \E[\|\hat{s}(\hat{X}_t, t) - \hat{s}^{i}(\hat{X}_t, t)\|^2] \tau(dt) \lesssim \E[\hat{\ell}_\sm(\hat{s})] + \frac{\stab}{N} ( C_\sm^{1/2} + \stab),
\end{equation}
where \(\stab\) is the (yet-to-be bounded) score stability constant of \(A_\erm\). Note that this controls the difference between \(\hat{s}\) and \(\hat{s}^{i}\) integrated with respect to \(\hat{X}_t\), whereas score stability requires a bound on the difference integrated with respect to \(X_t|X_0 = \tilde{x}\), which motivates the second step.

The second step of the proof technique utilises a property of the heat kernel—that it smooths out functions. In particular, we utilise the celebrated Harnack inequality of \cite{Wang1997-il} that captures this property by showing that for any positive measurable \(\phi: \R^d \to \R_+\), \(x, y \in \R^d\), it holds that
\begin{equation*}
    \E[\phi(X_t)|X_0 = x] \leq \E[\phi(X_t)^p|X_0 = y]^{1/p} \exp \bigg ( \frac{\mu_t^2 \|x-y\|^2}{2 (p-1) \sigma_t^2} \bigg ),
\end{equation*}
for any \(t > 0, p > 1\). Utilising this bound, we convert the upper bound in \eqref{eq:erm_proof_outline_1} to a bound on the stability constant. The full proof can be found in Appendix \ref{app:erm}.

\section{Stochastic sampling and score stability}\label{sec:sampling}

In practice, the backwards process in \eqref{eq:sde_sampler} cannot be sampled exactly, so we rely on approximations based on numerical integration schemes such as the Euler--Maruyama (EM) scheme (or one of its variations) which approximates the continuous-time dynamics of \(\hat{Y}_t\) with a discrete-time Markov process. In this section, we investigate how algorithmic stability interacts with discrete-time sampling schemes.

We consider the sampling scheme proposed in \citep{Benton2023-ov, Potaptchik2024-ue} which discretises at the timesteps \((t_k)_{k=0}^K\), where we define,
\begin{equation}\label{eq:disc_scheme}
    t_k =\begin{cases}
        \kappa k, & \text{if } k < \frac{T-1}{\kappa},\\
        T - (1 + \kappa)^{\frac{T-1}{\kappa}-k}, & \text{if } \frac{T-1}{\kappa} \leq k \leq K,
    \end{cases} 
\end{equation}
where \(L = \frac{T-1}{\kappa} > 0\), \(\kappa > 0\) and \(K = \lfloor L + \log(\epsilon^{-1})/\log(1+\kappa) \rfloor\) defines the number of steps so that \(t_K \approx \epsilon\) (see Appendix \ref{app:samp} for details). By sampling its terminating iterate \(\hat{y}_K\), we obtain a \textit{sampling algorithm}, \(A_\emm\), that maps a score function \(s\) to the distribution \(\law(\hat{y}_K)\), approximation to the distribution \(\law(\hat{Y}_\epsilon)\).

In the previous section, we identified that early stopping of the backwards process benefits generalisation. In the present section, we will consider how coarseness of the discretisation scheme produces similar benefits. When an EM-type sampling algorithm is utilised, it is often the case that the score function is trained only at those time steps considered by the sampler, i.e. using the time-weighting,
\begin{equation*}
    \hat{\tau}_\kappa(dt) = \frac{1}{K} \sum_{k=0}^{K-1} \delta_{T-t_k}(dt).
\end{equation*}
As a result, the effective stopping time of the algorithm can be much larger than the early stopping time, \(\epsilon\). In the following proposition, we demonstrate how this benefits generalisation.

% However, if the sampler is stopped too early the resulting output will be noisy and so typically this quantity is tuned and taken to be small. Therefore the requirement that \(N\) grows with rate \(\epsilon^{d^*}\) is excessive. Here, we identify a form of regularisation present as as a result of the sampling algorithm–discretisation error, identifying that choosing \(\kappa > 0\) benefits generalisation.

% \begin{proposition}\label{prop:coarse}
% For any \(\epsilon > 0\) sufficiently small, there exists \(\kappa \sim D N^{-1/d^*} T \sim \log(d/N), K \sim L + \log(\epsilon^{-1})/\log(1+\kappa)\) such that for any \(c < 1\) and \(N\) sufficiently large, we obtain,
% \begin{equation*}
%     \E[D(p_\epsilon \| A_\emm \circ A_{\operatorname{erm}}(S))] \lesssim \E [ \hat{\ell}_\sm^\star ] + d^* \log(\epsilon^{-1}) \Big ( D_{\mathcal{H}}^{1/4} (C_\nu^c N^{-2c} + \E [ \hat{\ell}_\sm^\star] ) \Big )^{\frac{1}{d^*+1}},
% \end{equation*}
% where \(\hat{\ell}_\sm^\star = \inf_{\mathcal{H}} \hat{\ell}_\sm(h; S, \hat{\tau})\) and \(\hat{\tau}(dt) = \frac{1}{K} \sum_{k=0}^{K-1} \delta_{T-t_k}(dt).\)
% \end{proposition}

\begin{proposition}\label{prop:coarse}
Consider the setting of Proposition \ref{prop:stability_erm} with \(\alpha = 1\) and set \(\tau = \hat{\tau}_\kappa\), then for sufficiently large \(N\), \(\kappa \leq \epsilon^{-1}/4\) and any \(c \in (0, 1)\), we have  
\begin{align*}
    &\E[D(p_\epsilon \| A_\emm \circ A_{\operatorname{erm}}(S))] \lesssim \E [ \hat{\ell}_{\sm, \kappa}^\star ] + B_\kappa^{1/2} (1 + \kappa)^{-d^*} + \frac{B_\kappa}{C_\sm} (1 + \kappa)^{-2d^*} + \kappa (1 + \kappa) d^* \log(\epsilon^{-1})^2 + d e^{-2T},
\end{align*}
where \(B_\kappa = \tfrac{C_\sm}{c_\nu} (\tfrac{C_\sm}{c_\nu} N^{-2} + \E [ \inf_{\mathcal{H}} \hat{\ell}_\sm(h; S, \hat{\tau}_\kappa) ])^c \epsilon^{-d^*}\).
\end{proposition}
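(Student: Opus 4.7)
The plan is to bound \(\E[D(p_\epsilon\,\|\,A_\emm \circ A_\erm(S))]\) in three layers: a discrete-time sampler convergence inequality, an application of Theorem \ref{thm:dsm_stability} to split the population score matching loss into its empirical counterpart plus a stability correction, and a score stability estimate from Proposition \ref{prop:stability_erm} specialised to the coarse weighting \(\hat{\tau}_\kappa\).

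\textbf{Step 1 (sampler convergence).} I would first apply a convergence bound for the Euler--Maruyama-type sampler on the time grid \((t_k)\) defined in \eqref{eq:disc_scheme}. Under the manifold hypothesis with \(\alpha = 1\), results of the type developed in \citep{Benton2023-ov, Potaptchik2024-ue} yield an inequality
\[
D(p_\epsilon \,\|\, A_\emm(s)) \lesssim \ell_\sm(s; \hat{\tau}_\kappa) + \kappa(1+\kappa) d^* \log(\epsilon^{-1})^2 + d\,e^{-2T},
\]
the middle term being the discretisation error on this specific grid under the manifold hypothesis and the last term the KL gap between \(p_T\) and the prior \(p_\infty\), which decays exponentially in \(T\) for the Ornstein--Uhlenbeck forward dynamics. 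Taking expectation over \(S\) leaves the first term to be controlled.

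\textbf{Step 2 (generalisation gap).} Applying \eqref{eq:dsm_stability_1} of Theorem \ref{thm:dsm_stability} with \(\tau = \hat{\tau}_\kappa\), together with \(\E[\hat{\ell}_\dsm(\hat{s})] \leq C_\sm + \E[\hat{\ell}_\sm(\hat{s})]\) and Young's inequality, gives
\[
\E[\ell_\sm(\hat{s}; \hat{\tau}_\kappa)] \lesssim \E[\hat{\ell}_\sm(\hat{s}; S, \hat{\tau}_\kappa)] + \stab\, C_\sm^{1/2} + \stab^2.
\]
The ERM optimality of \(\hat{s}\) allows identification \(\E[\hat{\ell}_\sm(\hat{s}; S, \hat{\tau}_\kappa)] = \E[\hat{\ell}_{\sm,\kappa}^\star]\).

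\textbf{Step 3 (stability of ERM under \(\hat{\tau}_\kappa\)).} I then invoke Proposition \ref{prop:stability_erm} with \(\tau = \hat{\tau}_\kappa\). The crucial structural observation is that the geometric tail of the time grid pushes \(\inf\supp(\hat{\tau}_\kappa) = T - t_{K-1}\) to order \((1+\kappa)\epsilon\) rather than \(\epsilon\); more importantly, most atoms of \(\hat{\tau}_\kappa\) lie on forward times of the form \((1+\kappa)^{L-k}\) that are well separated from the smallest scale, so that the Harnack factors appearing in the stability proof are mild on \(\hat{\tau}_\kappa\)-average. Tracking this concentration through the analysis of Proposition \ref{prop:stability_erm} yields stability estimates of the form \(\stab\, C_\sm^{1/2} \lesssim B_\kappa^{1/2}(1+\kappa)^{-d^*}\) and \(\stab^2 \lesssim \frac{B_\kappa}{C_\sm}(1+\kappa)^{-2d^*}\). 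Substituting these back into Steps 1--2 gives the claimed bound.

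\textbf{Main obstacle.} The delicate part is the precise interaction between the coarse weighting \(\hat{\tau}_\kappa\) and the Harnack-based stability argument. A black-box application of Proposition \ref{prop:stability_erm} with effective minimum time \((1+\kappa)\epsilon\) does not immediately produce the stated \((1+\kappa)^{-d^*}\) and \((1+\kappa)^{-2d^*}\) scalings, so the internal Harnack integration step must be re-examined to exploit how the mass of \(\hat{\tau}_\kappa\) is distributed across time scales and how this tempers the small-time blow-up in the stability bound. A secondary issue is verifying the discretisation error \(\kappa(1+\kappa) d^* \log(\epsilon^{-1})^2\), which requires the \(d^*\)-adapted variant of the discrete-time convergence analysis under Assumption \ref{ass:manifold} rather than the ambient-dimension version.
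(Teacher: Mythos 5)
Your three steps---Lemma~\ref{lem:disc_err} for the sampler, Theorem~\ref{thm:dsm_stability} for the generalisation gap, and Proposition~\ref{prop:stability_erm} specialised to \(\tau = \hat{\tau}_\kappa\)---are exactly the paper's argument. The ``main obstacle'' you flag is not actually there: the paper does apply Proposition~\ref{prop:stability_erm} as a black box, using only the observation that \(\inf \supp(\hat{\tau}_\kappa) = T - t_{K-1} = (1+\kappa)\epsilon\) together with \(\sigma^2_{(1+\kappa)\epsilon} \geq (1+\kappa)\epsilon\) (valid once \((1+\kappa)\epsilon \leq 1/2\)), and the \((1+\kappa)\)-dependent factors then fall out of the resulting powers of \((\epsilon(1+\kappa))^{-d^*/2}\) after pulling the \(\geq 1\) factor through the \(c\)-th power; no re-examination of the Harnack integration or of how the atoms of \(\hat{\tau}_\kappa\) are distributed across scales is needed.
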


The second and third terms of the bound in Proposition \ref{prop:coarse} are due to the score stability of the ERM algorithm and decay as \(\kappa\) increases. The fourth term of the bound captures the discretisation error and therefore increases with \(\kappa\). What this result captures is that there is a trade-off between sampler accuracy and generalisation that is managed by the discretisation of the diffusion model. In the following corollary, this trade-off is optimised.

\begin{corollary}\label{cor:coarse_opt}
Consider the setting of Proposition \ref{prop:coarse}, then for any \(c \in (0, 1)\) and sufficiently small \(\epsilon\), there exists \(\kappa > 0\) such that
\begin{equation*}
    \E[D(p_\epsilon \| A_\emm \circ A_{\operatorname{erm}}(S))] \lesssim
    \begin{cases}
        B_{\kappa}^{1/2} + C_\sm^{-1} B_{\kappa}, &\text{ if } B_\kappa \leq \log(\epsilon^{-1})^2,\\
        \log(\epsilon^{-1}) B_{\kappa^*}^{\frac{1}{2(d^* + 1)}} + (C_\sm^{-1} + d^*) \log(\epsilon^{-1})^2 B_{\kappa^*}^{\frac{1}{d^* + 1}} +  d e^{-2T}, &\text{ otherwise.}
    \end{cases}
\end{equation*}
\end{corollary}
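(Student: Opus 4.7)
The plan is to optimise the right-hand side of Proposition \ref{prop:coarse} over the discretisation coarseness $\kappa$. Writing $L := \log(\epsilon^{-1})$, the three $\kappa$-sensitive contributions are
\begin{equation*}
T_1(\kappa) := B_\kappa^{1/2}(1+\kappa)^{-d^*}, \qquad T_2(\kappa) := C_\sm^{-1} B_\kappa (1+\kappa)^{-2d^*}, \qquad T_3(\kappa) := \kappa(1+\kappa) d^* L^2,
\end{equation*}
of which $T_1, T_2$ are decreasing in $\kappa$ while $T_3$ is increasing; the term $\E[\hat{\ell}_{\sm,\kappa}^\star]$ is already folded into $B_\kappa$ by construction, and $d e^{-2T}$ is independent of $\kappa$. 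The case split is governed by whether the stability contribution is already small enough to dominate the discretisation regardless of $\kappa$.

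In Case 1, where $B_\kappa \leq L^2$, I would pick $\kappa$ small enough that $T_3(\kappa) \lesssim T_1(\kappa)$. A choice of order $\kappa \asymp B_\kappa^{1/2}/(d^\ast L^2)$ suffices; this lies in the admissible range $(0,\epsilon^{-1}/4]$ for $\epsilon$ small and does not exit the regime $B_\kappa \leq L^2$. With this choice the discretisation term is absorbed into $T_1$, and the overall bound collapses to $T_1 + T_2 \lesssim B_\kappa^{1/2} + C_\sm^{-1} B_\kappa$, matching the first branch.

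In Case 2, where $B_\kappa > L^2$, I would set $\kappa^\ast$ so that $1 + \kappa^\ast \asymp B_{\kappa^\ast}^{1/(2(d^\ast+1))}$. Admissibility $\kappa^\ast \leq \epsilon^{-1}/4$ follows for sufficiently small $\epsilon$ from the worst-case scaling $B_\kappa \lesssim \epsilon^{-d^\ast-1}$, which gives $B_{\kappa^\ast}^{1/(2(d^\ast+1))} = O(\epsilon^{-1/2})$. Direct substitution then produces
\begin{align*}
T_1(\kappa^\ast) &= B_{\kappa^\ast}^{\frac12 - \frac{d^\ast}{2(d^\ast+1)}} = B_{\kappa^\ast}^{\frac{1}{2(d^\ast+1)}},\\
T_2(\kappa^\ast) &= C_\sm^{-1} B_{\kappa^\ast}^{1 - \frac{d^\ast}{d^\ast+1}} = C_\sm^{-1} B_{\kappa^\ast}^{\frac{1}{d^\ast+1}},\\
T_3(\kappa^\ast) &\leq (1+\kappa^\ast)^2 d^\ast L^2 = d^\ast L^2 B_{\kappa^\ast}^{\frac{1}{d^\ast+1}}.
\end{align*}
Bounding $T_1(\kappa^\ast) \leq L \cdot B_{\kappa^\ast}^{1/(2(d^\ast+1))}$ via $L \geq 1$ and summing reproduces the second branch exactly, with the $d e^{-2T}$ term carried through.

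The main obstacle I expect is the implicit nature of the optimum: $B_\kappa$ depends on $\kappa$ through $\E[\inf_{\mathcal{H}} \hat{\ell}_\sm(h; S, \hat{\tau}_\kappa)]$, so the prescription $1+\kappa^\ast \asymp B_{\kappa^\ast}^{1/(2(d^\ast+1))}$ is a fixed-point condition rather than an explicit formula. One resolves this by optimising instead with the uniform upper bound $\overline{B} := \sup_{\kappa \in (0,\epsilon^{-1}/4]} B_\kappa$ to select a concrete $\kappa^\ast$, then restating the resulting bound in terms of $B_{\kappa^\ast}$ (noting that the weighted empirical loss is monotone in $\kappa$ because coarsening shifts mass toward earlier, higher-noise timepoints). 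Apart from this bookkeeping and the routine admissibility check, the argument is a direct calculus-of-exponents exercise.
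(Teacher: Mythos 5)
Your proof follows essentially the same strategy as the paper's: optimise the coarseness $\kappa$ by balancing the decaying stability terms $T_1,T_2$ against the growing discretisation term $T_3$, splitting on whether $B_\kappa$ lies below $\log(\epsilon^{-1})^2$, and then performing the exponent bookkeeping to read off both branches. The only differences are cosmetic --- the paper picks $\kappa^*$ via the balance $(1+\kappa^*)^{2(d^*+1)} = B_{\kappa^*}/\log(\epsilon^{-1})^2$ rather than your $(1+\kappa^*)^{2(d^*+1)} \asymp B_{\kappa^*}$ (both land on the stated second branch after loosening with $\log(\epsilon^{-1})\geq 1$), and it resolves the implicit dependence of $B_\kappa$ on $\kappa$ by showing $\lim_{\kappa\to 0^+}B_\kappa<\infty$ via the weak limit of $\hat\tau_\kappa$ and then taking the smallest admissible $\kappa^*$, in place of your $\sup_\kappa B_\kappa$ argument (your appeal to monotonicity of $\E[\inf_{\mathcal H}\hat\ell_\sm]$ in $\kappa$ is not obviously justified, but it is also not load-bearing once you pass to the supremum).
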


The primary strength of this result in comparison with \citep{Oko2023-sa, Azangulov2024-uf} is that we assume little about the hypothesis class optimised over. Their results fundamentally rely on a carefully constrained set of feed-forward neural networks where the number of parameters in each layer is chosen according to the size of the dataset, dimension of the data and the smoothness of the target so that when the uniform learning analysis is performed, the complexity of the hypothesis class is controlled. Furthermore, their approach depends on a carefully chosen early stopping time, requiring that the backwards process is terminated early to obtain their generalisation bounds. Our result holds for any sufficiently small early stopping time, instead relying on the discretisation scheme to be carefully chosen. Since, in practice, the discretisation scheme is often tuned as a hyper-parameter \citep{Karras2022-us, Williams2024-qj}, we believe that this trade-off is preferable. The primary drawback of our bound compared to these results is that by not exploiting the model class considered we fail to adapt to any smoothness properties of the underlying measure. We leave the integration of model class smoothness into the score stability framework as future work.

% Our analysis is direct and therefore holds for the KL divergence–which is stronger than the total variation considered in \cite{Oko2023-sa}–without the need for any triangle inequality. It is also substantially simpler, not requiring any properties of the model class to obtain generalisation bounds. By performing a direct and simpler analysis we disentangle two properties within diffusion models, coarse discretisation and the denoising task itself, that we identify as promoting generalisation.

% \input{neurips/sections/smoothness}

\section{Stochastic optimisation and implicit regularisation}

In practice, the score function is typically chosen from a parametric hypothesis class \(\{s_\theta: \theta \in \R^n\}\) (e.g. a deep neural network where \(\theta\) represents the weights and biases) and the parameters are chosen by minimising the denoising score matching loss via stochastic optimisation \citep{Karras2024-cw}. In this section, we consider the score stability of this algorithm, focussing on stochastic gradient descent (SGD) with gradient clipping and weight decay.

We consider the standard gradient estimator: given the mini-batch \((x'_i)_{i=1}^{N_B}\) of size \(N_B \ll N\) we define the random estimator,
\begin{equation}\label{eq:stoch_grad}
    G(\theta, (x'_i)_{i=1}^{N_B}) = \frac{1}{N_B P} \sum_{i=1}^{N_B} \sum_{j=1}^P w_{t_{i, j}} \nabla_\theta \|s_\theta(X_{i, j}, t_{i, j}) - \nabla \log p_{t_{i, j}|0}(X_{i, j}| x)\|^2,
\end{equation}
where we define the random variables \(X_{i, j} = \mu_{t_{i, j}} x'_i + \sigma_{t_{i, j}} \xi_{i, j}\), \(t_{i, j} \sim w_t^{-1} \tw(dt), \xi_{i, j} \sim N(0, I_d)\). The additional variance introduced by the random variables \(\xi_{i, j}\) and \(t_{i, j}\) leads to a gradient estimator with significantly higher variance than in standard supervised learning. This presents several challenges during training, and various strategies have been proposed to mitigate this issue \citep{Karras2024-cw, Song2021-aj}. For example, the weighting function \(w: [0, T] \to \R_+\) can be tuned to reduce variance \citep{Karras2022-us} or the number of resamples \(P \in \N\) can be increased. We consider the following iterative scheme, defined for a given weight decay constant \(\lambda > 0\) and clipping value \(C > 0\):
\begin{equation}\label{eq:sgd_iter}
    \theta_{k+1} = (1 - \eta_k \lambda) \theta_k - \eta_k \, \clip_C(G_k(\theta_k, (x_i)_{i \in B_k})),
\end{equation}
where \(\eta_k > 0\) and \(B_k \subset [N]\) denotes the learning rate and mini-batch indices at each iteration \(k\) and we define the clipping operator \(\clip_C(v) = (1 \wedge (C\|v\|^{-1})) v\). Both gradient clipping and weight decay are widely used in diffusion model training and are typically motivated by their stabilising effect on optimisation, minimising the impact of the high variance of the gradient estimator \citep{Song2021-uf,Ho2020-pq}. Throughout this section, we take the mini-batch \(B_k\) to be i.i.d. and uniformly sampled from \([N]\) without replacement. For the sake of simplicity, we suppose that the iterative scheme is terminated after \(K \in \N\) iterations, where \(K\) is fixed and independent of the data.
% In the remainder of this section, we analyse the score stability of the score matching algorithm of this optimization scheme and their implications for generalization.
% Often, the neural network \(h_\theta\) is parametrised in the following way,
% \begin{equation*}
%     s_\theta(x, t) = \frac{x - \mu_t h_\theta(x, t)}{\sigma_t^2},
% \end{equation*}
% so that the following holds:
% \begin{equation*}
%     \|s_\theta(X_t, t) - \nabla \log p_{t|0}(X_t| X_0)\|^2 = \frac{\mu_t^2}{\sigma_t^4} \|h_\theta(X_t, t) - X_0\|^2,
% \end{equation*}
% and the task of learning the score function is transformed in to learning to \textit{denoise} the noisy sample \(X_t\). By appropriately choosing the weighting \(w_t\), we find that any parametrisation should apply to the results in the following sections.

\subsection{Stability of SGD with weight decay and clipping}\label{sec:sgd_stability}

In our analysis, we avoid restricting the score network to a specific parametric class and instead make structural assumptions based on its smoothness properties. We recall that a function is Lipschitz with constant \(L \geq 0\) if it is differentiable and its directional derivatives are uniformly bounded by \(L\).
\begin{assumption}[Smoothness of the score network]\label{ass:smoothness_score}
There exists \(L: \R^d \times (0, T] \to \R_+\) and \(M: \R^d \times (0, T] \to \R_+\) such that for almost all \(x \in \R^d, t \in (0, T]\), \(s_\theta(x, t)\) is Lipschitz and smooth (gradient-Lipschitz) in \(\theta \in \R^n\) with constants \(L(x, t)\) and \(M(x, t)\), respectively. Furthermore, there exists constants \(\overline{M}, \overline{L} \geq 0\) such that for any \(x \in \supp(\nud)\),
\begin{equation*}
    \int \E [ L(X_t, t)^2 | X_0 = x ] \, \tau(dt) \leq \overline{L}^2, \quad \int \E [ M(X_t, t)^2|X_0=x] \, \tau(dt) \leq \overline{M}^2 .
\end{equation*}
\end{assumption}
The use of Lipschitz and smoothness assumptions is commonplace in the analysis of optimisation schemes \cite{Nesterov2018-yb,Hardt2016-ng}. However, the assumption differs from the usual in that we only require these properties to hold almost everywhere with respect to the input distribution and we allow the Lipschitz and smoothness constants to vary with the input, provided their square averages remain bounded. This relaxation enables us to accommodate for common models that would otherwise violate global smoothness assumptions such as ReLU networks.

\begin{assumption}\label{ass:dsm_stability_prob}
Suppose there exists \(B_\ell > 0\) such that for any \(\theta \in \R^n\), it holds that
\begin{equation}\label{eq:dsm_stability_prob}
    \hat{\ell}_\dsm(s_\theta; \{x\}, \delta_t) \leq B_\ell^2/\sigma_t^4, \qquad \text{ for each } x \in \operatorname{supp}(\nud), t \in \supp(\tw).
\end{equation}
\end{assumption}

This property requires that the supported score functions are made of denoising functions that are concentrated on a compact set. To highlight that this can be achieved quite easily, we note that with the naive estimate \(s(x, t) = - x / \sigma_t^2\), \eqref{eq:dsm_stability_prob} is satisfied with \(B_\ell^2 = \E[\|X_0\|^2]\).

In the following proposition we demonstrate score stability bounds in the case that the step size is decaying with a rate of \(1/k\).
\copyenv{proposition}{prop:sgd_stability}{
Consider the score matching algorithm \(A_\sm: S \mapsto s_{\theta_{K}}\) for some fixed \(K \in \N\) where \((\theta_k)_{k}\) as given in \eqref{eq:sgd_iter}. Suppose that assumptions \ref{ass:smoothness_score} and \ref{ass:dsm_stability_prob} hold and \(\eta_k \leq \bar{\eta} / k\) for all \(k < K\) for some \(\bar{\eta} \in (0, \lambda^{-1})\). Then, we obtain that \(A_\sm\) is score stable with constant,
\begin{equation*}
    \stab^2 \lesssim \bigg ( \frac{C}{\lambda} \vee R \bigg )^{1 + \frac{\bar{\eta}\upsilon}{\bar{\eta}\upsilon + 1}} \frac{\overline{L}^2}{(\bar{\eta}\upsilon) \vee 1} \bigg ( \frac{C}{\bar{\eta}} \bigg )^{\frac{1}{\bar{\eta}\upsilon + 1}} \frac{N_B K^{\frac{\bar{\eta}\upsilon}{\bar{\eta}\upsilon + 1}}}{N},
\end{equation*}
where \(R^2 = \E[\|\theta_0\|^2], \upsilon = (\overline{M} B_\ell C_\tau^{1/2} + \overline{L}^2 - \lambda) \vee 0\) and \(C_\tau = \int \sigma_t^{-4} \tau(dt)\).
}

Since the score matching algorithm is random, to control the stability constant we construct a coupling of the random score functions \(A_\sm(S)\) and \(A_\sm(S^i)\), or equivalently, a coupling of the optimisation trajectories associated with training on \(S\) versus \(S^i\). The construction of this coupling is such that the trajectories are identical for a large portion of the train-time. The proof also heavily relies on the stochastic mini-batching, hence why we consider the setting of \(N_B \ll N\). The proof-technique is a modification of a methodology developed by \cite{Hardt2016-ng}. Several recent works have explored the influence of first-order optimisation methods on generalisation \citep{Neu2021-rh, Pensia2018-md, Hardt2016-ng, Clerico2022-ij, Dupuis2025-mq}.

\subsection{Utilising noise in the gradient estimator}\label{sec:noisy_opt}
In the previous section, we established stability bounds for SGD with gradient clipping and weight decay under basic structural assumptions. The primary drawback of this bound is that it grows quickly with the number of iterations—a limitation that becomes more significant in the context of diffusion models, which typically require far more optimisation steps than the size of the dataset due to the high variance of the gradient estimator. In this section, we improve the dependence on the number of iterations by explicitly leveraging the noise in the gradient estimator. The idea that stochasticity in optimisation can act as a form of implicit regularisation has motivated the development of numerous learning algorithms in recent years \cite{Srivastava2014-eh, Bishop1995-ww, Sietsma1991-zo} and have inspired numerous theoretical directions of research \cite{Mou2018-bz, Pensia2018-md, NEURIPS2020_37693cfc, Farghly2021-xg}. Here, we investigate how the noise intrinsic to the gradient estimator for \(\hat{\ell}_\dsm\) can play a similar role in promoting generalisation in diffusion models.
 
To incorporate the effects of the gradient noise, we consider a simplified model in which the noise from the stochastic gradient estimator is approximated with a second-order Gaussian approximation:
\begin{gather}
    \theta_{k+1} = (1 - \eta \lambda) \theta_p - \eta \E \Big [ \clip_C(G(\theta_k, B_k)) \Big | \theta_k, B_k, S \Big ] + \eta \Sigma(\theta_k, B_k)^{1/2} \xi_k,\label{eq:sgd_gaussian_approx}\\
    \Sigma_S(\theta, B) = \operatorname{Cov} \Big ( \clip_C(G(\theta, B)) \Big | \theta, B, S \Big ),\nonumber
\end{gather}
where \(\xi_k \in \R^d\) is a standard Gaussian. This approximation can be justified by observing that the inner summation in \eqref{eq:stoch_grad} is over conditionally i.i.d. variables, once conditioned on \(\theta, B\) and \(S\). Therefore, the gradient estimator \(G\) becomes approximately Gaussian as \(P\) grows large.

For this analysis, we assume the following lower bound on the gradient noise.
\begin{assumption}\label{ass:covariance}
There exists a positive semi-definite matrix \(\overline{\Sigma} \in \R^{n \times n}\) such that for any \(x \in \supp(\nu)\) and \(\theta \in \R^n\),
\begin{equation*}
    \operatorname{Cov}_{t \sim \tau, X_t|X_0} \big ( \clip_C(\nabla_\theta \|s_\theta(X_t, t) - x\|^2) \big | X_0 = x \big ) \succcurlyeq \overline{\Sigma}.
\end{equation*}
Furthermore, the eigenvalues of \(\overline{\Sigma}\), \((\lambda_i)_{i=1}^n\), possess the spectral gap \(\lambda_{\text{gap}} := \min_{\lambda_i \neq 0} \lambda_i > 0\).
\end{assumption}
We use the matrix \(\overline{\Sigma}\) to dictate the geometry on which we perform our analysis. In particular, we consider the weighted norm \(\|v\|_{\overline{\Sigma}^+} := v^T \overline{\Sigma}^+ v\) where \(\overline{\Sigma}^+\) is the pseudoinverse matrix.

\begin{assumption}\label{ass:covariance_smoothness}
For almost all \(x \in \R^d, t \in (0, T]\), \(s_\theta(x, t)\) is Lipschitz and smooth (gradient-Lipschitz) in \(\theta \in \R^n\) with respect to the seminorm \(\|\cdot\|_{\overline{\Sigma}^+}\) and with constants \(L(x, t)\) and \(M(x, t)\), respectively. Furthermore, there exists constants \(\overline{M}, \overline{L} \geq 0\) such that for any \(x \in \supp(\nud)\),
\begin{equation*}
    \int \E [ L(X_t, t)^4 | X_0 = x ] \, \tau(dt) \leq \overline{L}^4, \quad \int \E [ M(X_t, t)^4|X_0=x] \, \tau(dt) \leq \overline{M}^4 .
\end{equation*}
\end{assumption}
By requiring that the Lipschitz and smoothness properties hold with respect to \(\|\cdot\|_{\overline{\Sigma}^+}\), we effectively require that the gradient estimator adds noise in all directions aside from those that do not change the function (e.g. along symmetries in parameter space).

With this, we arrive at our time-convergent score stability bound for SGD.
\copyenv{proposition}{prop:time_indep_bounds}{
Consider the score matching algorithm \(A_\sm: S \mapsto s_{\theta_K}\) for some fixed \(K \in \N\) where \((\theta_k)_{k}\) as given in \eqref{eq:sgd_gaussian_approx}. Suppose that assumptions \ref{ass:dsm_stability_prob}, \ref{ass:covariance_smoothness} and \ref{ass:covariance} hold, then there exists some \(\bar{\eta} > 0\) such that, if \(\sup_p \eta_p \leq \bar{\eta}\), we obtain that \(A_\sm\) is score stable with constant
\begin{equation*}
    \stab^2 \lesssim \frac{\overline{L}^2 C^2 (P + n)}{\lambda_{\text{gap}} N} \min \bigg \{ \frac{\eta_{\min} \lambda_{\text{gap}} \lambda^2}{P N_B C} \sum_{k=0}^{K-1} \eta_k, \exp \bigg ( \tilde{c} \frac{P N_B C}{\eta_{\min} \lambda_{\text{gap}} \lambda^2} \bigg ) \bigg \}.,
\end{equation*}
where \(\tilde{c} \lesssim (\overline{M}_4 B_\ell C_\tau^{1/2} + \overline{L}_4^2) (P N_B \lambda_{\text{gap}})^{-1/2} \vee 1\), \(\eta_{\min} = \min_k \eta_k\).
}

In this bound, we recover the \(\frac{1}{\sqrt{N}}\) score stability bounds from Proposition \ref{prop:sgd_stability} while also introducing the property that the bound does not grow endlessly with the number of iterations. This property is obtained using the noise in the gradient estimator and is not possible without additional noise. Our proof methodology builds on techniques developed in the literature analysing stochastic gradient Langevin dynamics, a modification of SGD that applies isotropic Gaussian noise at each step. In particular, we draw on the reflection coupling method of \citet{Farghly2021-xg}, which constructs coupled trajectories of the optimisation iterated that contract in expectation under a suitably defined metric (using the technical results of \cite{Eberle2019-ai, Majka2020-lp}). In our setting, this contraction arises naturally from the noise inherent to the gradient estimator for the denoising score matching objective. As \(P\) increases or \(\eta\) decreases, the long-term stability bound increases exponentially fast as a result of the benefit of the noise weakening. Through this analysis, we identify the generalisation benefit of a property unique to diffusion models and how they interact with SGD.

\section{Conclusion and Future Work}
In this paper, we proposed a general-purpose algorithm-dependent framework for analysing the generalisation capabilities of diffusion models. We introduced \textit{score stability}, which quantifies the sensitivity of a score matching algorithm to changes in the dataset, and used it to derive expected generalisation gap bounds. We then applied this framework to some common settings, deriving closed-form bounds on the score stability constant for several score matching algorithms. In the process, we identify several sources of implicit regularisation in diffusion models that have previously been over-looked. We begin with an analysis of empirical risk minimisation, finding that the denoising score matching objective automatically yields score stability guarantees without further regularisation (denoising regularisation). We then analyse how score stability interacts with discrete-time sampler schemes, identifying that coarse discretisation can be used to improve generalisation guarantees (sampler regularisation). Finally, we consider stochastic optimisation schemes in the setting of score matching, obtaining score stability guarantees (optimisation regularisation).

This work opens up several directions for future work. This includes identifying further relationships between score stability and generalisation by developing high probability bounds, or bounds on notions of memorisation or privacy. The analysis of empirical score matching could be tightened by utilising more properties of the data distribution or model class, such as smoothness. The analysis of sampling could be taken further by considering and comparing different sampling algorithms under the score stability framework (e.g. by considering the probability flow ODE).

\section*{Acknowledgements}
Tyler Farghly was supported by Engineering and Physical Sciences Research Council (EPSRC) [grant number EP/T517811/1] and by the DeepMind scholarship. Patrick Rebeschini was funded by UK Research and Innovation (UKRI) under the UK government’s Horizon Europe funding guarantee [grant number EP/Y028333/1]. George Deligiannidis acknowledges support from EPSRC [grant number EP/Y018273/1]. The authors would like to thank Michael Hutchinson, Valentin De Bortoli, Peter Potaptchik, Sam Howard, Iskander Azangulov and Christopher J. Williams for valuable comments and stimulating discussions. We would like to give special thanks to Sam Howard for assisting in creating Figure \ref{fig:exp}.

\bibliography{references}

%%%%%%%%%%%%%%%%%%%%%%%%%%%%%%%%%%%%%%%%%%%%%%%%%%%%%%%%%%%%

% \newpage
% \input{neurips/sections/checklist}

%%%%%%%%%%%%%%%%%%%%%%%%%%%%%%%%%%%%%%%%%%%%%%%%%%%%%%%%%%%%

\newpage
\appendix

\section{Further background}\label{app:prelim}
We begin with some further details on notation and lemmas used throughout this work and provide proofs for the lemmas in Section \ref{sec:background}.

\subsection{Random score matching algorithms}\label{app:random_sm}
We begin with some additional details on how random score matching algorithms are defined in this work. Recalling the probability space \((\Omega, \mathcal{F}, \prob)\), we define the set of random score functions,
\begin{equation*}
    \mathcal{S} := \Big \{ s: \R^d \times [0, T] \times \Omega: s(\cdot, \cdot, \omega) \in L^0(\R^d \times [0, T]; \R^d) \Big \}.
\end{equation*}
For any random score matching algorithm \(A_\sm: (\cup_{N=1}^{\infty} (\R^d)^{\otimes N}) \times \Omega \to L^0(\R^d \times [0, T]; \R^d)\), we use \(A_\sm(S)\) as shorthand for the random score function \((\omega, x, t) \mapsto A_\sm(S, \omega)(x, t)\) belonging to \(\mathcal{S}\).

Given two random score functions \(s, s'\), let \(\Gamma(s, s')\) denote the set of all couplings of these functions which we define as,
\begin{equation*}
    \Gamma(s, s') := \Big \{ (\tilde{s}, \tilde{s}') \in \mathcal{S} \times \mathcal{S}: \tilde{s} \simeq s, \tilde{s}' \simeq s' \Big \},
\end{equation*}
where \(\tilde{s} \simeq s\) denotes the fact that for any bounded measurable test function \(\phi: L^0(\R^d \times [0, T]; \R^d) \to \R\), it holds that,
\begin{equation*}
    \int \phi(s(\cdot, \cdot, \omega)) d \prob = \int \phi(\tilde{s}(\cdot, \cdot, \omega)) d \prob.
\end{equation*}

\subsection{Preliminary lemmas}\label{app:prelim_score}
For the score matching loss bound, we begin with the fact that the score matching loss is equivalent to the denoising score matching loss up to an added constant \cite{Song2021-uf, Hyvarinen2005-zz}.

\begin{lemma}\label{lem:score_fn_decomp}
For any \(t > 0\), \(y \in \R^d\), we have
\begin{equation}
    \nabla \log p_t(y) = \frac{\mu_t \E[X_0|X_t=y] - y}{\sigma_t^2}, \qquad \nabla \log \hat{p}_t(y) = \frac{\mu_t \E[\hat{X}_0|\hat{X}_t=y, S] - y}{\sigma_t^2}.\label{eq:score_fn_decomp_2}
\end{equation}
\end{lemma}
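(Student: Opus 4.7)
The plan is to establish both identities via the same route, treating them as special cases of Tweedie's formula applied to the forward kernel (with the population and empirical distributions, respectively, as the initial law). The key observation is that for any probability measure $\rho$ on $\R^d$ giving rise to the marginal $p_t^\rho(y) := \int p_{t|0}(y|x) \, \rho(dx)$, one has $\nabla \log p_t^\rho(y) = \E_\rho[\nabla_y \log p_{t|0}(y|X_0)\mid X_t = y]$, and then \eqref{eq:cond_score_form} supplies the explicit form on the right-hand side.

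\noindent\textbf{Step 1: Differentiate under the integral.} First I would write $p_t(y) = \int p_{t|0}(y|x) \, \nud(dx)$ and justify interchanging $\nabla_y$ with the integral. Since $p_{t|0}(y|x)$ is a Gaussian density with variance $\sigma_t^2 > 0$ (as $t > 0$), the integrand and its $y$-gradient are dominated locally in $y$ by integrable functions of $x$, so dominated convergence gives
\begin{equation*}
\nabla p_t(y) = \int \nabla_y p_{t|0}(y|x) \, \nud(dx) = \int p_{t|0}(y|x) \, \nabla_y \log p_{t|0}(y|x) \, \nud(dx).
\end{equation*}

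\noindent\textbf{Step 2: Rewrite as a posterior expectation.} Dividing through by $p_t(y)$ and recognising $p_{t|0}(y|x)\nud(dx)/p_t(y)$ as the posterior law of $X_0$ given $X_t = y$ (via Bayes' rule, valid for $p_t(y) > 0$, and trivially true on the null set otherwise), I obtain
\begin{equation*}
\nabla \log p_t(y) = \E\bigl[\nabla_y \log p_{t|0}(y|X_0) \,\bigm|\, X_t = y\bigr].
\end{equation*}
Substituting the closed form $\nabla_y \log p_{t|0}(y|x) = (\mu_t x - y)/\sigma_t^2$ from \eqref{eq:cond_score_form} and pulling the deterministic factors outside the conditional expectation yields the first identity.

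\noindent\textbf{Step 3: Empirical case by the same argument.} For the second identity, I would repeat the above with $\nud$ replaced by the empirical measure $\tfrac{1}{N}\sum_{i=1}^N \delta_{x_i}$ conditional on $S$; the interchange of $\nabla_y$ with the sum is automatic, and Bayes' rule gives $\prob(\hat{X}_0 = x_i \mid \hat{X}_t = y, S) \propto p_{t|0}(y|x_i)$, so the same manipulation produces $\nabla \log \hat{p}_t(y) = \E[\nabla_y \log p_{t|0}(y|\hat{X}_0) \mid \hat{X}_t = y, S]$, from which the stated formula follows after substituting \eqref{eq:cond_score_form}.

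There is no real obstacle here beyond verifying the domination hypothesis for differentiation under the integral; the Gaussian tails of $p_{t|0}$ make this routine, and the argument is uniform in the choice of initial distribution, which is exactly why the population and empirical statements follow from a single calculation.
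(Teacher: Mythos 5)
Your proof is correct and follows essentially the same route as the paper: both derive Tweedie's identity $\nabla \log p_t(y) = \E[\nabla_y \log p_{t|0}(y|X_0) \mid X_t = y]$ by combining differentiation under the integral with Bayes' rule, then substitute the explicit Gaussian conditional score from \eqref{eq:cond_score_form}. The only cosmetic difference is direction — you start from $\nabla p_t(y)/p_t(y)$ and arrive at the posterior expectation, while the paper begins with the conditional expectation and simplifies it to $\nabla \log p_t$ — and both handle the empirical case identically by swapping in the empirical measure.
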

\begin{proof}
 We begin by showing that the conditional score is an unbiased estimate of \(\nabla \log p_t\). For any \(x \in \R^d, t > 0\), we have
\begin{align*}
    \E[\nabla \log p_{t|0}(X_t|X_0) | X_t = x] &= \int \nabla_x \log p_{t|0}(x|y) \, p_{0|t}(y|x) dy\\
    &= \int \nabla \log p_{t|0}(x|y) \, \frac{p_{t|0}(x|y) p_0(y)}{p_{t}(x)} dy\\
    &= \int \nabla p_{t|0}(x|y) \, \frac{p_0(y)}{p_{t}(x)} dy.
\end{align*}
Therefore, using the exchangeability of gradients and integrals (note that \(p_{t|0}\) is \(C^\infty\)), we arrive at
\begin{align}
    \E[\nabla \log p_{t|0}(X_t|X_0) | X_t = x] &= \frac{\nabla p_t(x)}{p_{t}(x)}\\
    &= \nabla \log p_t(x). \label{eq:dsm_decomposition_2}
\end{align}
Alternatively, using \eqref{eq:cond_score_form}, we obtain that the left-hand side takes the form,
\begin{equation*}
    \E[\nabla \log p_{t|0}(X_t|X_0) | X_t = x] = \frac{\mu_t \E[X_0|X_t = x] - x}{\sigma_t^2},
\end{equation*}
completing the proof of the first equality in \eqref{eq:score_fn_decomp_2}. For the second equality, concerning that empirical score function, the proof follows similarly once the empirical measure \(\frac{1}{N} \sum_{i=1}^N \delta_{x_i}\) is considered in place of \(\nud\).
\end{proof}

\begin{lemma}\label{lem:dsm_decomposition}
For any integrable score function \(s\), it holds that
\begin{equation*}
    \ell_\dsm(s; \tau) = \ell_\sm(s; \tau) + C_\sm,
\end{equation*}
where, given \(s^\star(x, t) := \nabla \log p_t(x)\), we define
\begin{equation}
    C_\sm := \int \frac{\mu_t^2}{\sigma_t^4} \E[\tr \cov(X_0|X_t)] \tau(dt) = \ell_\dsm(s^\star; \tau).\label{eq:dsm_min}
\end{equation}
\end{lemma}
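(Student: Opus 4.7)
The plan is to fix a time $t>0$ and expand each of the squared norms inside $\ell_\dsm$ and $\ell_\sm$, then show that the $s$-dependent pieces coincide, leaving only an $s$-independent residual that one computes explicitly. Concretely, I would write
\begin{align*}
\|s(X_t,t) - \nabla\log p_{t|0}(X_t|X_0)\|^2 &= \|s(X_t,t)\|^2 - 2\langle s(X_t,t), \nabla\log p_{t|0}(X_t|X_0)\rangle + \|\nabla\log p_{t|0}(X_t|X_0)\|^2,\\
\|s(X_t,t) - \nabla\log p_t(X_t)\|^2 &= \|s(X_t,t)\|^2 - 2\langle s(X_t,t), \nabla\log p_t(X_t)\rangle + \|\nabla\log p_t(X_t)\|^2.
\end{align*}
The first terms are identical. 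For the cross terms, conditioning on $X_t$ and applying the identity $\E[\nabla\log p_{t|0}(X_t|X_0)\mid X_t] = \nabla\log p_t(X_t)$ established in the proof of Lemma \ref{lem:score_fn_decomp} shows that their expectations also agree. Thus $\ell_\dsm(s;\tau) - \ell_\sm(s;\tau)$ equals the difference of the third terms, which is independent of $s$; integrating against $\tau$ produces a constant.

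Next I would compute this constant using the closed-form expressions from \eqref{eq:cond_score_form} and \eqref{eq:score_fn_decomp_2}: substituting $\nabla\log p_{t|0}(X_t|X_0) = \sigma_t^{-2}(\mu_t X_0 - X_t)$ and $\nabla\log p_t(X_t) = \sigma_t^{-2}(\mu_t\,\E[X_0\mid X_t] - X_t)$ into the residual, the $X_t$-only pieces cancel and one is left with
\[
\E\bigl[\|\nabla\log p_{t|0}(X_t|X_0)\|^2 - \|\nabla\log p_t(X_t)\|^2\bigr] = \frac{\mu_t^2}{\sigma_t^4}\bigl(\E[\|X_0\|^2] - \E[\|\E[X_0\mid X_t]\|^2]\bigr) - \frac{2\mu_t}{\sigma_t^4}\E\bigl[\langle X_0 - \E[X_0\mid X_t], X_t\rangle\bigr].
\]
The tower property kills the second term, and the conditional variance decomposition $\E[\|X_0\|^2] - \E[\|\E[X_0\mid X_t]\|^2] = \E[\tr\cov(X_0\mid X_t)]$ turns the first term into the integrand of $C_\sm$. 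Integrating against $\tau(dt)$ yields the stated formula for $C_\sm$.

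Finally, the second equality $C_\sm = \ell_\dsm(s^\star;\tau)$ follows immediately by evaluating the identity $\ell_\dsm(s;\tau) = \ell_\sm(s;\tau) + C_\sm$ at $s=s^\star$ and noting that $\ell_\sm(s^\star;\tau)=0$ by definition. The only delicate point is justifying the exchange of expectation and gradient inside the cross-term calculation, but this is handled by the smoothness of the Gaussian kernel $p_{t|0}$ together with the integrability hypothesis on $s$; there is no substantive obstacle and the lemma is essentially a bookkeeping exercise around Lemma \ref{lem:score_fn_decomp}.
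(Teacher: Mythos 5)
Your argument is correct and is essentially the paper's proof: both hinge on the identity $\E[\nabla\log p_{t|0}(X_t|X_0)\mid X_t]=\nabla\log p_t(X_t)$ to eliminate the $s$-dependent cross term and then identify the leftover as $\E[\operatorname{Tr}\operatorname{Cov}(X_0\mid X_t)]\,\mu_t^2/\sigma_t^4$. The only cosmetic difference is that the paper packages the step as a one-line Pythagorean (bias--variance) decomposition, writing the residual directly as $\E\|\nabla\log p_{t|0}(X_t|X_0)-\nabla\log p_t(X_t)\|^2$, whereas you expand both squares and compute $\E[\|a\|^2]-\E[\|b\|^2]$; these are the same quantity by the conditional variance identity you invoke.
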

\begin{proof}
Let \(s\) be any score function. Using the equality in \eqref{eq:dsm_decomposition_2}, we obtain the following bias-variance decomposition of \(\ell_\dsm(s; \tau)\):
\begin{align*}
    \ell_\dsm(s; \tau) &= \int \E \Big [ \| s(X_t, t) - \nabla \log p_{t|0}(X_t|X_0) \|^2 \Big ] \tau(dt)\\
    &= \int \E \Big [ \| s(X_t, t) - \nabla \log p_{t}(X_t) \|^2 \Big ] \tau(dt) + \int \E \Big [ \| \nabla \log p_{t|0}(X_t|X_0) - \nabla \log p_{t}(X_t) \|^2 \Big ] \tau(dt)\\
    &= \ell_\sm(s; \tau) + \int \E \Big [ \tr \cov \Big ( \nabla \log p_{t|0}(X_t|X_0) \Big | X_t \Big ) \Big ] \tau(dt).
\end{align*}
Once we note that,
\begin{align*}
    \tr \cov \Big ( \nabla \log p_{t|0}(X_t|X_0) \Big | X_t \Big ) &= \tr \cov \bigg ( \frac{\mu_t X_0 - x}{\sigma_t^2} \bigg | X_t \bigg )\\
    &=\frac{\mu_t^2}{\sigma_t^4} \tr \cov(X_0|X_t),
\end{align*}
we obtain the bound \(\ell_\dsm(s; \tau) = \ell_\sm(s; \tau) + C_\sm\) from the statement. To derive the equality \(C_\sm = \ell_\dsm(s^\star; \tau)\), we use that \(\ell_\sm(s^\star; \tau) = 0\) and so we obtain \(\ell_\dsm(s^\star; \tau) = 0 + C_\sm\).
\end{proof}

Similarly, there is an equivalence between the empirical forms of the denoising score matching loss and the score matching loss,
\begin{equation}\label{eq:dsm_sm_emp_2}
    \hat{\ell}_\dsm(s; S, \tau) = \hat{\ell}_\sm(s; S, \tau) + \hat{C}_\sm,
\end{equation}
where
\begin{equation}
    \hat{C}_\sm := \int \frac{\mu_t^2}{\sigma_t^4} \E[\tr \cov(\hat{X}_0|\hat{X}_t, S)|S] \tau(dt) = \hat{\ell}_\dsm(\hat{s}^\star; S, \tau),\label{eq:emp_dsm_min}
\end{equation}
and \(\hat{s}^\star(x, t) = \nabla \hat{p}_t(x)\). This follows immediately from the above proof once the empirical measure \(\frac{1}{N} \sum_{i=1}^N \delta_{x_i}\) is considered in place of \(\nud\). This effectively completes the proof of Lemma \ref{lem:dsm_sm_emp} in Section \ref{sec:prelim_overfitting}.

\pasteenv{lemma}{lem:dsm_sm_emp}
\begin{proof}
The proof follows nearly immediately from \eqref{eq:dsm_sm_emp_2}. Since \(p_{t|0}\) is \(C^\infty\), \(\nabla \log p_{t|0}\) is measurable and thus its empirical average \(\nabla \log \hat{p}_t\) must be also. Therefore,  the score function \(s^\star(x, t) = \nabla \log \hat{p}_t(x)\) satisfies \(\hat{s}^\star \in L^0(\R^d \times [0, T]; \R^d)\) as well as,
\begin{equation*}
    \hat{\ell}_\sm(\hat{s}^\star; S, \tau) = 0.
\end{equation*}
Now let \(s \in L^0(\R^d \times [0, T]; \R^d)\) be any minimiser of \(\hat{\ell}_\dsm(\cdot; S, \tau)\). Through the equivalence of \(\hat{\ell}_\dsm\) and \(\hat{\ell}_\sm\) up to a constant, it follows that \(s\) must also be a minimiser of \(\hat{\ell}_\sm(\cdot; S, \tau)\) and, due to the existence of \(\hat{s}^\star\), must satisfy \(\hat{\ell}_\sm(s; S, \tau) = 0\) also. Letting \(t \in \supp(t)\), we note that since \(t > 0\), we must have that \(p_{t|0}\) has full support and thus, \(s(\cdot, t) = s^\star(\cdot, t)\) almost everywhere.
\end{proof}

\subsection{Manifolds}\label{sec:manifold}
We also introduce some basic properties of smooth manifolds, primarily referencing \cite{Aamari2019-gk}. We define the manifold reach and include a known property of this quantity.

\begin{definition}
The reach of a set \(A \subset \R^d\), is defined by \(\tau_A = \inf_{p \in A} d(p, Med(A))\), where we define the set,
\begin{equation*}
    Med(A) = \Big \{z \in \R^d: \exists p, q \in A \text{ s.t. } p \neq q, \|p - z\| = \|q - z\| \Big \}.
\end{equation*}
\end{definition}

\begin{lemma}\label{lem:manifold_ball_bound}
Suppose that the measure \(\mu\) is supported on a manifold \(M\) with reach \(\tau_M > 0\) and dimension \(d^*\). Then, for any \(r \leq \tau_M\), we have
\begin{equation*}
    \mu(B_r(x)) \geq \Big | \inf_{B_r(x)} p_\mu \Big | \, r^{d^*},
\end{equation*}
where \(p_\mu\) denotes the density of \(\mu\) with respect to the volume measure on \(M\).
\end{lemma}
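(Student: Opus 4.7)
The plan is to reduce the claim to a purely geometric volume lower bound on $M \cap B_r(x)$, and then invoke a standard volume comparison for sets of positive reach. Since $\mu$ admits density $p_\mu$ with respect to the volume measure $V_M$ on $M$, I would first write
\begin{equation*}
\mu(B_r(x)) \;=\; \int_{B_r(x) \cap M} p_\mu(y)\, V_M(dy) \;\geq\; \Big|\inf_{B_r(x)} p_\mu\Big| \cdot V_M\!\left( B_r(x) \cap M \right),
\end{equation*}
so that the statement will follow once I show $V_M(B_r(x) \cap M) \geq r^{d^*}$ for every $x \in M$ and every $r \leq \tau_M$.

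For that volume bound, I would parameterise a piece of $M$ near $x$ using the tangent space $T_x M$. Concretely, fix an isometric identification $T_x M \cong \R^{d^*}$ and consider the orthogonal projection $\pi_x: \R^d \to T_x M$ (centered at $x$). A standard consequence of the reach condition $\tau_M \geq r$ is that $\pi_x$ restricted to $M \cap B_r(x)$ is a diffeomorphism onto its image with Jacobian close to the identity: specifically, for every $y \in M \cap B_r(x)$ the tangent space $T_y M$ makes an angle of size $O(\|y-x\|/\tau_M)$ with $T_x M$, so the Jacobian of $\pi_x|_M$ is at least $\bigl(1 - (r/\tau_M)^2\bigr)^{d^*/2}$. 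Inverting this parameterisation and applying change of variables to the preimage of a Euclidean ball in $T_x M$ of an appropriate radius proportional to $r$ then yields the desired $r^{d^*}$ lower bound.

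The main obstacle is the quantitative control on the tangent-space projection, since $V_M(B_r(x) \cap M)$ must be compared to a $d^*$-dimensional Euclidean volume through a map whose regularity is controlled only by the reach. Rather than redo this analysis from scratch, I would invoke the standard volume comparison results for sets of positive reach in \cite{Aamari2019-gk} (originally going back to Federer), where the quantitative bi-Lipschitz behaviour of $\pi_x|_M$ on scales below $\tau_M$ is recorded, absorbing any dimension-dependent universal constants into the statement. Any loss from these constants is harmless in the way the lemma is used, since it is only applied through the relation $c_\nu \sigma_\epsilon^{d^*}$ appearing in Proposition~\ref{prop:stability_erm} where such constants get absorbed into the $\lesssim$ notation.
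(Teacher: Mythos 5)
Your proposal is correct and coincides with the paper's approach: the paper proves this lemma simply by citing the volume-comparison results for positive-reach submanifolds (Proposition 4.3 of \cite{Aamari2019-gk} and Lemma III.23 of \cite{Aamari2017-he}), which is exactly what you invoke after reducing to the geometric bound $V_M(B_r(x)\cap M)\gtrsim r^{d^*}$. Your additional sketch of the tangent-plane projection argument is a faithful summary of what those cited results prove, and your remark that the dimension-dependent constant is absorbed into the $\lesssim$ in Proposition~\ref{prop:stability_erm} correctly accounts for the missing constant in the lemma's statement.
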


For the proof of this lemma, we refer to the proof of Proposition 4.3 in \cite{Aamari2019-gk} or Lemma III.23 in \cite{Aamari2017-he}.
\section{Proofs for the generalisation gap bounds}\label{app:gen_gap_proofs}
\label{sec:proof_sm_bounds}
We now provide provide the proof of theorem \ref{thm:dsm_stability} that bound the generalisation gap under score stability guarantees. For the sake of brevity, throughout this section we suppress the notation for the time weighting, for example, using the shorthand \(\hat{\ell}_\sm(s; S)\) in place of \(\hat{\ell}_\sm(s; S, \tau)\).

\pasteenv{theorem}{thm:dsm_stability}
\begin{proof}
Setting \(\hat{s} = A_\sm(S)\) and \(\hat{s}^i = A_\sm(S^i)\), we use the property that \((\hat{s}, \tilde{x})\) and \((\hat{s}^i, x_i)\) are distributed identically to obtain that,
\begin{align*}
    \E [\ell_{\dsm}(\hat{s}; \tau)] &= \E[\hat{\ell}_\dsm(\hat{s}; \{\tilde{x}\})]\\
    &= \E \Big [ \frac{1}{N} \sum_{i=1}^N \hat{\ell}_\dsm(\hat{s}^i; \{x_i\}) \Big ]\\
    &= \E \Big [ \frac{1}{N} \sum_{i=1}^N \int \E_{X_t} [\|\hat{s}^i(X_t, t, \omega) - \nabla \log p_{t|0}(X_t|x_i)\|^2 | X_0 = x_i, S] \, \tau(dt) \, \bigg ].
\end{align*}
Therefore, it follows from the triangle inequality in \(L^2\)-norm that
\begin{align*}
    \Big | \E [\ell_{\dsm}(\hat{s}; \tau)]^{1/2} - \E [\hat{\ell}_{\dsm}(\hat{s}; S)]^{1/2} \Big | &\leq \E \bigg [ \frac{1}{N} \sum_{i=1}^N \int \E [\|\hat{s}(X_t, t) - \hat{s}^i(X_t, t)\|^2 | X_0 = x_i, S] \, \tau(dt) \, \bigg ]^{1/2}
\end{align*}
Note that if the algorithm \(A_\sm\) is stochastic, the right-hand side would hold regardless of how \(\hat{s}|S, \tilde{x}\) and \(\hat{s}^i|S, \tilde{x}\) were coupled. Therefore the most efficient coupling can be chosen, leading to the bound,
\begin{align}
    \Big | \E [\ell_{\dsm}(\hat{s}; \tau)]^{1/2} - \E [\hat{\ell}_{\dsm}(\hat{s}; S)]^{1/2} \Big | &\leq \E \bigg [ \inf_{(\hat{s}, \hat{s}^i) \in \Gamma_i} \frac{1}{N} \sum_{i=1}^N \int \E [\|\hat{s}(X_t, t) - \hat{s}^i(X_t, t)\|^2 | X_0 = x_i, S] \, \tau(dt) \, \bigg ]^{1/2} \nonumber\\
    &\leq \stab, \label{eq:dsm_stability_4}
\end{align}
completing the proof of the bound in \eqref{eq:dsm_stability_0}.

To obtain the bound in \eqref{eq:dsm_stability_1}, we use Lemma \ref{lem:dsm_decomposition} to derive
\begin{align}
    \E[\ell_{\sm}(\hat{s}; \tau)] &= \E[\hat{\ell}_{\sm}(\hat{s}; S)] + \E[\ell_{\dsm}(\hat{s}; \tau) - \hat{\ell}_{\dsm}(\hat{s}; S)] + \E \big [ \hat{\ell}_{\dsm}(\nabla \log \hat{p}_t; S) \big ]\nonumber\\
    & \qquad - \ell_{\dsm}(\nabla \log p_t; \tau).\label{eq:dsm_stability_7}
\end{align}
Since \(\hat{\ell}_\dsm(\cdot; S)\) is a unbiased estimator of \(\ell_\sm(\cdot; \tau)\), we have that
\begin{equation}\label{eq:dsm_stability_5}
    \ell_{\dsm}(\nabla \log p_t; \tau) = \E[\hat{\ell}_{\dsm}(\nabla \log p_t; S)] \geq \E[\hat{\ell}_{\dsm}(\nabla \log \hat{p}_t; S)],
\end{equation}
where the inequality follows from the fact that \(\nabla \log \hat{p}_t\) minimises \(\hat{\ell}_\dsm\). Furthermore, using \eqref{eq:dsm_stability_4}, we deduce the bound,
\begin{align}
    |\E[\ell_{\dsm}(\hat{s}; \tau)& - \hat{\ell}_{\dsm}(\hat{s}; S)]| \nonumber\\
    &= \Big ( \E[\ell_{\dsm}(\hat{s}; \tau)]^{1/2} + \E[\hat{\ell}_{\dsm}(\hat{s}; S)]^{1/2} \Big ) \Big | \E[\ell_{\dsm}(\hat{s}; S)]^{1/2} - \E[\hat{\ell}_{\dsm}(\hat{s}; S)]^{1/2} \Big | \nonumber\\
    &\leq \Big ( 2 \E[\hat{\ell}_{\dsm}(\hat{s}; S)]^{1/2} + \stab \Big ) \stab \nonumber\\
    &= 2 \stab \E[\hat{\ell}_{\dsm}(\hat{s}; S)]^{1/2} + \stab^2.\label{eq:dsm_stability_6}
\end{align}
Thus, substituting \eqref{eq:dsm_stability_5} and \eqref{eq:dsm_stability_6} in to \eqref{eq:dsm_stability_7} recovers the bound in \eqref{eq:dsm_stability_1} in the statement.
\end{proof}

We obtain upper bounds relying on the fact that the constant separating the score matching loss from the denoising score matching loss is larger on average in the empirical case. One could obtain lower bounds through our techniques but this would require an analysis of the rate of convergence of this constant which is beyond the scope of this paper.
\section{Proofs for stability of empirical denoising score matching}\label{app:erm}
In this section, we provide the proof for Theorem \ref{thm:dsm_stability}, where the algorithm that minimises \(\hat{\ell}_\dsm(\cdot; S, \tau)\) over some class of score functions \(\mathcal{H}\) is shown to be score stable.

\subsection{On-average stability of the ERM algorithm}
We begin with an important lemma that shows that under minimal assumptions, \(\hat{s} = A_\erm(S)\) and \(\hat{s}^i = A_\erm(S)\) are close in \(L^2\) space, averaged over the full dataset. The first half of this proof utilises the fact that \(\hat{\ell}_\dsm\) is 1-strongly convex in a weighted \(L^2\) space, exploiting a well-known relationship between strong-convexity and algorithmic stability (e.g. see \citep{Bousquet2002-ox, Charles2018-gl, Vary2024-ao, Attia2022-ee}).

\begin{lemma}\label{lem:erm_avg_stability}
Suppose that \(A_\erm\) is score stable with constant \(\stab\), then for any \(i \in [N]\), we obtain,
\begin{equation}
    \E \bigg [ \int \int \|\hat{s}^i(y, t) - \hat{s}(y, t)\|^2 \, \hat{p}_t(dy) \, \tw(dt) \bigg ] \leq 8 \E[\hat{\ell}_\sm(\hat{s})] + \frac{8}{N} \stab ( C_\sm^{1/2} + \stab)\label{eq:erm_stability_4}
\end{equation}
where \(\hat{s} = A_{\operatorname{erm}}(S), \hat{s}^i = A_{\operatorname{erm}}(S)\).
\end{lemma}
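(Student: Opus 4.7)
The plan is to reduce the target $L^2$-discrepancy to empirical score-matching losses via the parallelogram (squared triangle) inequality, use exchangeability of $S$ and $S^i$ under the swap $x_i \leftrightarrow \tilde x$ to rewrite $\E[\hat\ell_\sm(\hat s^i; S)]$ in terms of $\E[\hat\ell_\sm(\hat s; S)]$ plus the denoising generalisation gap, and finally invoke Theorem \ref{thm:dsm_stability} to bound that gap. By Lemma \ref{lem:dsm_sm_emp}, $\hat\ell_\sm(s; S) = \|s - \hat s^\star\|_{\bar\mu_S}^2$ is a squared $L^2$-distance to the empirical score $\hat s^\star := \nabla\log\hat p_t$ under the weighting $\bar\mu_S(dy, dt) := \hat p_t(dy)\tau(dt)$. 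Hence
\[
\int\!\!\int \|\hat s^i(y, t) - \hat s(y, t)\|^2 \hat p_t(dy)\tau(dt) \le 2\,\hat\ell_\sm(\hat s; S) + 2\,\hat\ell_\sm(\hat s^i; S),
\]
so it suffices to bound $\E[\hat\ell_\sm(\hat s^i; S)]$.

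Writing $\hat\ell_\dsm(s; S) = \tfrac{1}{N}\sum_j f^{x_j}(s)$ for the per-point DSM loss $f^{y}(s) := \int \E[\|s(X_t, t) - \nabla\log p_{t|0}(X_t|y)\|^2 \mid X_0 = y]\tau(dt)$ makes the dataset-dependence linear, giving $\hat\ell_\dsm(\hat s^i; S) - \hat\ell_\dsm(\hat s^i; S^i) = \tfrac{1}{N}(f^{x_i}(\hat s^i) - f^{\tilde x}(\hat s^i))$. Converting to $\hat\ell_\sm$ introduces $\hat C_\sm(S^i) - \hat C_\sm(S)$ correction terms, which cancel in expectation since $S \stackrel{d}{=} S^i$. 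Swapping $x_i$ and $\tilde x$ in the $(N+1)$-tuple of i.i.d.\ samples, together with the permutation-symmetry of $A_\erm$, yields the two exchange identities $(\hat s^i, x_i) \stackrel{d}{=} (\hat s, \tilde x)$ and $(\hat s^i, \tilde x) \stackrel{d}{=} (\hat s, x_i)$. Since $x_i$ does not appear in $S^i$ and is thus independent of $\hat s^i$, the first gives $\E[f^{x_i}(\hat s^i)] = \E[\ell_\dsm(\hat s; \tau)]$; since $\tilde x$ is the $i$-th entry of $S^i$, the second together with symmetry across the entries of $A_\erm$ gives $\E[f^{\tilde x}(\hat s^i)] = \E[\hat\ell_\dsm(\hat s; S)]$. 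Combined with $\E[\hat\ell_\sm(\hat s^i; S^i)] = \E[\hat\ell_\sm(\hat s; S)]$, these produce
\[
\E[\hat\ell_\sm(\hat s^i; S)] = \E[\hat\ell_\sm(\hat s; S)] + \tfrac{1}{N}\bigl(\E[\ell_\dsm(\hat s; \tau)] - \E[\hat\ell_\dsm(\hat s; S)]\bigr).
\]

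To finish, I would square the conclusion of Theorem \ref{thm:dsm_stability} to obtain $\E[\ell_\dsm(\hat s; \tau)] - \E[\hat\ell_\dsm(\hat s; S)] \le 2\stab\,\E[\hat\ell_\dsm(\hat s; S)]^{1/2} + \stab^2$, and then convert $\hat\ell_\dsm$ to $\hat\ell_\sm$ using that $\hat C_\sm(S) = \hat\ell_\dsm(\hat s^\star; S) \le \hat\ell_\dsm(s^\star; S)$ by empirical optimality of $\hat s^\star$, whose expectation is $C_\sm$ by unbiasedness; this gives $\E[\hat\ell_\dsm(\hat s; S)]^{1/2} \le \E[\hat\ell_\sm(\hat s; S)]^{1/2} + C_\sm^{1/2}$. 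Substituting back and applying Young's inequality $2ab \le a^2 + b^2$ to absorb the cross term $\tfrac{\stab}{N}\E[\hat\ell_\sm]^{1/2}$ into $\E[\hat\ell_\sm] + O(\stab^2/N^2)$, and bounding $\stab^2/N^2 \le \stab^2/N$, yields the claimed bound with the stated constants.

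The main conceptual point is the self-referential nature of the resulting inequality: $\stab$ appears on the right because we are invoking Theorem \ref{thm:dsm_stability}, whose conclusion itself involves $\stab$, before any concrete estimate on $\stab$ is available. This is by design---the second, Harnack-based step in the proof of Proposition \ref{prop:stability_erm} transfers the $L^2(\bar\mu_S)$ bound established here to one involving the conditional measure $X_t \mid X_0 = \tilde x$ that appears in the definition of score stability, after which one obtains an inequality of the form $\stab^2 \lesssim \text{const} \cdot \stab^{c}$ with $c < 2$ that can be solved algebraically. The other place requiring care is the bookkeeping of exchangeability---in particular, distinguishing the ``out-of-sample'' identity $(\hat s^i, x_i) \stackrel{d}{=} (\hat s, \tilde x)$ from the ``in-sample'' identity $(\hat s^i, \tilde x) \stackrel{d}{=} (\hat s, x_i)$, which give different expectations for $f^{x_i}(\hat s^i)$ and $f^{\tilde x}(\hat s^i)$ and are exactly what produces the population-minus-empirical gap in the key identity.
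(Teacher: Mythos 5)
Your proposal is correct, and it takes a genuinely different route from the paper's. The paper proves the bound via a strong-convexity (polarization) identity for $\hat\ell_\dsm(\cdot;S)$: expanding $2\langle\hat s^i-\hat s,\hat s^i-\nabla\log p_{t|0}\rangle$ yields $\hat\ell_\dsm(\hat s^i;S)-\hat\ell_\dsm(\hat s;S)+\int\!\!\int\|\hat s^i-\hat s\|^2\hat p_t\,\tau$, the first-order optimality of $\hat s$ over $\mathcal H$ makes the difference of losses nonnegative, and the resulting inner product is bounded by splitting over $S^i$, $x_i$, $\tilde x$ with Cauchy--Schwarz; this produces a quadratic inequality in the target $L^2$-discrepancy that must then be solved. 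You instead start from the elementary squared triangle inequality $\|\hat s^i-\hat s\|_{\bar\mu_S}^2\le 2\hat\ell_\sm(\hat s;S)+2\hat\ell_\sm(\hat s^i;S)$, never invoking the minimizer property of $\hat s$, and then derive the \emph{exact} exchange identity $\E[\hat\ell_\sm(\hat s^i;S)]=\E[\hat\ell_\sm(\hat s;S)]+\tfrac1N\big(\E[\ell_\dsm(\hat s)]-\E[\hat\ell_\dsm(\hat s;S)]\big)$ from the linearity of $\hat\ell_\dsm$ in the dataset plus the same in-/out-of-sample exchangeability facts the paper uses; the remaining gap is bounded directly by Theorem~\ref{thm:dsm_stability} and Young's inequality, with no quadratic to solve. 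Tracking constants gives roughly $5\,\E[\hat\ell_\sm]+\tfrac4N\stab C_\sm^{1/2}+\tfrac6N\stab^2$, which is inside the stated $8\,\E[\hat\ell_\sm]+\tfrac8N\stab(C_\sm^{1/2}+\stab)$, so the claimed bound holds. What the paper's route buys is tightness (the convexity step avoids the doubling and the ERM optimality makes the loss difference drop out), which matters for the downstream optimization of $p,\kappa,\gamma$ in Proposition~\ref{prop:stability_erm}; what your route buys is transparency and generality --- it works for any permutation-symmetric algorithm, not just ERM, and avoids the slightly delicate quadratic-solve step. Your observation that the resulting inequality is self-referential and is resolved algebraically only after the Harnack-inequality transfer in Proposition~\ref{prop:stability_erm} is exactly right and matches the paper's structure.
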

\begin{proof}
Choose \(i \in [N]\) and let \(\hat{s} = A_{\operatorname{erm}}(S), \hat{s}^i = A_{\operatorname{erm}}(S^i)\) so that \(\hat{s} \in \operatorname{argmin}_{\mathcal{H}} \hat{\ell}_\dsm(\cdot; S, \tau), \hat{s}^i \in \operatorname{argmin}_{\mathcal{H}} \hat{\ell}_\dsm(\cdot; S^i, \tau)\).
The proof begins with the following simple expression, that holds for all \(j \in [N]\):
\begin{align*}
    & 2 \int \big \langle \hat{s}^i(y, t) - \hat{s}(y, t), \hat{s}^i - \nabla \log p_{t|0}(y|x_j) \big \rangle \, p_{t|0}(dy|x_j)\\
    & \qquad = \int \|\hat{s}^i(y, t) - \nabla \log p_{t|0}(y|x_j)\|^2 \, p_{t|0}(dy|x_j) - \int \|\hat{s}(y, t) - \nabla \log p_{t|0}(y|x_j)\|^2 \, p_{t|0}(dy|x_j)\\
    & \qquad \qquad + \int \|\hat{s}^i(y, t) - \hat{s}(y, t)\|^2 \, p_{t|0}(dy|x_j).
\end{align*}
By averaging over \(j \in [N]\) and integrating with respect to \(\tau(dt)\), we arrive at the upper bound,
\begin{align}
    &\frac{2}{N} \sum_{j \in [N]} \int \int \big \langle \hat{s}^i(y, t) - \hat{s}(y, t), \hat{s}^i - \nabla \log p_{t|0}(y|x_j) \big \rangle \, p_{t|0}(dy|x_j) \, \tw(dt)\nonumber\\
    & \qquad = \hat{\ell}_\dsm(\hat{s}^i; S, \tau) - \hat{\ell}_\dsm(\hat{s}; S, \tau) + \int \int \|\hat{s}^i(y, t) - \hat{s}(y, t)\|^2 \, \hat{p}_t(dy) \, \tw(dt)\nonumber\\
    & \qquad \geq \int \int \|\hat{s}^i(y, t) - \hat{s}(y, t)\|^2 \, \hat{p}_t(dy) \, \tw(dt),\label{eq:erm_stability_2}
\end{align}
where the inequality follows from the fact that \(\hat{\ell}_\dsm(\hat{s}; S, \tau) \leq \hat{\ell}_\dsm(s; S, \tau)\) for any score function \(s \in \mathcal{H}\). Additionally, the left-hand side is upper bounded using the Cauchy-Schwarz inequality to obtain,
\begin{align}
    &\frac{2}{N} \sum_{x \in S} \int \int \big \langle \hat{s}^i(y, t) - \hat{s}(y, t), \hat{s}^i - \nabla \log p_{t|0}(y|x) \big \rangle \, p_{t|0}(dy|x) \, \tw(dt)\nonumber\\
    & \qquad = \frac{2}{N} \sum_{x \in S^i} \int \int \big \langle \hat{s}^i(y, t) - \hat{s}(y, t), \hat{s}^i(y, t) - \nabla \log p_{t|0}(y|x) \big \rangle \, p_{t|0}(dy|x) \, \tw(dt)\nonumber\\
    & \qquad \qquad + \frac{2}{N} \int \int \big \langle \hat{s}^i(y, t) - \hat{s}(y, t), \hat{s}^i(y, t) - \nabla \log p_{t|0}(y|x_i) \big \rangle \, p_{t|0}(dy|x_i) \, \tw(dt)\nonumber\\
    & \qquad \qquad  - \frac{2}{N} \int \int \big \langle \hat{s}^i(y, t) - \hat{s}(y, t), \hat{s}^i(y, t) - \nabla \log p_{t|0}(y|\tilde{x}) \big \rangle \, p_{t|0}(dy|\tilde{x}) \, \tw(dt)\nonumber\\
    & \qquad \leq 2 \hat{\ell}_\sm(\hat{s}^i; S^i, \tau)^{1/2} \bigg ( \int \int \|\hat{s}^i(y, t) - \hat{s}(y, t)\|^2 \, \hat{p}^i_t(dy) \, \tw(dt) \bigg )^{1/2}\nonumber\\
    & \qquad \qquad + \frac{2}{N} \hat{\ell}_\dsm(\hat{s}^i; \{x_i\}, \tau)^{1/2} \bigg ( \int \int \| \hat{s}^i(y, t) - \hat{s}(y, t) \|^2 \, p_{t|0}(dy|x_i) \, \tw(dt) \bigg )^{1/2}\nonumber\\
    & \qquad \qquad + \frac{2}{N} \hat{\ell}_\dsm(\hat{s}^i; \{\tilde{x}\}, \tau)^{1/2} \bigg ( \int \int \| \hat{s}^i(y, t) - \hat{s}(y, t) \|^2 \, p_{t|0}(dy|\tilde{x}) \, \tw(dt) \bigg )^{1/2},\label{eq:erm_stability_3}
\end{align}
where \(\hat{p}^i_t(dy) = \frac{1}{N} \sum_{x \in S^i} p_{t|0}(dy|x)\). Combining the expressions in \eqref{eq:erm_stability_2} and \eqref{eq:erm_stability_3} and taking the expectation, we derive the bound,
\begin{align*}
    &\E \bigg [ \int \int \|\hat{s}^i(y, t) - \hat{s}(y, t)\|^2 \, \hat{p}_t(dy) \, \tw(dt) \bigg ]\\
     & \qquad \leq 2 \E[\hat{\ell}_\sm(\hat{s}^i; S^i, \tau)]^{1/2} \E \bigg [ \int \int \|\hat{s}^i(y, t) - \hat{s}(y, t)\|^2 \, \hat{p}^i_t(dy) \, \tw(dt) \bigg ]^{1/2}\nonumber\\
    & \qquad \qquad + \frac{2}{N} \E[\hat{\ell}_\dsm(\hat{s}^i; \{x_i\}, \tau)]^{1/2} \E \bigg [ \int \int \| \hat{s}^i(y, t) - \hat{s}(y, t) \|^2 \, p_{t|0}(dy|x_i) \, \tw(dt) \bigg ]^{1/2}\nonumber\\
    & \qquad \qquad + \frac{2}{N} \E[\hat{\ell}_\dsm(\hat{s}^i; \{\tilde{x}\}, \tau)]^{1/2} \E \bigg [ \int \int \| \hat{s}^i(y, t) - \hat{s}(y, t) \|^2 \, p_{t|0}(dy|\tilde{x}) \, \tw(dt) \bigg ]^{1/2}\\
    & \qquad \leq 2 \E[\hat{\ell}_\sm(\hat{s}; S, \tau)]^{1/2} \E \bigg [ \int \int \|\hat{s}^i(y, t) - \hat{s}(y, t)\|^2 \, \hat{p}_t(dy) \, \tw(dt) \bigg ]^{1/2}\\
    & \qquad \qquad + \frac{2}{N} \stab \Big ( \E[\hat{\ell}_\dsm(\hat{s}; S, \tau)]^{1/2} + \E[\ell_\dsm(\hat{s}, )]^{1/2} \Big ),
\end{align*}
where we recall that \(\stab\) is the stability constant for \(A_\erm\). Here, we have used the fact that \((\hat{s}, S)\) has the same law as \((\hat{s}^i, S^i)\) and also \(\E[\hat{\ell}_\dsm(\hat{s}^i; \{\tilde{x}\})] = \E[\hat{\ell}_\dsm(\hat{s}; S)]\) and \(\E[\hat{\ell}_\dsm(\hat{s}^i; \{x_i\})] = \E[\ell_\dsm(\hat{s})]\). By solving the quadratic equation, we deduce that the above inequality implies that,
\begin{align*}
    &\E \bigg [ \int \int \|\hat{s}^i(y, t) - \hat{s}(y, t)\|^2 \, \hat{p}_t(dy) \, \tw(dt) \bigg ]\\
    & \qquad \qquad \leq \bigg ( \E[\hat{\ell}_\sm(\hat{s}; S, \tau)]^{1/2} + \sqrt{\E[\hat{\ell}_\sm(\hat{s}; S, \tau)] + \frac{2}{N} \stab (\E[\ell_\dsm(\hat{s}; \tau)]^{1/2} + \E[\hat{\ell}_\dsm(\hat{s}; S, \tau)]^{1/2})} \bigg )^2\\
    & \qquad \qquad \leq 4 \E[\hat{\ell}_\sm(\hat{s}; S, \tau)] + \frac{4}{N} \stab (\E[\ell_\dsm(\hat{s}; \tau)]^{1/2} + \E[\hat{\ell}_\dsm(\hat{s}; S, \tau)]^{1/2}).
\end{align*}
We simplify the above expression further using Theorem \ref{thm:dsm_stability}. Using the stability assumption, it follows from \eqref{eq:dsm_stability_0} that \(\E[\ell_\dsm(\hat{s})]^{1/2} \leq \E[\hat{\ell}_\dsm(\hat{s})]^{1/2} + \varepsilon\). Furthermore, from Lemma \ref{lem:dsm_decomposition}, we have
\begin{align*}
    \E[\hat{\ell}_\dsm(\hat{s})] &= \E[\hat{\ell}_\sm(\hat{s})] + \E[\hat{C}_\sm]\\
    &\leq \E[\hat{\ell}_\sm(\hat{s})] + C_\sm,
\end{align*}
where we recall the definitions of \(\hat{C}_\sm\) and \(C_\sm\) from \eqref{eq:emp_dsm_min} and \eqref{eq:dsm_min} and recall that \(\E[\hat{C}_\sm] \leq C_\sm\) from \eqref{eq:dsm_stability_5}. Thus, from Young's inequality, we obtain the bound
\begin{align*}
    &\E \bigg [ \int \int \|\hat{s}^i(y, t) - \hat{s}(y, t)\|^2 \, \hat{p}_t(dy) \, \tw(dt) \bigg ] \nonumber\\
    & \qquad \qquad \leq 4 \E[\hat{\ell}_\sm(\hat{s})] + \frac{4}{N} \stab (2\E[\hat{\ell}_\sm(\hat{s})]^{1/2} + 2 C_\sm^{1/2} + \stab)\nonumber\\
    & \qquad \qquad \leq 8 \E[\hat{\ell}_\sm(\hat{s})] + \frac{4}{N} \stab ( \stab/N + 2 C_\sm^{1/2} + \stab)\nonumber\\
    & \qquad \qquad \leq 8 \E[\hat{\ell}_\sm(\hat{s})] + \frac{8}{N} \stab ( C_\sm^{1/2} + \stab).
\end{align*}
\end{proof}

\subsection{Proof of Proposition \ref{prop:stability_erm}}

To obtain the stability bound in Proposition \ref{prop:stability_erm}, we convert the result in Lemma \ref{lem:erm_avg_stability}, which is a bound in \(L^2(\hat{p}_t)\), to a bound in \(L^2(p_{t|0}(\cdot|\tilde{x})\) which is required of score stability. For this, we rely on two further lemmas, the first of which is a fundamental property of the Ornstein-Uhlenbeck process, captured by the Harnack inequality of \cite{Wang1997-il} (see Theorem 5.6.1 \cite{Bakry2014-ut}).

\begin{lemma}[Wang's Harnack inequality]\label{lem:wang}
For each positive measurable function \(\phi: \R^d \to \R\), every \(t > 0, p > 1\) and every \(x, y \in \R^d\), it holds that
\begin{equation*}
    \E[\phi(X_t)|X_0 = x] \leq \E[\phi(X_t)^p|X_0 = y]^{1/p} \exp \bigg ( \frac{\mu_t^2 \|x-y\|^2}{2 (p-1) \sigma_t^2} \bigg ).
\end{equation*}
\end{lemma}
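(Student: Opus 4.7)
The plan is to prove Wang's Harnack inequality directly by exploiting the Gaussian nature of the Ornstein--Uhlenbeck transition kernel $p_{t|0}(z\mid x) = (2\pi\sigma_t^2)^{-d/2}\exp(-\|z-\mu_t x\|^2/(2\sigma_t^2))$, combined with a Hölder argument. Although Wang's original proof (and its extensions by Wang and collaborators) is based on coupling by change of measure / a Bismut-type derivative formula that applies to fairly general diffusion semigroups, the explicit Gaussian form of the OU kernel makes a self-contained computation considerably more direct here.

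First I would set $q = p/(p-1)$, the Hölder conjugate of $p$, and write
\begin{equation*}
    \E[\phi(X_t)\mid X_0 = x] = \int \phi(z)\, p_{t|0}(z\mid x)\, dz = \int \bigl[\phi(z)\, p_{t|0}(z\mid y)^{1/p}\bigr] \cdot \frac{p_{t|0}(z\mid x)}{p_{t|0}(z\mid y)^{1/p}}\, dz.
\end{equation*}
Applying Hölder's inequality with exponents $p$ and $q$ gives
\begin{equation*}
    \E[\phi(X_t)\mid X_0 = x] \leq \E[\phi(X_t)^p\mid X_0 = y]^{1/p} \cdot \left(\int \frac{p_{t|0}(z\mid x)^q}{p_{t|0}(z\mid y)^{q-1}}\, dz\right)^{1/q},
\end{equation*}
using $q/p = q-1$. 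The entire problem then reduces to evaluating the ratio integral on the right.

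Next I would compute this ratio integral by completing the square. The integrand is proportional to a Gaussian density, and the exponent takes the form $-\frac{1}{2\sigma_t^2}\bigl[q\|z-\mu_t x\|^2 - (q-1)\|z-\mu_t y\|^2\bigr]$. Expanding and collecting terms in $z$, the quadratic piece is $\|z - \mu_t(qx - (q-1)y)\|^2$ and the constant piece works out (after a short but careful algebraic manipulation) to $-q(q-1)\mu_t^2\|x-y\|^2$. The prefactor multiplies $q$ copies of the Gaussian normaliser and divides by $q-1$ copies, leaving a single $(2\pi\sigma_t^2)^{-d/2}$, so the resulting Gaussian integrates to one and one is left with
\begin{equation*}
    \int \frac{p_{t|0}(z\mid x)^q}{p_{t|0}(z\mid y)^{q-1}}\, dz = \exp\!\left(\frac{q(q-1)\mu_t^2\|x-y\|^2}{2\sigma_t^2}\right).
\end{equation*}
Raising to the power $1/q$ and using $q-1 = 1/(p-1)$ yields the claimed prefactor $\exp(\mu_t^2\|x-y\|^2 / (2(p-1)\sigma_t^2))$, completing the proof.

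The main obstacle is purely algebraic: the completion of the square in the quadratic form $q\|z-\mu_t x\|^2 - (q-1)\|z-\mu_t y\|^2$, where one needs to verify that the cross terms produce precisely $-q(q-1)\mu_t^2\|x-y\|^2$ as the residual constant. Beyond that the argument is routine, and the only mild subtlety is that the result is stated for all positive measurable $\phi$, so no integrability of $\phi^p$ is assumed a priori: the bound is trivial when $\E[\phi(X_t)^p\mid X_0=y] = \infty$, and otherwise Hölder justifies the exchange of integration.
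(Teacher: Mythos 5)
Your proof is correct. Note, however, that the paper does not prove this lemma at all: it is imported from the literature, citing \citet{Wang1997-il} and Theorem 5.6.1 of \citet{Bakry2014-ut}, where the inequality is established for general diffusion semigroups satisfying a curvature lower bound (via semigroup interpolation or coupling by change of measure). Your argument is genuinely different and more elementary: since for the forward SDE \eqref{eq:forward_sde} the transition kernel is exactly Gaussian, \(X_t \mid X_0 = x \sim \mathcal{N}(\mu_t x, \sigma_t^2 I)\), the H\"older step with conjugate exponents \(p\) and \(q = p/(p-1)\) reduces everything to the ratio integral \(\int p_{t|0}(z|x)^q \, p_{t|0}(z|y)^{-(q-1)} \, dz\), and your completion of the square is right: writing \(a = \mu_t x\), \(b = \mu_t y\), one has \(q\|z-a\|^2 - (q-1)\|z-b\|^2 = \|z - (qa-(q-1)b)\|^2 - q(q-1)\|a-b\|^2\), the normalising constants combine to a single \((2\pi\sigma_t^2)^{-d/2}\), and raising \(\exp(q(q-1)\mu_t^2\|x-y\|^2/(2\sigma_t^2))\) to the power \(1/q\) gives exactly the stated factor since \(q-1 = 1/(p-1)\). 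Your handling of the measurability/integrability issue (the bound is trivial when the right-hand side is infinite, and H\"older for nonnegative measurable functions needs no a priori integrability) is also sound, and \(t>0\) guarantees \(\sigma_t^2 > 0\) in both the \(\alpha > 0\) and \(\alpha = 0\) cases of \eqref{eq:cond_score_form}. What the cited route buys is generality—it yields the Harnack inequality for any diffusion with a curvature condition, with the OU constant recovered as a special case—whereas your computation is confined to the Gaussian kernel but is self-contained, shorter, and delivers the sharp constant needed in the proof of Proposition \ref{prop:stability_erm}.
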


This result describes the stability of the diffusion semigroup under changes in initial position and shows that as \(t\) grows, the distribution of \(X_t\) depends less on \(X_0\). The second lemma, for which we provide a proof, controls the empirical measure, 
\begin{equation*}
    \hat{\nu}(dx) = \frac{1}{N} \sum_{i=1}^N \delta_{x_i}(dx),
\end{equation*}
on balls around training examples.

\begin{lemma}\label{lem:emp_prob_bound}
Suppose that Assumption \ref{ass:manifold} is satisfied, then for any \(i \in [N], r \in (0, \uptau_{\text{reach}}]\) and any decreasing function \(\phi: (0, \infty) \to \R_+\), we have the bound
\begin{align*}
    \E \Big [ \phi \Big ( \hat{\nu}(B_r(x_i)) \Big ) \Big ] \leq \phi(N^{-1}) \exp( - c_\nu N^2 r^{d^*} ) + \phi ( c_\nu r^{d^*} / 2 ),
\end{align*}
whenever \(N \geq 4 c_\nu^{-1} r^{-d^*}\), where \(c_\nu = \inf p_\nu\).
\end{lemma}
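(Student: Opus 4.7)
The plan is to reduce the claim to a standard concentration inequality for a Binomial random variable. First, I would fix $i \in [N]$ and condition on $x_i$; since $x_i \in B_r(x_i)$, one can write $\hat{\nu}(B_r(x_i)) = (1 + Z)/N$ where $Z := \sum_{j \neq i} \mathbbm{1}\{x_j \in B_r(x_i)\}$. Because the remaining samples $\{x_j\}_{j \neq i}$ are i.i.d.\ and independent of $x_i$, conditional on $x_i$ the variable $Z$ is Binomial with parameters $(N-1, p)$ where $p := \nu(B_r(x_i))$. Since $x_i$ lies on $\supp(\nud)$ almost surely and $r \leq \uptau_{\text{reach}}$, Lemma \ref{lem:manifold_ball_bound} applied to $\nud$ yields the pointwise lower bound $p \geq c_\nu r^{d^*}$.

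Next, I would split the target expectation according to whether $\hat{\nu}(B_r(x_i))$ is small or not. Using that $\phi$ is decreasing and that $\hat{\nu}(B_r(x_i)) \geq N^{-1}$ always (so $\phi(\hat{\nu}(B_r(x_i))) \leq \phi(N^{-1})$), the threshold $\tau_\star := c_\nu r^{d^*}/2$ produces
$$\E[\phi(\hat{\nu}(B_r(x_i)))] \leq \phi(N^{-1}) \, \prob\big(\hat{\nu}(B_r(x_i)) \leq \tau_\star\big) + \phi(\tau_\star).$$
The second term already matches the claimed bound, so everything reduces to controlling the first probability.

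The event $\{\hat{\nu}(B_r(x_i)) \leq \tau_\star\}$ coincides with $\{Z \leq N\tau_\star - 1\}$. Conditional on $x_i$, the Binomial mean satisfies $\E[Z \mid x_i] = (N-1)p \geq (N-1) c_\nu r^{d^*}$, and the hypothesis $N \geq 4 c_\nu^{-1} r^{-d^*}$ guarantees $N c_\nu r^{d^*} \geq 4$, so the target threshold $N\tau_\star - 1$ sits at most at half of the conditional mean. A multiplicative Chernoff bound for the Binomial lower tail, applied pointwise in $x_i$ and then averaged over $x_i$, will supply exponential decay of the claimed form; the pointwise bound is uniform in $x_i$ because the lower bound on $p$ does not depend on the conditioning value.

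The main obstacle I anticipate is calibrating constants so that the stated exponent $c_\nu N^2 r^{d^*}$ emerges cleanly: a naive Chernoff argument produces decay of order $N c_\nu r^{d^*}$, and one must either track the moment-generating-function bound carefully or exploit the slack $N c_\nu r^{d^*} \geq 4$ from the hypothesis to upgrade the linear exponent into the quadratic one written in the statement. The rest of the argument is mechanical, since the dependence on $\phi$ has been fully decoupled in the splitting step and the dependence on the data distribution enters only through the uniform lower bound $p \geq c_\nu r^{d^*}$ provided by Lemma \ref{lem:manifold_ball_bound}.
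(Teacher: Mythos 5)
Your decomposition and every ingredient you propose match the paper's own proof: condition on $x_i$, recognise the remaining $N-1$ indicators as i.i.d.\ Bernoulli with success probability $\mu := \nud(B_r(x_i))$, lower-bound $\mu$ by $c_\nu r^{d^*}$ via Lemma~\ref{lem:manifold_ball_bound}, split the expectation using the trivial bound $\hat\nu(B_r(x_i)) \geq N^{-1}$ together with monotonicity of $\phi$, and reduce to a binomial lower-tail estimate. The only cosmetic difference is that you split at the data-independent threshold $c_\nu r^{d^*}/2$, while the paper splits at $\mu/2$ and only afterwards uses $\mu \geq c_\nu r^{d^*}$; the two are interchangeable.

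The obstacle you flag at the end, however, is not a calibration issue that more careful MGF bookkeeping or the slack $N c_\nu r^{d^*} \geq 4$ can overcome, and you should stop trying to reach the stated exponent: no concentration argument can make the lower-tail probability of a binomial decay faster than $\prob(S_{N-1} = 0 \mid x_i) = (1-\mu)^{N-1} \geq e^{-2(N-1)\mu}$ (for $\mu \leq 1/2$), which is linear in $N$ in the exponent. The paper's proof invokes a putative Chernoff bound $\prob(S_{N-1} \leq N\mu/2 - 1 \mid x_i) \leq \exp(-\mu^{-1}(N\mu/2-1)^2)$ that is not a valid inequality; at the borderline $N\mu = 4$ it would give a probability of order $e^{-N/4}$, whereas the probability that $S_{N-1}=0$ alone is at least $e^{-8}$, independent of $N$. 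So the $N^2$ in the lemma statement is a slip, and the correct version of your argument yields an exponent of order $c_\nu N r^{d^*}$ — for instance, a multiplicative Chernoff bound at $\delta = 1/2$ gives $\exp(-(N-1)c_\nu r^{d^*}/8)$ under the hypothesis $N c_\nu r^{d^*} \geq 4$. This linear exponent still suffices for the only place the lemma is used, namely in the proof of Proposition~\ref{prop:stability_erm}, where one needs $N^{q/p}\exp(-c_\nu N^\beta r^{d^*}) \lesssim (c_\nu r^{d^*})^{-q/p}$ for $N$ sufficiently large; $\beta=1$ works just as well as $\beta=2$ after adjusting the largeness requirement on $N$. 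In short: your route is the same as the paper's and is sound, but the target exponent you are trying to hit is an error in the statement, not a feature to be reproduced.
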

\begin{proof}
We rewrite the object \(\hat{\nu}(B_r(x_i))\) as an empirical average of Bernoulli random variables
\begin{equation*}
    \hat{\nu}(B_r(x_i)) = \frac{1}{N} \sum_{j=1}^N \mathbbm{1}_{x_j \in B_r(x_i)} = \frac{1}{N} + \frac{1}{N} \sum_{j \neq i} \mathbbm{1}_{x_j \in B_r(x_i)}.
\end{equation*}
When conditioned on \(x_i\), the random variables \((\mathbbm{1}_{x_j \in B_r(x_i)})_{j \neq i}\) are independently and identically distributed Bernoulli random variable with probability \(\mu = \nud(B_r(x_i))\). To utilise concentration of the empirical process, we first rewrite the probability
\begin{align*}
    \prob \Big ( \hat{\nu}(B_r(x_i)) \leq \mu / 2 \Big | x_i \Big ) &\leq \prob \Big ( S_{N-1} \leq \frac{N\mu}{2} - 1 \Big | x_i \Big ), \qquad S_{N-1} = \sum_{j \neq i} \mathbbm{1}_{x_j \in B_r(x_i)}.
\end{align*}
Therefore, by Chernoff's inequality we obtain
\begin{align*}
    \prob \Big ( \hat{\nu}(B_r(x_i)) \leq \mu / 2 \Big | x_i \Big ) &\leq \exp \Big ( - \mu^{-1} (N \mu / 2 - 1)^2 \Big )\\
    &\leq \exp( - N^2 \mu / 16 ),
\end{align*}
where the last bound holds when \(N \geq 4 \mu^{-1}\). Therefore, using the above bound as well as the trivial bound \(\hat{\nu}(B_r(x_i)) \geq N^{-1}\) we apply the law of total expectation to obtain,
\begin{align*}
    \E \Big [ \phi \Big ( \hat{\nu}(B_r(x_i)) \Big ) \Big | x_i \Big ] &= \E \Big [ \phi \Big ( \hat{\nu}(B_r(x_i)) \Big ) \Big | \hat{\nu}(B_r(x_i)) > \mu / 2 \Big ] + \prob \Big ( \hat{\nu}(B_r(x_i)) \leq \mu / 2 \Big | x_i \Big ) \phi(N^{-1})\\
    &\leq \phi ( \mu / 2 ) + \exp( - N^2 \mu / 16 ) \phi(N^{-1}).
\end{align*}
To control \(\mu\), we use Lemma \ref{lem:manifold_ball_bound} which asserts that \(\mu \geq c_\nu r^{d^*}\).
\end{proof}

% \begin{theorem}
% Suppose the score matching algorithm \(A_{\text{erm}}\) satisfies,
% \begin{equation*}
%     A_{\text{erm}}(S) \in \operatorname{argmin}_{s \in \mathcal{S}}\{\hat{\ell}_\dsm(s; S)\}, \qquad \forall S \in \supp(\nud)^{\otimes N},
% \end{equation*}
% for some \(\mathcal{S} \subset L^0(\R^d \times [0, T]; \R^d)\). Then \(A_{\text{erm}}\) is forward stable with constant
% \begin{equation*}
%     \stab^2 \lesssim \Big ( \tfrac{(D \wedge C)(q+3)}{\sigma^2_\epsilon} \Big )^{d^* + 2/q} \max \bigg \{ \Big ( 9 \E[\hat{\ell}_\sm(\hat{s})] \Big )^{1 - 1/q},  \bigg (\frac{C_\nu^{1/2}}{N} \bigg )^{\frac{q-1}{q+1}} \bigg ( \tfrac{(D \wedge C)(q+3)}{\sigma^2_\epsilon} \bigg )^{\frac{q-1}{q+1}(d^* + 2/q)} \bigg \},
% \end{equation*}
% for any \(q > 1\).
% \end{theorem}

This now brings us to the proof of the proposition, which we first restate.
\pasteenv{proposition}{prop:stability_erm}
\begin{proof}
We use the shorthand \(\hat{\ell}_\sm(s) = \hat{\ell}_\sm(s; S, \tau), \hat{\ell}_\dsm(s) = \hat{\ell}_\dsm(s; S, \tau), \ell_\sm(s) = \ell_\sm(s; \tau)\) for the sake of brevity. We start from Lemma \ref{lem:erm_avg_stability} which provides a bound on the difference between \(\hat{s}^i\) and \(\hat{s}\) in \(L^2(\hat{p}_t)\) and use it to develop a bound in \(L^2(\hat{p}_{t|0}(\cdot|\tilde{x}))\), as required by score stability. In particular, we define the quantity
\begin{align*}
    \varepsilon^2 = \E \bigg [ \int \int \| \hat{s}^i(y, t) - \hat{s}(y, t) \|^2 \, p_{t|0}(dy|x_i) \, \tw(dt) \bigg ],
\end{align*}
so that, by the symmetric of the algorithm, \(A_\erm\) is score stable with constant \(\varepsilon\) (we have that \(\varepsilon < \infty\) from Assumption \ref{ass:hypoth_diam}. Therefore, from Lemma \ref{lem:erm_avg_stability}, we have
\begin{equation*}
    \E \bigg [ \int \int \|\hat{s}^i(y, t) - \hat{s}(y, t)\|^2 \, \hat{p}_t(dy) \, \tw(dt) \bigg ] \leq 8 \E[\hat{\ell}_\sm(\hat{s})] + \frac{8}{N} \varepsilon ( C_\sm^{1/2} + \varepsilon).
\end{equation*}
We proceed using Lemma \ref{lem:wang} with \(\phi(y)~=~\| \hat{s}^i(y, t) - \hat{s}(y, t) \|^2\) to obtain that for any \(j \in [N]\), \(p > 1\),
\begin{align*}
    &\int \| \hat{s}^i(y, t) - \hat{s}(y, t) \|^2 \, p_{t|0}(dy|x_i)\\
    & \qquad \leq \bigg ( \int \| \hat{s}^i(y, t) - \hat{s}(y, t) \|^{2p} \, p_{t|0}(dy|x_j) \bigg )^{1/p} \exp \bigg ( \frac{\mu_t^2 \|x_i - x_j\|^2}{2(p-1)\sigma_t^2} \bigg )
\end{align*}
Given any subset of the dataset \(B \subset S\) with \(x_i \in B\) we can average over the above bound to obtain,
\begin{align*}
    \int \| \hat{s}^i(y, t)& - \hat{s}(y, t) \|^2 \, p_{t|0}(dy|x_i)\\
    &\leq \frac{1}{|B|} \sum_{x \in B} \bigg ( \int \| \hat{s}^i(y, t) - \hat{s}(y, t) \|^{2p} \, p_{t|0}(dy|x) \bigg )^{1/p} \exp \bigg ( \frac{\mu_t^2 \diam(B)^2}{2(p-1)\sigma_t^2} \bigg )\\
    &\leq \bigg ( \frac{1}{|B|} \sum_{x \in B} \int \| \hat{s}^i(y, t) - \hat{s}(y, t) \|^{2p} \, p_{t|0}(dy|x) \bigg )^{1/p} \exp \bigg ( \frac{\mu_t^2 \diam(B)^2}{2(p-1)\sigma_t^2} \bigg )\\
    &\leq \hat{\nu}(B)^{-1/p} \bigg ( \int \| \hat{s}^i(y, t) - \hat{s}(y, t) \|^{2p} \, \hat{p}_t(dy) \bigg )^{1/p} \exp \bigg ( \frac{\mu_t^2 \diam(B)^2}{2(p-1)\sigma_t^2} \bigg )\\
    &\leq (D_\hilb/\sigma_t^2)^{2(1-1/p)} \hat{\nu}(B)^{-1/p} \bigg ( \int \| \hat{s}^i(y, t) - \hat{s}(y, t) \|^2 \, \hat{p}_t(dy) \bigg )^{1/p} \exp \bigg ( \frac{\mu_t^2 \diam(B)^2}{2(p-1)\sigma_t^2} \bigg ),
\end{align*}
where in the final inequality we use the \(L^\infty\) bound in Assumption \ref{ass:hypoth_diam}. Integrating with respect to \(\tau\) and taking the expectation, we obtain,
\begin{align*}
    \varepsilon^2 &\leq (D_\hilb/\sigma_\epsilon^2)^{2/q} \E \Big [ \hat{\nu}(B)^{-1/p} \Big ] \E \bigg [ \int \int \| \hat{s}^i(y, t) - \hat{s}(y, t) \|^2 \, \hat{p}_t(dy) \tau(dt) \bigg ] \exp \bigg ( \frac{\mu_\epsilon^2 \diam(B)^2}{2(p-1)\sigma_\epsilon^2} \bigg )\\
    &\leq (D_\hilb/\sigma_\epsilon^2)^{2/q} \E \Big [ \hat{\nu}(B)^{-q/p} \Big ]^{1/q} \bigg ( 8 \E[\hat{\ell}_\sm(\hat{s})] + \frac{8}{N} \varepsilon ( C_\sm^{1/2} + \varepsilon) \bigg )^{1/p} \exp \bigg ( \frac{\mu_\epsilon^2 \diam(B)^2}{2(p-1)\sigma_\epsilon^2} \bigg ),
\end{align*}
where we define \(q := (1 - 1/p)^{-1}\). Using Young's inequality, it follows that for any \(\lambda > 0\),
\begin{align*}
    \varepsilon^2 &\leq \frac{D_\hilb^2}{\sigma_\epsilon^4 \lambda^q q} \E \Big [ \hat{\nu}(B)^{-q/p} \Big ] \exp \bigg ( \frac{q \mu_\epsilon^2 \diam(B)^2}{2(p-1)\sigma_\epsilon^2} \bigg ) + \frac{\lambda^p}{p} \bigg ( 8 \E[\hat{\ell}_\sm(\hat{s})] + \frac{8}{N} \varepsilon ( C_\sm^{1/2} + \varepsilon) \bigg ).
\end{align*}
Setting \(\kappa := 8 \lambda^p/pN\), we can rearrange this to obtain the quadratic inequality,
\begin{equation*}
    (1 - \kappa) \varepsilon^2 - C_\sm^{1/2} \kappa \varepsilon \leq \bigg ( \frac{8}{N p \kappa} \bigg )^{q/p} \frac{D_\hilb^2}{\sigma_\epsilon^4 q} \E \Big [ \hat{\nu}(B)^{-q/p} \Big ] \exp \bigg ( \frac{q \mu_\epsilon^2 \diam(B)^2}{2(p-1)\sigma_\epsilon^2} \bigg ) + N \kappa \E[\hat{\ell}_\sm(\hat{s})].
\end{equation*}
Requiring that \(\kappa \leq 1/2\), we solve the quadratic to obtain the inequality,
\begin{equation}
    \frac{\varepsilon^2}{4} \leq C_\sm \kappa^2 + \bigg ( \frac{8}{N p \kappa} \bigg )^{q/p} \frac{D_\hilb^2}{\sigma_\epsilon^4 q} \E \Big [ \hat{\nu}(B)^{-q/p} \Big ] \exp \bigg ( \frac{q \mu_\epsilon^2 \diam(B)^2}{2(p-1)\sigma_\epsilon^2} \bigg ) + N \kappa \E[\hat{\ell}_\sm(\hat{s})].\label{eq:erm_stability_5}
\end{equation}
Next, we optimise \(B\) by setting \(B = B_{\sigma_\epsilon}(x_i) \cap S\). We apply Lemma \ref{lem:emp_prob_bound} with \(\phi(r) = r^{-q/p}\) to obtain that whenever \(\sigma_\epsilon \leq \uptau_{\text{reach}}\) we obtain,
\begin{align*}
    \E \Big [ \hat{\nu}(B)^{-q/p} \Big ] &\leq N^{q/p} \exp( - c_\nu N^2 r^{d^*} ) + \bigg ( \frac{2}{c_\nu r^{d^*}} \bigg )^{q/p}\\
    &\leq 2 \bigg ( \frac{2}{c_\nu \sigma_\epsilon^{d^*}} \bigg )^{q/p},
\end{align*}
where the second inequality holds whenever \(N \geq q/2p\). Returning to \eqref{eq:erm_stability_5}, it follows from the above that
\begin{equation}
    \frac{\varepsilon^2}{4} \leq C_\sm \kappa^2 + \bigg ( \frac{16}{N p c_\nu \sigma_\epsilon^{d^*} \kappa} \bigg )^{q/p} \frac{2 D_\hilb^2}{\sigma_\epsilon^4 q} \exp \bigg ( \frac{2q}{p-1} \bigg ) + N \kappa \E[\hat{\ell}_\sm(\hat{s})].\label{eq:erm_stability_6}
\end{equation}
We now choose \(\kappa\) by optimising the second two terms of this bound, by which we arrive at the choice
\begin{equation*}
    \kappa^{q/p + 1} = \frac{2 D_\hilb^2}{\sigma_\epsilon^4 p N \gamma} \exp \bigg ( \frac{2q}{p-1} \bigg ) \bigg ( \frac{16}{N p c_\nu \sigma_\epsilon^{d^*}} \bigg )^{q/p},
\end{equation*}
for some \(\gamma > 0\).
Substituting this in to \eqref{eq:erm_stability_6}, we arrive at the bound
\begin{align*}
    \frac{\varepsilon^2}{4} &\leq C_\sm (Np)^{-2} \bigg ( \frac{2 D_\hilb^2}{\sigma_\epsilon^4} \bigg )^{2/q} \exp \bigg ( \frac{4}{p-1} \bigg ) \bigg ( \frac{16}{c_\nu \sigma_\epsilon^{d^*}} \bigg )^{2/p} \gamma^{-1/q}\\
    & \qquad + \bigg ( \frac{2 D_\hilb^2}{\sigma_\epsilon^4} \bigg )^{1/q} \exp \bigg ( \frac{2}{p-1} \bigg ) \bigg ( \frac{16}{c_\nu \sigma_\epsilon^{d^*}} \bigg )^{1/p} \bigg ( \frac{\gamma^{1/p}}{q} + \frac{1}{p \gamma^{1/q}} \E[\hat{\ell}_\sm(\hat{s})] \bigg ).
\end{align*}
Optimising \(\gamma\) leads to the bound,
\begin{align*}
    \frac{\varepsilon^2}{4} &\leq \bigg ( \frac{2 D_\hilb^2}{\sigma_\epsilon^4} \bigg )^{1/q} \exp \bigg ( \frac{2}{p-1} \bigg ) \bigg ( \frac{16}{c_\nu \sigma_\epsilon^{d^*}} \bigg )^{1/p} \bigg ( C_\sm N^{-2} \bigg ( \frac{2 D_\hilb^2}{\sigma_\epsilon^4} \bigg )^{1/q} \exp \bigg ( \frac{2}{p-1} \bigg ) \bigg ( \frac{16}{c_\nu \sigma_\epsilon^{d^*}} \bigg )^{1/p} + \E[\hat{\ell}_\sm(\hat{s})] \bigg )^{1/p}\\
    &\leq \bigg ( \frac{2 D_\hilb^2}{\sigma_\epsilon^4} \vee \frac{16}{c_\nu \sigma_\epsilon^{d^*}} \bigg ) \exp \bigg ( \frac{4}{p-1} \bigg ) \bigg ( \bigg ( \frac{2 D_\hilb^2}{\sigma_\epsilon^4} \vee \frac{16}{c_\nu \sigma_\epsilon^{d^*}} \bigg ) C_\sm N^{-2} + \E[\hat{\ell}_\sm(\hat{s})] \bigg )^{1/p}
\end{align*}
Optimising \(p\),  we obtain
\begin{equation*}
    \frac{\varepsilon^2}{4} \lesssim \bigg ( \frac{2 D_\hilb^2}{\sigma_\epsilon^4} \vee \frac{16}{c_\nu \sigma_\epsilon^{d^*}} \bigg ) \exp \bigg ( \frac{5}{2\sqrt{2}} \log(\alpha^{-1})^{1/2} - 2 \bigg ) \alpha, \quad \alpha = \bigg ( \frac{2 D_\hilb^2}{\sigma_\epsilon^4} \vee \frac{16}{c_\nu \sigma_\epsilon^{d^*}} \bigg ) C_\sm N^{-2} + \E[\hat{\ell}_\sm(\hat{s})],
\end{equation*}
from which the bound in the statement follows. To obtain that \(\kappa \leq 1/2\) and \(N \geq q/2q\), it is sufficient to require that \(N\) is sufficiently large.
\end{proof}
\section{Proofs for sampling and score stability}\label{app:samp}
In this section, we provide details for the discretisation scheme considered in Section \ref{sec:sampling} and give the proof for Proposition \ref{prop:coarse} and Corollary \ref{cor:coarse_opt}. In the work of \cite{Potaptchik2024-ue}, they consider the following discretisation scheme, based on the scheme of \citep{Benton2023-ov}:
\begin{gather*}
    \hat{y}_{k+1} = \mu_{t_{k+1} - t_k}^{-1} \hat{y}_k + \frac{\sigma^2_{t_{k+1} - t_k}}{\mu_{t_{k+1} - t_k}} s(\hat{y}_k, T - t_k) + \sigma_{t_{k+1} - t_k} \frac{\sigma_{T - t_{k+1}}}{\sigma_{T - t_k}} \zeta_k, \qquad k \in \{0, ..., K-1\},
\end{gather*}
where \(\zeta_k \sim N(0, I_d)\) and we recall that the timesteps \((t_k)_{k=0}^K\) are given by,
\begin{equation*}
    t_k =\begin{cases}
        \kappa k, & \text{if } k < \frac{T-1}{\kappa},\\
        T - (1 + \kappa)^{\frac{T-1}{\kappa}-k}, & \text{if } \frac{T-1}{\kappa} \leq k \leq K,
    \end{cases} 
\end{equation*}
where \(L = \frac{T-1}{\kappa} > 0\), \(K = \lfloor L + \log(\epsilon^{-1})/\log(1+\kappa) \rfloor\) and \(\kappa > 0, T \geq 1\) is chosen freely. We recall the following result from \cite{Potaptchik2024-ue}.
\begin{lemma}\label{lem:disc_err}
Suppose that \(\alpha = 1\) and Assumption \ref{ass:manifold} holds with \(\diam \, \supp(\nud) \leq 1\). Then, it holds that,
\begin{gather*}
    D(p_\epsilon\|A_\emm(s)) \lesssim \ell_\sm(s; \hat{\tau}) + D(p_T \| p_\infty) + \Delta_{\kappa, K},\\
    \Delta_{\kappa, K} = \kappa + d^* \kappa^2 (K-L) (\log(\epsilon^{-1}) + \sup |\log(p_\nu)|
    ),
\end{gather*}
where we define the measure,
\begin{equation*}
    \hat{\tau}(dt) = \frac{1}{K} \sum_{k=0}^{K-1} \delta_{T-t_k}(dt).
\end{equation*}
\end{lemma}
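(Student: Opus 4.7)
\begin{nolabelproof}
The plan is to prove this via the standard Girsanov-based path-KL decomposition, taking care that all regularity estimates for the true score depend on the intrinsic dimension $d^*$ rather than the ambient dimension $d$.

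First, by the data-processing inequality it suffices to bound the path-space KL between the true reverse SDE (driven by the exact score $\nabla \log p_t$, initialised from $p_T$) and the continuous-time interpolation of the sampler defined in \eqref{eq:sde_sampler} (driven by the piecewise-constant drift $s(\hat y_k,\,T-t_k)$ on each interval $[t_k,t_{k+1}]$, initialised from $p_\infty$). Applying Girsanov's theorem to this change of measure yields the decomposition
\begin{equation*}
    D(p_\epsilon\|A_\emm(s)) \;\le\; D(p_T\|p_\infty) \;+\; \sum_{k=0}^{K-1}\int_{t_k}^{t_{k+1}} \E\big[\|\nabla\log p_{T-t}(Y_{T-t}) - s(Y_{T-t_k},T-t_k)\|^2\big]\,dt,
\end{equation*}
where the initialisation term accounts for starting the sampler at $p_\infty$ instead of $p_T$.

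Second, on each interval I would split the integrand via the triangle inequality into (i) a score-matching piece $\|s(Y_{T-t_k},T-t_k) - \nabla\log p_{T-t_k}(Y_{T-t_k})\|^2$ and (ii) a discretisation piece $\|\nabla\log p_{T-t}(Y_{T-t}) - \nabla\log p_{T-t_k}(Y_{T-t_k})\|^2$ that measures how much the exact score drift drifts within the step. Summing the (i) terms against the weights $(t_{k+1}-t_k)/K$ reproduces, up to the normalisation built into $\hat\tau$, exactly $K\,\ell_\sm(s;\hat\tau)$ multiplied by the step size; the step-size schedule is chosen so that these weights balance out to give $\ell_\sm(s;\hat\tau)$.

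Third, the discretisation piece (ii) is the heart of the argument. I would control it using two regularity facts for the forward flow under Assumption~\ref{ass:manifold}: a spatial Lipschitz-type bound $\E\|\nabla\log p_t(X_t) - \nabla\log p_t(X_{t'})\|^2 \lesssim \sigma_t^{-4}\E\|X_t - X_{t'}\|^2$ coming from Tweedie's formula together with the conditional-covariance representation of the score Jacobian, and a time-regularity bound of order $\sigma_t^{-4}$ that, under the manifold hypothesis and $\inf p_\nu \ge c_\nu$, enjoys the dimension-adaptive improvement $\E\tr\cov(X_0\mid X_t)\lesssim d^*(\log(\epsilon^{-1})+\sup|\log p_\nu|)$ for small $t$. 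Feeding these into (ii) on the linear portion $k<L$ (where $\sigma_{t}$ is bounded away from zero) contributes only $\mathcal{O}(\kappa)$, while on the exponential portion ($L\le k\le K-1$) each step contributes $\mathcal{O}(\kappa^2 d^*(\log(\epsilon^{-1})+\sup|\log p_\nu|))$ by construction of the geometrically shrinking grid; summing the $K-L$ exponential steps yields the $\Delta_{\kappa,K}$ bound in the statement.

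The main obstacle is the dimension-adaptive control of the score's time and space regularity in step three: a naive Gaussian-smoothing bound gives factors of $d$, and one must instead exploit the manifold hypothesis (via the covariance bound for $X_0\mid X_t$ supported on a $d^*$-dimensional submanifold) to replace $d$ by $d^*$. Everything else, including the initialisation term $D(p_T\|p_\infty)$ and the aggregation over the two phases of the schedule, is routine once this regularity estimate is in hand.
\end{nolabelproof}
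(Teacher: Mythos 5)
The paper itself does not prove this lemma: it is imported verbatim from \citet{Potaptchik2024-ue} (see the sentence preceding the statement, ``We recall the following result from \citep{Potaptchik2024-ue}''), so there is no in-paper argument to compare your sketch against. That said, your outline does follow the standard Girsanov chain-rule decomposition by which results of this type are established in that reference and in \citet{Benton2023-ov}: data-processing to a path-space KL, the \(D(p_T\|p_\infty)\) initialisation term from replacing \(p_T\) by the prior, a score-matching term, and a discretisation term controlled via bounds on \(\E\tr\cov(X_0\mid X_t)\) which, under the manifold hypothesis with \(\inf p_\nu>0\), scale with \(d^*\) rather than \(d\). You have correctly identified this dimension-adaptive covariance control as the crux.

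Two caveats on precision. First, the discrete sampler here is the exponential integrator of \citet{Benton2023-ov} (written out explicitly at the top of Appendix~\ref{app:samp}), not a plain Euler--Maruyama scheme; the continuous-time interpolation to which Girsanov is applied must be taken relative to that integrator, though your description is not materially wrong. Second, and more substantively, the translation from the Girsanov-weighted sum to \(\ell_\sm(s;\hat\tau)\) is glossed over: Girsanov yields \(\sum_{k}(t_{k+1}-t_k)\,\E\|s(\cdot,T-t_k)-\nabla\log p_{T-t_k}\|^2\), whereas \(\hat\tau\) assigns uniform weight \(1/K\). Bounding each step width by \(\kappa\) introduces a factor \(\kappa K \approx (T-1) + \kappa\log(\epsilon^{-1})/\log(1+\kappa)\), which is not \(O(1)\) and grows like \(\log(\epsilon^{-1})\) as \(\kappa\to 0\). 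The claim that ``the step-size schedule is chosen so that these weights balance out'' is therefore not literally true; this factor must either be absorbed into the hidden constants of \(\lesssim\) as a logarithmic overhead or handled by the finer argument in \citet{Potaptchik2024-ue}. Your sketch should make that bookkeeping explicit rather than asserting it. Beyond that, the structure is a faithful outline of the cited proof.
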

Thus, taking \(\kappa\) small recovers the upper bound for the backwards process in \eqref{eq:early_stop}.

\subsection{Coarse discretisation and regularisation}

Fix \(\epsilon > 0\) and suppose that \(\kappa\) is such that \(\log(\epsilon^{-1})/\log(1 + \kappa)\) is an integer. Set \(K = L + \log(\epsilon^{-1})/\log(1+\kappa)\) so that, according to the discretisation scheme,
\begin{equation*}
    t_K = T - (1 + \kappa)^{- \log(\epsilon^{-1})/\log(1+\kappa)} = T - \epsilon.
\end{equation*}

\begin{proof}[Proof of Proposition \ref{prop:coarse}]
Let \(\hat{s} = A_\erm(S)\). We begin with Lemma \ref{lem:disc_err}, which provides the bound,
\begin{equation*}
    \E[D(p_\epsilon\|A_\emm(\hat{s}))] \lesssim \E[\ell_\sm(\hat{s}; S, \hat{\tau})] + D(p_T \| p_\infty) + \Delta_{\kappa, K}.
\end{equation*}
For \(\epsilon\) sufficiently small we have the bound,
\begin{align*}
    \Delta_{\kappa, K} &= \kappa + d^* \kappa^2 \frac{\log(\epsilon^{-1})}{\log(1 + \kappa)} (\log(\epsilon^{-1}) + \sup |\log(p_\nu)|)\\
    &\lesssim \kappa (1 + \kappa) d^* \log(\epsilon^{-1})^2.
\end{align*}
Using Theorem \ref{thm:dsm_stability}, we obtain that if the algorithm is \(\stab\)-score stable, we have
\begin{align*}
    \E[\ell_\sm(\hat{s}; \hat{\tau})] &\lesssim \E[\hat{\ell}_\sm(\hat{s}; S, \hat{\tau})] + \stab \E[\hat{\ell}_\dsm(\hat{s}; S, \hat{\tau})]^{1/2} + \stab^2\\
    &\lesssim \E[\hat{\ell}_\sm(\hat{s}; \hat{\tau})] + \stab C_\sm^{1/2} + \stab^2
\end{align*}
Using Proposition \ref{prop:stability_erm} we obtain that with \(\tau = \hat{\tau}\), \(A_\erm\) is score stable, with constant,
\begin{align*}
    \stab^2 &\lesssim C \Big ( C C_\sm N^{-2} + \E[\hat{\ell}_\sm(\hat{s})] \Big )^c\\
    &\lesssim c_\nu^{-1} \sigma_{T - t_{K-1}}^{-d^*} \Big ( c_\nu^{-1} \sigma_{T - t_{K-1}}^{-d^*} C_\sm N^{-2} + \E[\hat{\ell}_\sm(\hat{s})] \Big )^c.
\end{align*}
Now by definition, we have that
\begin{equation*}
    T - t_{K-1} = (1 + \kappa)^{L - K + 1} = \epsilon (1 + \kappa),
\end{equation*}
so if we take \(\epsilon, \kappa\) sufficiently small so that \(\epsilon (1 + \kappa) \leq \frac{1}{2}\), we also have \(\sigma_{\epsilon (1 + \kappa)}^2 \geq \epsilon (1 + \kappa)\) and thus we obtain,
\begin{align*}
    \stab^2 &\lesssim c_\nu^{-1} \epsilon^{-d^*/2} (1 + \kappa)^{-d^*/2} \Big ( c_\nu^{-1} \epsilon^{-d^*/2} (1 + \kappa)^{-d^*/2} C_\sm N^{-2} + \E[\hat{\ell}_\sm(\hat{s})] \Big )^c.\\
    &\lesssim c_\nu^{-1} \epsilon^{-d^*} (1 + \kappa)^{-d^*} \Big ( c_\nu^{-1} C_\sm N^{-2} + \E[\hat{\ell}_\sm(\hat{s})] \Big )^c,
\end{align*}
where in the last inequality, we use that \(\epsilon (1 + \kappa) \leq 1/2\).
\end{proof}

We now proceed by proving Corollary \ref{cor:coarse_opt} in which the bound in Proposition \ref{prop:coarse} is optimised.

\begin{proof}[Proof of Corollary \ref{cor:coarse_opt}]
Let \(\tilde{\tau}_\epsilon\) denote the weak limit of the measure \(\tau_\kappa\) as \(\kappa \to 0^+\). Since \(\supp(\tilde{\tau}_\epsilon) \subseteq [\epsilon, T]\) and \(\epsilon > 0\), we know that \(\inf_{\mathcal{H}} \hat{\ell}_\sm(\cdot; S, \tilde{\tau}_\epsilon) < \infty\). From this, we deduce that \(\lim_{\kappa \to 0^+} B_\kappa < \infty\).

With this there exists \(\kappa^* \geq 1\) which is the smallest quantity satisfying,
\begin{equation*}
    (1 + \kappa^*)^{2d^* + 2} = \frac{B_{\kappa^*}}{\log(\epsilon^{-1})^2} \vee 1.
\end{equation*}
In the case that \(B_{\kappa^*} > \log(\epsilon^{-1})\), we have that
\begin{align*}
    B_{\kappa^*}^{1/2} (1 + \kappa^*)^{-d^*} + \frac{B_{\kappa^*}}{C_\sm} (1 + \kappa^*)^{-2d^*} + \kappa^* (1 + \kappa^*) d^* \log(\epsilon^{-1})^2 &= B_{\kappa^*}^{\frac{1}{2(d^* + 1)}} \log(\epsilon^{-1})^{\frac{d^*}{d^* + 1}} + (C_\sm^{-1} + d^*) B_{\kappa^*}^{\frac{1}{d^* + 1}} \\
    &\leq B_{\kappa^*}^{\frac{1}{2(d^* + 1)}} \log(\epsilon^{-1}) + (C_\sm^{-1} + d^*) B_{\kappa^*}^{\frac{1}{d^* + 1}} \log(\epsilon^{-1})^2.
\end{align*}
Plus, if \(B_{\kappa^*} \leq \log(\epsilon^{-1})\) and therefore \(\kappa^* = 1\), then there exists \(\kappa\) such that,
\begin{equation*}
    B_{\kappa}^{1/2} (1 + \kappa)^{-d^*} + \frac{B_{\kappa}}{C_\sm} (1 + \kappa)^{-2d^*} + \kappa (1 + \kappa) d^* \log(\epsilon^{-1})^2 \lesssim B_{\kappa}^{1/2} + \frac{B_{\kappa}}{C_\sm} + d e^{-T}.
\end{equation*}
Combining these leads to the bound in the statement.
\end{proof}
\section{Proofs for stability of SGD}
In this section, we analyse the stochastic optimisation scheme in \eqref{eq:sgd_iter}, deriving the score stability bounds given in Proposition \ref{prop:sgd_stability}. We begin with a basic lemma that follows from weight decay and gradient clipping.

\begin{lemma}\label{lem:sgd_compact_support}
Suppose that \(\eta_k < \lambda^{-1}\) for all \(k \in \N\), then for any \(K \in \N\), it holds that
\begin{equation*}
    \|\theta_K\| \leq \frac{C e}{\lambda} \vee \|\theta_0\|.
\end{equation*}
\end{lemma}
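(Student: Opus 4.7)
The plan is to unfold the recursion in \eqref{eq:sgd_iter}, apply the triangle inequality together with the uniform bound $\|\clip_C(\cdot)\| \leq C$ provided by gradient clipping, and then prove the result by induction. Starting from
\begin{equation*}
    \theta_{k+1} = (1 - \eta_k \lambda) \theta_k - \eta_k \, \clip_C(G_k(\theta_k, (x_i)_{i \in B_k})),
\end{equation*}
the triangle inequality combined with the assumption $\eta_k < \lambda^{-1}$ (so that $1 - \eta_k \lambda \in (0, 1)$) yields
\begin{equation*}
    \|\theta_{k+1}\| \leq (1 - \eta_k \lambda) \|\theta_k\| + \eta_k C.
\end{equation*}

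The key observation is that the right-hand side is an affine combination that can be rewritten as a convex combination of $\|\theta_k\|$ and $C/\lambda$:
\begin{equation*}
    \|\theta_{k+1}\| \leq (1 - \eta_k \lambda) \|\theta_k\| + (\eta_k \lambda) \cdot \tfrac{C}{\lambda} \leq \max\!\bigl(\|\theta_k\|, \, C/\lambda\bigr).
\end{equation*}
A straightforward induction on $k$ then gives $\|\theta_K\| \leq \max(\|\theta_0\|, C/\lambda)$, which is stronger than the stated bound $\|\theta_0\| \vee Ce/\lambda$ by a factor of $e$; the stated form is then immediate.

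There is essentially no obstacle here: the argument is a routine a priori bound analogous to the classical fact that weight-decayed gradient methods with bounded gradient updates stay in a ball whose radius is dictated by the ratio of the update size to the decay rate. The only subtlety worth flagging is ensuring $1 - \eta_k \lambda \geq 0$ so that the convex combination interpretation is valid, which is exactly what the hypothesis $\eta_k < \lambda^{-1}$ guarantees; the slack factor $e$ in the statement simply absorbs any looseness one might incur by, e.g., using the exponential bound $\prod_{k}(1 - \eta_k \lambda) \leq \exp(-\lambda \sum_k \eta_k)$ instead of the sharper convex-combination argument.
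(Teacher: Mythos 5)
Your proof is correct, and it takes a genuinely simpler and tighter route than the paper's. Both proofs open with the same one-step estimate \(\|\theta_{k+1}\| \leq (1 - \eta_k \lambda)\|\theta_k\| + \eta_k C\). The paper then unfolds the recursion over all \(K\) steps, upper bounds each product \(\prod_j (1 - \eta_j \lambda)\) by \(\exp(-\lambda \sum_j \eta_j)\), and treats the resulting sum as a left Riemann sum bounded by an integral of \(\exp(\lambda t)\); the condition \(\lambda \max_k \eta_k \leq 1\) produces the extra factor of \(e\), yielding \(\|\theta_K\| \leq \frac{Ce}{\lambda}\bigl(1 - \exp(-\lambda\sum\eta_k)\bigr) + \exp(-\lambda\sum\eta_k)\|\theta_0\| \leq \frac{Ce}{\lambda}\vee\|\theta_0\|\). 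You instead observe that the right-hand side of the one-step bound is a convex combination of \(\|\theta_k\|\) and \(C/\lambda\), hence bounded by their maximum, and close by induction. This is cleaner, entirely elementary, and sharper: it gives \(\|\theta_K\| \leq \|\theta_0\| \vee C/\lambda\) without the slack factor of \(e\) that the paper's Riemann-sum estimate incurs, from which the stated bound follows trivially. The only hypothesis needed for either argument is \(\eta_k \lambda \in (0,1)\), which you correctly flag.
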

\begin{proof}
We begin with the bound,
\begin{align*}
    \|\theta_{k+1}\| &\leq (1 - \eta_k \lambda) \|\theta_k\| + \eta_k \|\clip_C(G_k(\theta_k, \{x_i\}_{i \in B_k}))\|\\
    &\leq (1 - \eta_k \lambda) \|\theta_k\| + \eta_k C.
\end{align*}
By comparison, this leads to the bound
\begin{align*}
    \|\theta_k\| &\leq C \sum_{k=0}^{K-1} \eta_k \prod_{i=k+1}^{K-1} (1 - \eta_k \lambda) + \prod_{k=0}^{K-1} (1 - \eta_k \lambda) \|\theta_0\|\\
    &\leq C \sum_{k=0}^{K-1} \eta_k \exp \bigg ( \lambda \sum_{i=0}^k \eta_k \bigg ) + \exp \bigg ( -\lambda \sum_{i=0}^{K-1} \eta_k \bigg ) \|\theta_0\|\\
    &\leq C \exp \bigg ( - \lambda \sum_{i=0}^{K-1} \eta_k + \lambda \max_k \eta_k \bigg ) \sum_{k=0}^{K-1} \eta_k \exp \bigg ( \lambda \sum_{i=0}^{k-1} \eta_k \bigg ) + \exp \bigg ( -\lambda \sum_{i=0}^{K-1} \eta_k \bigg ) \|\theta_0\|
\end{align*}
Since the sum forms a left Riemann sum, approximating an integral of an increasing function, we can upper bound it by the integral over \(\exp(\lambda t)\). Furthermore, we have that \(\lambda \max_k \eta_k \leq 1\), which leads to the bound,
\begin{align*}
    \|\theta_k\| &\leq C e \exp \bigg ( - \lambda \sum_{i=0}^{K-1} \eta_k \bigg ) \int_0^{\sum_{k=0}^{K-1} \eta_k} \exp ( \lambda t ) dt + \exp \bigg ( -\lambda \sum_{i=0}^{K-1} \eta_k \bigg ) \|\theta_0\|\\
    &\leq \frac{C e}{\lambda} \bigg ( 1 - \exp \Big ( -\lambda \sum_{k=0}^{K-1} \eta_k \Big ) \bigg ) + \exp \Big ( -\lambda \sum_{k=0}^{K-1} \eta_k \Big ) \|\theta_0\|\\
    &\leq \frac{C e}{\lambda} \vee \|\theta_0\|.
\end{align*}
\end{proof}

We are now ready to prove Proposition \ref{prop:sgd_stability}.
\pasteenv{proposition}{prop:sgd_stability}
\begin{proof}
Since the stochastic mini-batch scheme, and therefore the resulting score matching algorithm, is symmetric to dataset permutations, we consider stability under changes in the \(N^{th}\) entry of the dataset, without loss of generality. Let \(\theta_k\) be the process given in \eqref{eq:sgd_iter}, using the dataset \(S\) and let \(\tilde{\theta}_k\) be the same process using \(S^{N}\) instead of \(S\):
\begin{equation*}
    \tilde{\theta}_{k+1} = (1 - \eta \lambda) \tilde{\theta}_k - \eta_k \, \clip_C(G_k(\tilde{\theta}_p, \{\tilde{x}_i\}_{i \in B_k})), \qquad \tilde{\theta}_0 = \theta_0,
\end{equation*}
where \(\tilde{x}_i = x_i\) for \(i \neq N\), \(\tilde{x}_N = \tilde{x}\). By having the processes share the same mini-batch indices \(B_k\) and gradient approximation \(G_k\) (i.e. sharing the same random time variables \(t_{i, j}\) and noise \(\xi_{i, j}\)), we couple the processes \(\theta_k\) and \(\tilde{\theta}_k\).

We proceed by first controlling the stability of the gradient estimator, computing the bound,
\begin{align*}
    \|G_k(\theta_k, &(x_i)_{i \in B_k}) - G_k(\tilde{\theta}_k, (x_i)_{i \in B_k})\|\\
    &\leq \frac{1}{N_B P} \sum_{i \in B_k} \sum_{j=1}^P w_{t_{i, j}} \| \nabla s_{\theta_k}(X_{i, j}, t_{i, j})^T (s_{\theta_k}(X_{i, j}, t_{i, j}) - \nabla \log p_{t_{i, j}|0}(X_{t_{i, j}}|x_i)) \\
    & \qquad \qquad - \nabla s_{\tilde{\theta}_k}(X_{t_{i, j}}, t_{i, j})^T (s_{\tilde{\theta}_k}(X_{t_{i, j}}, t_{i, j}) - \nabla \log p_{t_{i, j}|0}(X_{t_{i, j}}|x_i)))\|\\
    &\leq \frac{1}{N_P P} \sum_{i \in B_k} \sum_{j=1}^P w_{t_{i, j}} \Big ( \| \nabla s_{\theta_k}(X_{t_{i, j}}, t_{i, j}) - \nabla s_{\tilde{\theta}_k}(X_{t_{i, j}}, t_{i, j}) \| \| s_{\theta_k}(X_{t_{i, j}}, t_{i, j}) - \nabla \log p_{t_{i, j}|0}(X_{t_{i, j}}|x_i)\| \\
    & \qquad \qquad + \| \nabla s_{\tilde{\theta}_k}(X_{t_{i, j}}, t_{i, j}) \| \| s_{\theta_k}(X_{t_{i, j}}, t_{i, j}) - s_{\tilde{\theta}_k}(X_{t_{i, j}}, t_{i, j}) \| \Big ) \\
    &\leq \frac{1}{N_P P} \sum_{i \in B_k} \sum_{j=1}^P w_{t_{i, j}} \Big ( M(X_{t_{i, j}}, t_{i, j}) \|s_{\theta_k}(X_{t_{i, j}}, t_{i, j}) - \nabla \log p_{t_{i, j}|0}(X_{t_{i, j}}|x_j)\|\\
    & \qquad \qquad + L(X_{t_{i, j}}, t_{i, j})^2 \Big ) \| \theta_k - \tilde{\theta}_k \|.
\end{align*}
We control the expectation of this by first noting that,
\begin{align*}
    &\E \Big [ w_{t_{i, j}} \Big ( M(X_{t_{i, j}}, t_{i, j}) \|s_{\theta_k}(X_{t_{i, j}}, t_{i, j}) - \nabla \log p_{t_{i, j}|0}(X_{t_{i, j}}|x_j)\|  + L(X_{t_{i, j}}, t_{i, j})^2 \Big ) \Big | \theta_k, \tilde{\theta}_k, S, \tilde{x} \Big ]\\
    & \qquad \leq \bigg ( \int \E[M(X_t, t)^2|X_0=x_i] \tau(dt) \bigg )^{1/2} \bigg ( \int \hat{\ell}_\dsm(s_{\theta_k};\{x_i\}, \delta_{t}) \tau(dt) \bigg )^{1/2}\\
    & \qquad\qquad + \int \E[L(X_t, t)^2|X_0=x_i] \tau(dt)\\
    & \qquad \leq \overline{M} B_\ell C_\tau^{1/2} + \overline{L}^2,
\end{align*}
where we define the quantity \(C_\tau := \int \sigma_t^{-4} \tau(dt)\). From this, it follows that
\begin{align*}
    \E \Big [ \|G_k(\theta_k, (x_i)_{i \in B_k}) - G_k(\tilde{\theta}_k, (x_i)_{i \in B_k})\| \Big | \theta_k, \tilde{\theta}_k, S, \tilde{x} \Big ] &\leq \Big ( \overline{M} B_\ell C_\tau^{1/2} + \overline{L}^2 \Big ) \, \| \theta_k - \tilde{\theta}_k \|.
\end{align*}
Furthermore, we can control the difference between \(G_k(\tilde{\theta}_k, (x_i)_{i \in B_k})\) and \(G_k(\tilde{\theta}_k, (\tilde{x}_i)_{i \in B_k})\), using the fact that they are identical whenever \(N \not \in B_k\). Thus, obtaining,
\begin{align*}
    \E \Big [ \|\clip_C(G(\theta_k, &(x_i)_{i \in B_k})) - \clip_C(G(\tilde{\theta}_k, (\tilde{x}_i)_{i \in B_k}))\| \Big | \theta_k, \tilde{\theta}_k, S, \tilde{x} \Big ]\\
    & \qquad \leq \E \Big [ \|G(\theta_k, (x_i)_{i \in B_k}) - G(\tilde{\theta}_k, (x_i)_{i \in B_k})\| \Big | \theta_k, \tilde{\theta}_k, S, \tilde{x} \Big ] \\
    & \qquad \qquad + \E \Big [ \| \clip_C(G(\tilde{\theta}_k, (x_i)_{i \in B_k})) - \clip_C(G(\tilde{\theta}_k, (\tilde{x}_i)_{i \in B_k}))\| \Big | \theta_k, \tilde{\theta}_k, S, \tilde{x} \Big ]\\
    &\qquad \leq \Big ( \overline{M} B_\ell C_\tau^{1/2} + \overline{L}^2 \Big ) \, \| \theta_k - \tilde{\theta}_k \| + 2C\frac{N_B}{N},
\end{align*}
where we have used the fact that \(\prob(N \in B_k) = \frac{N_B}{N}\). Thus, using \eqref{eq:sgd_iter}, we obtain that for any \(k_0 \leq k\),
\begin{align*}
    \E \Big [ \| \theta_{k+1} - \tilde{\theta}_{k+1} \| \Big | \theta_{k_0}, \tilde{\theta}_{k_0}, S, \tilde{x} \Big ] &\leq \Big ( 1 + \eta_k \Big ( \overline{M} B_\ell C_\tau^{1/2} + \overline{L}^2 - \lambda \Big ) \Big ) \, \E \Big [ \| \theta_k - \tilde{\theta}_k \| \Big | \theta_{k_0}, \tilde{\theta}_{k_0}, S, \tilde{x} \Big ] + 2 \eta_k C \frac{N_B}{N}.\\
    &\leq (1 + \eta_k \upsilon ) \, \E \Big [ \| \theta_k - \tilde{\theta}_k \| \Big | \theta_{k_0}, \tilde{\theta}_{k_0}, S, \tilde{x} \Big ] + 2 \eta_k C \frac{N_B}{N},
\end{align*}
where \(\upsilon = \overline{M} B_\ell C_\tau^{1/2} + \overline{L}^2 - \lambda\). Thus, by comparison, we obtain,
\begin{align*}
     \E \Big [ \| \theta_K - \tilde{\theta}_K \| \Big | \theta_{k_0}, \tilde{\theta}_{k_0}, S, \tilde{x} \Big ] &\leq \sum_{i=k_0}^{K-1} 2 \eta_i C \frac{N_B}{N} \prod_{j=i+1}^{K-1} (1 + \eta_j \upsilon ) + \| \theta_{k_0} - \tilde{\theta}_{k_0}\| \prod_{j=k_0}^{K-1} (1 + \eta_j \upsilon ).
\end{align*}
From this we obtain the following:
\begin{align*}
    \E [\| \theta_K - \tilde{\theta}_K \| | \theta_{k_0} = \tilde{\theta}_{k_0}, S, \tilde{x}] &\leq 2C\frac{N_B}{N} \sum_{i=k_0}^{K-1} \eta_i \exp \bigg ( \sum_{j=i+1}^{K-1} \eta_j \upsilon \bigg )\\
    &\leq \frac{2 C N_B \bar{\eta}}{N} \sum_{i=k_0}^{K-1} \frac{1}{i} \bigg ( \frac{K}{i} \bigg )^{\bar{\eta}\upsilon}\\
    &\lesssim \frac{C N_B}{N \upsilon} \bigg ( \frac{K}{k_0} \bigg )^{\bar{\eta} \upsilon},
\end{align*}
where we use the fact that \(\sum_{j=i+1}^{K-1} \frac{1}{j} \leq \log(K) - \log(i)\). By the law of total probability, we have
\begin{align*}
    \E [\| \theta_{K} - \tilde{\theta}_{K} \| |\theta_0 ] &= \E [\| \theta_K - \tilde{\theta}_K \| | \theta_{k_0} = \tilde{\theta}_{k_0}] \prob(\theta_{k_0} = \tilde{\theta}_{k_0}|\theta_0) + \E [\| \theta_K - \tilde{\theta}_K \| | \theta_{k_0} \neq \tilde{\theta}_{k_0}, \theta_0] \prob(\theta_{k_0} \neq \tilde{\theta}_{k_0}|\theta_0)\\
    &\lesssim \frac{C N_B}{N \upsilon} \bigg ( \frac{K}{k_0} \bigg )^{\bar{\eta} \upsilon} + \bigg ( \frac{Ce}{\lambda} \vee \|\theta_0\| \bigg ) \frac{k_0 N_B}{N},
\end{align*}
where in the second inequality, we use Lemma \ref{lem:sgd_compact_support}. Thus, optimising \(k_0\) leads to the bound,
\begin{equation*}
    \E [\| \theta_{K} - \tilde{\theta}_{K} \| | \theta_0 ] \lesssim \bigg ( \frac{C}{c} \bigg )^{\frac{1}{\upsilon + 1}} (1 + 1/c\upsilon) \bigg ( \frac{Ce}{\lambda} \vee \|\theta_0\| \bigg )^{\frac{c\upsilon}{c\upsilon + 1}} \frac{N_B}{N} K^{\frac{c\upsilon}{c\upsilon + 1}}.
\end{equation*}
Finally, we obtain score stability using the fact that
\begin{align*}
    \int \E[\|s_{\theta_K}(X_t, t) - s_{\tilde{\theta}_K}(X_t, t) \|^2|X_0=\tilde{x}, S] \tau(dt) &\leq \E \Big [ \bar{L}^2 \| \theta_K -\tilde{\theta}_K\|^2 \Big ]\\
    &\leq 2 \E \Big [ \bar{L}^2 \bigg ( \frac{Ce}{\lambda} \vee \|\theta_0\| \bigg )  \| \theta_K -\tilde{\theta}_K\| \Big ]\\
    &\lesssim \bar{L}^2 \bigg ( \frac{Ce}{\lambda} \vee R \bigg )^{1 + \frac{c\upsilon}{c\upsilon + 1}} \bigg ( \frac{C}{c} \bigg )^{\frac{1}{c\upsilon + 1}} (1 + 1/c\upsilon)\frac{N_B}{N} K^{\frac{c\upsilon}{c\upsilon + 1}},
\end{align*}
where \(R^2 = \E\|\theta_0\|^2\).
\end{proof}
\section{Wasserstein contractions}
In this section, we derive the Wasserstein contraction result used in the proof of Proposition \ref{prop:time_indep_bounds}. We begin with the more abstract problem of deriving Wasserstein contractions for a discrete time diffusion process with anisotropic non-constant volatility. We consider stochastic processes given by the discrete-time update,
\begin{gather}
    x_{k+1} = (1 - \eta \lambda) x_k + \eta b(x_k) + \sqrt{\eta} \sigma(x_k) \xi_k, \label{eq:stoch_proc_1}\\
    y_{k+1} = (1 - \eta \lambda) y_k + \eta \tilde{b}(y_k) + \sqrt{\eta} \tilde{\sigma}(y_k) \xi_k, \label{eq:stoch_proc_2}
\end{gather}
for some \(b, \tilde{b}: \R^d \to \R^d, \sigma, \tilde{\sigma}: \R^d \to \R^{d \times d}\) where \(\xi_k \sim N(0, I_d)\), and we show that the laws of \(x_k\) and \(y_k\) contract in Wasserstein distance. We borrow the strategy developed by \cite{Eberle2016-te} and extended in \citep{Eberle2019-ai, Majka2020-lp}, constructing a coupling and a metric for which exponential contractions of the coupling can be obtained. However, these works are restricted to the setting of isotropic noise with constant volatility (i.e. \(\sigma(x) = c I_d\)) and so some careful modification to the strategy is required. In particular, we analyse this process with respect to the seminorm \(\|\cdot\|_{G^+}\) given by \(\|x\|_{G^+}^2 = x^T G^+ x\), where \(G^+\) denotes the Moore-Penrose pseudoinverse of the matrix \(G\). Furthermore, we allow for \(x_k\) and \(y_k\) to have different bias and volatility terms and so controlling for this will also require some modifications to the proof technique.

To define our coupling we first suppose that there exists a symmetric positive semi-definite matrix \(G \in \R^{d \times d}\) such that \(\sigma(x), \tilde{\sigma}(y) \succcurlyeq G^{1/2}\) for all \(x \in \R^d\), and to couple an update from the above process starting at \(x, y \in \R^d\), we first define the update,
\begin{gather*}
    \tilde{x} = (1 - \eta \lambda) x + \eta b(x), \qquad \tilde{y} = (1 - \eta \lambda) y + \eta \tilde{b}(y),\\
    \hat{x} = \tilde{x} + \sqrt{\eta} (\sigma(x) - G^{1/2}) Z', \qquad \hat{y} = \tilde{y} + \sqrt{\eta} (\tilde{\sigma}(y) - G^{1/2}) Z',
\end{gather*}
where \(Z' \sim N(0, I_d)\). We then define the \textit{synchronous coupled} processes,
\begin{gather*}
    X' = \hat{x} + \sqrt{\eta} G^{1/2} Z\\
    Y'_s = \hat{y} + \sqrt{\eta} G^{1/2} Z,
\end{gather*}
with \(Z \sim N(0, I_d)\). We also consider the reflection coupling,
\begin{equation}\label{eq:contraction_reflection}
    Y'_r = \hat{y} + \sqrt{\eta} G^{1/2} \Big ( I - 2 (G^{1/2})^+ e e^T (G^{1/2})^+ \Big ) Z, \qquad \text{ with } e = (\hat{x} - \hat{y}) / \|\hat{x} - \hat{y}\|_{G^+}
\end{equation}
which has the noise act in the mirrored direction. We combine these couplings to arrive at the final coupling \((X', Y')\):
\begin{equation}\label{eq:contraction_coupling}
    Y' = \begin{cases}
        X', & \text{ if } \zeta \leq \phi_{\hat{y}, \eta G}(X')/\phi_{\hat{x}, \eta G}(X'), |\langle e, Z \rangle|^2 < m^2/\eta \text{ and } \hat{r} \leq r_1\\
        Y'_r, & \text{ if } \zeta > \phi_{\hat{y}, \eta G}(X')/\phi_{\hat{x}, \eta G}(X'), \ |\langle e, Z \rangle|^2 < m^2/\eta \text{ and } \hat{r} \leq r_1\\
        Y'_s, & \text{ otherwise,}
    \end{cases}
\end{equation}
for some fixed \(m > 0\).

We assume the following regularity properties.
\begin{assumption}\label{ass:contraction}
Suppose that \(b\) is bounded, satisfying \(B := \sup_{x \in \R^n} \|b(x)\|_{G^+} < \infty\) and we have the Lipschitz property, \(\|b(x) - b(y)\|_{G^+} \leq L_b \|x - y\|_{G^+}\) and \(\|\sigma(x) - \sigma(y)\|_{op, G^+} \leq L_\sigma \|x - y\|_{G^+}\) for all \(x, y \in \R^n\) and for some \(L_b, L_\sigma \geq 0\).
\end{assumption}

We also allow for \(b \neq \tilde{b}\) and \(\sigma \neq \tilde{\sigma}\), making the following assumption.
\begin{assumption}\label{ass:contraction_inaccurate}
Suppose that \(b, \tilde{b}\) satisfy \(\|b(x) - \tilde{b}(x)\|_{G^+} \leq \tilde{B}_b, \|\sigma(x) - \tilde{\sigma}(x)\|_{op, G^+} \leq \tilde{B}_\sigma\) for all \(x \in \R^n\) and for some \(\tilde{B}_b, \tilde{B}_\sigma \geq 0\).
\end{assumption}

We define the objects,
\begin{equation*}
    R = \|x - y\|_{G^+}, \qquad \tilde{r} = \|\tilde{x} - \tilde{y}\|_{G^+}, \qquad \hat{r} = \|\hat{x} - \hat{y}\|_{G^+}, \qquad R' = \|X' - Y'\|_{G^+}.
\end{equation*}
We wish to show that \(R'\) contracts in expectation, i.e. it is less than \(R\) on average. We modify the metric to guarantee this is possible. We define the function,
\begin{equation*}
    f(r) = \begin{cases}
        \frac{1}{a} (1 - e^{-ar}), & \text{ if } r \leq r_2,\\
        \frac{1}{a} (1 - e^{-ar_2}) + \frac{1}{2 r_2} e^{-ar_2} (r^2 - r_2^2), & \text{ otherwise,}
    \end{cases}
\end{equation*}
where \(a = 6L_br_1/c_0\), \(r_1 = 4 (1 + \eta_0 L_b) B / \lambda\), \(r_2 = r_1 + \sqrt{\eta_0}\) and \(c_0, \eta_0\) are defined below. The coupling and the strategy for proving contractions is closely based on an analysis in \cite{Majka2020-lp} and for the sake of comparison, we rely on similar notation. We will also heavily borrow properties of the function \(f\) that are proven in this work.

By allowing \(\sigma\) to be non-constant, we run in to additional complications that are controlled by making the following assumption about the scale of \(L_\sigma\).
\begin{assumption}\label{ass:contraction_l_sigma}
Suppose that the following three inequalities hold:
\begin{equation*}
    n -1, (\lambda^2/16 L_\sigma^2 - 1)^2 \geq 32 \log \bigg ( \frac{8 L_\sigma (6 \vee (4a)) \kappa_0^{1/2}}{\sqrt{\eta} (1 - e^{-ar_2}) c} \bigg ), \qquad L_\sigma^2 \leq \lambda/8n,
\end{equation*}
for some universal constant \(\kappa_0\).
\end{assumption}

Under these assumptions, we obtain exponential contractions.

\begin{proposition}\label{prop:contraction}
Suppose that assumptions \ref{ass:contraction}, \ref{ass:contraction_inaccurate} and \ref{ass:contraction_l_sigma} hold and \(m = \sqrt{\eta_0}/2\), then for any \(\eta \leq \eta_0\) and \(x, y \in \R^d\), it holds that
\begin{equation*}
    \E[f(R')] \leq ( 1 - \eta c/4 ) f(r) + \frac{3}{2 r_2} e^{-ar_2}(\eta^2 \tilde{B}^2 + \eta n \tilde{B}_\sigma^2),
\end{equation*}
where
\begin{gather*}
    c := \min \bigg \{ e^{-ar_2} \frac{\lambda}{16}, \frac{\frac{1}{2} e^{-ar_2} r_2}{\frac{1}{a}(1-e^{-ar_2})} \frac{\lambda}{16}, \frac{9 L^2 r_1^2}{2 c_0} e^{-6L r_1^2/c_0}, \frac{3 L r_1}{16 \sqrt{\eta_0}} \bigg \},\\
    \eta_0 := \min \bigg \{ \frac{\lambda}{4 L^2}, \frac{16}{\lambda}, \frac{1}{2L}, \frac{2c_0 \log(3/2) \lambda^2}{432 L^2 B^2}, \frac{4 B^2}{\lambda^2}, \frac{c_0^2 (\log(2))^2 \lambda^2}{2304 L^2 B^2} \bigg \},
\end{gather*}
for some universal \(c_0\) and \(L = 2 (L_b - \lambda)_+ + 4\eta^{-1/2} L_\sigma \sqrt{2(n-1)}\).
\end{proposition}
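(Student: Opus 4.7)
The plan is to adapt the reflection-coupling methodology of \citep{Eberle2019-ai,Majka2020-lp} to the present anisotropic, inexact-drift setting. The coupling in \eqref{eq:contraction_coupling} is engineered so that after both chains take their drift plus ``excess-volatility'' sub-step (producing $(\hat x,\hat y)$), the remaining Gaussian increment has common covariance $\eta G$ and admits the classical synchronous/reflection dichotomy along the one-dimensional coordinate $\langle e,\cdot\rangle$, while the orthogonal coordinates are coupled synchronously and do not move $R'$. Concavity of $f$ on $[0,r_2]$ converts the reflection step into a strict drop in the $f$-distance; the quadratic extension of $f$ on $(r_2,\infty)$ absorbs larger displacements and the additive noise coming from $\tilde B_b$ and $\tilde B_\sigma$.

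First, I will write $\tilde x-\tilde y = (1-\eta\lambda)(x-y) + \eta(b(x)-b(y)) + \eta(b(y)-\tilde b(y))$ and bound $\tilde r \le (1-\eta\lambda+\eta L_b) r + \eta \tilde B_b$ via Assumptions \ref{ass:contraction}--\ref{ass:contraction_inaccurate}. The ``excess-volatility'' perturbation $\sqrt{\eta}\bigl((\sigma(x)-G^{1/2}) - (\tilde\sigma(y)-G^{1/2})\bigr)Z'$ is mean-zero conditional on $(x,y)$ with operator norm (in $\|\cdot\|_{\mathrm{op},G^+}$) bounded by $L_\sigma r + \tilde B_\sigma$, so by Assumption \ref{ass:contraction_l_sigma} its contribution to $\hat r$ can be kept small compared to the drift contraction. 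Second, I will condition on $(\hat x,\hat y)$ and analyse the coupling branches: on the ``small-distance'' event $\{\hat r\le r_1,\,|\langle e,Z\rangle|^2<m^2/\eta\}$, the coupling either coalesces with probability $\phi_{\hat y,\eta G}(X')/\phi_{\hat x,\eta G}(X')$ (giving $R'=0$) or reflects, sending $R'$ to $\hat r - 2\sqrt\eta\langle e,Z\rangle$ in the $G^+$ metric. On the complementary event $\{\hat r>r_1\}$, the choice $r_1 = 4(1+\eta_0 L_b)B/\lambda$ forces the deterministic drift to strictly contract $\tilde r$ by an $\eta\lambda/2$ factor, since the bounded drift $b$ is dominated there by the linear weight decay.

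Third, I will put the branches together. Applying Taylor's theorem to $f$ around $\tilde r$, the upper bound on $f''$ on $[0,r_2]$ turns the reflection gain into a negative term of order $-\eta c\,f(r)$ for small $r$, while the linear regime of $f$ combined with the drift contraction handles $r>r_2$; quadratic Taylor remainders on the outer region, applied to the additive perturbations $\tilde B_b$ and $\tilde B_\sigma$, produce precisely the additive term $\tfrac{3}{2r_2} e^{-ar_2}(\eta^2 \tilde B_b^2 + \eta n \tilde B_\sigma^2)$, with the factor $n$ arising from $\E\|G^{1/2}Z\|_{G^+}^2 \le n$. The explicit constants $c$ and $\eta_0$ then emerge from balancing the reflection gain against the Lipschitz losses in each of the two regimes, along the lines of \citep{Majka2020-lp}, but with the isotropic constant volatility replaced by $G^{1/2}$.

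The principal obstacle is the anisotropic non-constant volatility: unlike in \citep{Eberle2019-ai,Majka2020-lp}, $\hat r$ is genuinely random given $r$, and the reflection \eqref{eq:contraction_reflection} invokes $(G^{1/2})^+$, which is only well-defined on $\mathrm{Range}(G)$. I will need to verify that $\hat x-\hat y$ almost surely lies in $\mathrm{Range}(G^{1/2})$ so that $e$ and the reflection operator are well-defined and measure-preserving under $N(0,I)$; this follows from $\sigma,\tilde\sigma\succcurlyeq G^{1/2}$. The most delicate constant-chasing lies in Assumption \ref{ass:contraction_l_sigma}: it is calibrated so that the extra variance produced by the Lipschitz volatility (scaling as $L_\sigma^2 r^2$) can be absorbed into the contraction gain without deteriorating the rate $c$, which is why the assumption simultaneously links $L_\sigma$ to $\lambda$, $n$ and $a$.
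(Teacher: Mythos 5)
Your high-level plan — reflection coupling in the $G^+$-geometry, concavity-based Taylor expansion of $f$, and a two-regime split at $r_1$ driven by weight-decay dominance — matches the skeleton of the paper's proof, but you omit the technical device that actually makes the regime split go through. The paper first proves the contraction for \emph{truncated} Gaussian $Z'$ (clipping $\xi_1$ and $\xi_2$ along and perpendicular to $e$ at levels $\bar z_1,\bar z_2$, as in \eqref{eq:contraction_gauss_trunc}), and only in a final step extends to full Gaussian noise and to $b\neq\tilde b$, $\sigma\neq\tilde\sigma$. The truncation is what yields the deterministic bound of Lemma~\ref{lem:contraction_rhat}: $\hat r \le (1-\eta\lambda/8)\,r$ when $r\ge 4B/\lambda$ and $\hat r \le (1+\eta L)\,r$ otherwise. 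Without it $\hat r$ is not bounded pathwise in terms of $r$. Your observation that the excess-volatility perturbation is mean-zero is correct but does not control $\hat r$: $\hat r$ is a norm of a random vector, so $\E[\hat r]\ge\tilde r$ rather than the reverse, and Jensen on a function that is concave only on $[0,r_2]$ does not help once $\hat r$ exits that interval with non-negligible probability. Your operator-norm bound $L_\sigma r+\tilde B_\sigma$ controls the deterministic part of the perturbation matrix but still multiplies an unbounded Gaussian, so the key intermediate inequality $f(\hat r)\le(1-\eta c)f(r)$ on the outer regime cannot be asserted.

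There are two further misalignments that follow from the missing truncation step. First, you propose to Taylor-expand $f$ around $\tilde r$, but the paper expands around $\hat r$: it conditions on the excess-volatility randomness $Z'$, applies the one-dimensional variance bound of Lemma~\ref{lem:coupling_property} only to the remaining increment $Z$, and then handles $f(\hat r)-f(r)$ separately through $f'(r)\le f'(\hat r)\,e^{a\eta L r_1}$ (again requiring the truncated bound $\hat r\le(1+\eta L)r$). Expanding at $\tilde r$ folds together randomness for which the conditional centring $\E[R'\,|\,Z']=\hat r\neq\tilde r$ fails your mean-zero argument. Second, you fold $\tilde B_b$ into the $\tilde r$ estimate from the outset, whereas the paper keeps $b=\tilde b,\sigma=\tilde\sigma$ throughout the contraction argument and only bounds $f(R'')-f(R')$ afterward via Lemma~\ref{lem:contraction_f_bound} plus $\chi^2$-tail concentration. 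This is also where the $n\tilde B_\sigma^2$ term and Assumption~\ref{ass:contraction_l_sigma} enter: the assumption is calibrated to make the truncation tails negligible, not merely the Lipschitz-volatility contribution. Absorbing the inaccuracy upfront would perturb the effective $L_b$ and hence $c$, and the stated constants would not come out as claimed.
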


\subsection{The coupling}
Before we provide the proof of Proposition \ref{prop:contraction}, we provide an explanation of how the coupling is arrived at. We begin by discussing the one-dimensional coupling of the Gaussian distribution that the construction is ultimately based on. Consider the following coupling of \(\mathcal{N}(t, \eta)\) and \(\mathcal{N}(s, \eta)\) for \(t, s \in \R\): with \(z \sim \mathcal{N}(0, 1)\),
\begin{gather}
    t' = t + \sqrt{\eta} z,\label{eq:coupling_oned_1}\\
    s' = \begin{cases}
        t', & \text{ if } \zeta \leq \phi_{s, \eta}(t')/\phi_{t, \eta}(t'), |\sqrt{\eta} z| < \tilde{m}, \text{ and } |t - s| \leq r_1,\\
        s - \sqrt{\eta} z, & \text{ if } \zeta > \phi_{s, \eta}(t')/\phi_{t, \eta}(t'), |\sqrt{\eta} z| < \tilde{m}, \text{ and } |t - s| \leq r_1,\\
        s + \sqrt{\eta} z, & \text{ otherwise.}
    \end{cases}\label{eq:coupling_oned_2}
\end{gather}
This coupling has the following property given in lemmas 3.1 and 3.2 of \cite{Majka2020-lp}.
\begin{lemma}\label{lem:coupling_oned}
For the coupling defined in \eqref{eq:coupling_oned_1} and \eqref{eq:coupling_oned_2}, we have
\begin{equation*}
    \E[|t' - s'|] = |t - s|,
\end{equation*}
and if \(\eta \leq 4 \tilde{m}^2\), we have
\begin{gather*}
    \E \Big [ (|t' - s'| - |t-s|)^2 \mathbbm{1}_{|t' - s'| \in I_{|t-s|}} \Big ] \geq \frac{1}{2} c_0 \min(\sqrt{\eta}, |t-s|) \sqrt{\eta},\\
    \text{where} \qquad I_r = \begin{cases}
        (0, r + \sqrt{\eta}), & \text{ if } r \leq \sqrt{\eta},\\
        (r - \sqrt{\eta}, r), & \text{ otherwise,}
    \end{cases}
\end{gather*}
for some universal constant \(c_0 > 0\).
\end{lemma}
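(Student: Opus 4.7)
The plan is to analyze the joint law of $(t',s')$ by splitting into the three regimes dictated by \eqref{eq:coupling_oned_2}: the synchronous regime ($s'=s+\sqrt{\eta}z$, active on $\{|\sqrt{\eta}z|\geq \tilde m\}\cup\{|t-s|>r_1\}$), the coincide regime ($s'=t'$), and the reflect regime ($s'=s-\sqrt{\eta}z$). Assume without loss of generality $r:=t-s\geq 0$ and write $u:=\sqrt{\eta}z$. Using $(t'-s)^2-(t'-t)^2=r(r+2u)$, the coincide test simplifies to $p(u)=\min(1,\exp(-r(r+2u)/(2\eta)))$; in particular $p(u)=1$ for $u\leq -r/2$, so on the reflect event $r+2u>0$ and $|t'-s'|=r+2u$, on the synchronous event $|t'-s'|=r$, and on the coincide event $|t'-s'|=0$.

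For the first identity, the case $r>r_1$ is immediate since the coupling is then purely synchronous. For $r\leq r_1$ I would decompose
\[
\E[|t'-s'|]\;=\;r\,\prob(|u|\geq \tilde m)\;+\;\int_{-\tilde m}^{\tilde m} (1-p(u))\,(r+2u)_+\,\phi_{0,\eta}(u)\,du,
\]
then use the algebraic identity $\phi_{0,\eta}(u)\,e^{-r(r+2u)/(2\eta)}=\phi_{-r,\eta}(u)$ together with the substitution $v=r+2u$ to recast the reflection integral as $\int_0^{r+2\tilde m} v\,[\phi_{r,4\eta}(v)-\phi_{-r,4\eta}(v)]\,dv$. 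Extending the range to $(0,\infty)$ and invoking the symmetry $\phi_{-r,4\eta}(v)=\phi_{r,4\eta}(-v)$ yields $\E[V\,\id_{V>0}]+\E[V\,\id_{V<0}]=\E[V]=r$ for $V\sim\mathcal{N}(r,4\eta)$, while the truncated tail integral is exactly what the synchronous contribution $r\,\prob(|u|\geq\tilde m)$ is engineered to compensate.

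For the lower bound on the restricted second moment I would split on $r\leq\sqrt\eta$ versus $r>\sqrt\eta$. When $r\leq\sqrt\eta$, the sub-interval $u\in(-r/2,\sqrt\eta/2)$ lies inside $(-\tilde m,\tilde m)$ thanks to $\eta\leq 4\tilde m^2$; on it, reflect yields $|t'-s'|=r+2u\in I_r$ and $(|t'-s'|-r)^2=4u^2$, while the linearisation $1-p(u)\geq r(r+2u)/(4\eta)$ (valid since the exponent is $\leq 1$ for $r\leq\sqrt\eta$) lower bounds the reflect probability. Pairing this with the Gaussian density bound $\phi_{0,\eta}(u)\gtrsim\eta^{-1/2}$ on the sub-interval and integrating produces a contribution of order $r\sqrt\eta$. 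When $r>\sqrt\eta$, the sub-interval $u\in(-\sqrt\eta/2,0)$ again lies in $(-\tilde m,\tilde m)$; reflection there gives $|r+2u|\in(r-\sqrt\eta,r)=I_r$ and $(|t'-s'|-r)^2=4u^2$, and $1-p(u)$ is bounded below by a universal positive constant on most of this range (because $r(r+2u)/(2\eta)$ is bounded below by a fixed constant). Integrating $4u^2$ against $\phi_{0,\eta}$ on a sub-interval of length $\sim\sqrt\eta$ produces order $\eta$, matching the desired lower bound $c_0\,\sqrt\eta\cdot\min(\sqrt\eta,r)$.

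The main obstacle will be the cancellation underlying the first identity: verifying that the truncation of the reflection integral at $|u|=\tilde m$ is precisely compensated by the synchronous contribution requires carefully tracking the density relations $\phi_{0,\eta}(u)\,e^{-r(r+2u)/(2\eta)}=\phi_{-r,\eta}(u)$ in combination with the symmetric decomposition of $\E[V]$ for $V\sim\mathcal N(r,4\eta)$. The second-moment estimate is by contrast routine, reducing to explicit lower bounds on $\phi_{0,\eta}$ and on $1-p(u)$ over carefully chosen sub-intervals of $(-\tilde m,\tilde m)$, with the only mild subtlety being the uniform treatment of $r$ near the threshold $\sqrt\eta$.
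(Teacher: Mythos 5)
The paper never proves this lemma itself---it is imported verbatim from Lemmas 3.1--3.2 of \citet{Majka2020-lp}---so your argument has to stand on its own, and the first half does not close. Your reduction for the \emph{untruncated} maximal--reflection coupling is correct: with $V=r+2u\sim\mathcal N(r,4\eta)$ one gets $\E[V\mathbbm{1}_{V>0}]+\E[V\mathbbm{1}_{V<0}]=\E[V]=r$. But the final step you defer to ("the truncated tail integral is exactly what the synchronous contribution $r\,\prob(|u|\ge\tilde m)$ is engineered to compensate") is false. After the substitution $v=r+2u$, the part of the untruncated identity that the truncation removes lives on $\{u\ge\tilde m\}\cup\{u\le-(r+\tilde m)\}$, not on $\{|u|\ge\tilde m\}$, and the mismatch is, for $0<r<2\tilde m$ and $r\le r_1$,
\begin{equation*}
    r-\E\big[|t'-s'|\big]\;=\;\int_{\tilde m}^{\,r+\tilde m}(2u-r)\,\phi_{0,\eta}(u)\,du\;>\;0 .
\end{equation*}
A concrete check: $\eta=1$, $r=\tilde m=1$, $r_1$ large gives $\E|t'-s'|\approx 0.76<1$. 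So for the coupling as written only $\E|t'-s'|\le|t-s|$ holds (with strict inequality in general), and no proof of the stated equality can succeed; either the statement should carry "$\le$" (which is all that the downstream arguments use, since they only combine $\E[R'\mid Z']\le\hat r$ with monotonicity/concavity of $f$), or the coupling must be modified. Your write-up should flag this rather than promise an exact cancellation that is not there.

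The second half of your proposal is essentially sound. The use of $\eta\le 4\tilde m^2$ to keep $|u|\le\sqrt\eta/2$ inside the acceptance window, the identification $(|t'-s'|-r)^2=4u^2$ on the reflection event, and the linearisation $1-e^{-x}\ge x/2$ for $x\le1$ in the regime $r\le\sqrt\eta$ all work and give a bound of order $r\sqrt\eta$. In the regime $r>\sqrt\eta$ your claim that $1-p(u)$ is bounded below by a universal constant "on most of" $(-\sqrt\eta/2,0)$ fails when $r$ is close to $\sqrt\eta$ (the exponent $r(r+2u)/(2\eta)$ can be arbitrarily small near $u=-\sqrt\eta/2$); the fix is simply to integrate over $u\in(-\sqrt\eta/4,-\sqrt\eta/8)$, where $r+2u\ge r-\sqrt\eta/2\ge\sqrt\eta/2$ forces the exponent to be at least $1/4$, hence $1-p(u)\ge 1-e^{-1/4}$, and $u^2\gtrsim\eta$ yields the required $\Omega(\eta)$ contribution. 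Note also that both claims implicitly require $|t-s|\le r_1$ (otherwise the coupling is purely synchronous and the restricted second moment vanishes); that omission is in the lemma statement itself, but it is worth recording in any self-contained proof.
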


Thus, through the second bound, we have control of the probability that \(|t'-s'|\) contracts below \(|t-s|\). The coupling proposed in \eqref{eq:contraction_coupling} is a multivariate analogue of this that also accounts for the diffusion coefficient \(G^{1/2}\). Let the vector \(e \in \R^d\) be as defined in \eqref{eq:contraction_reflection}, then we obtain that,
\begin{gather*}
    \langle e, G^+ X' \rangle = \langle e, G^+ \hat{x} \rangle + \sqrt{h} \langle (G^{1/2})^+ e, Z \rangle,\\
    \langle e, G^+ Y_s' \rangle = \langle e, G^+ \hat{y} \rangle + \sqrt{h} \langle (G^{1/2})^+ e, Z \rangle.
\end{gather*}
Therefore, \(\langle e, G^+ X' \rangle, \langle e, G^+ Y_s' \rangle\) are a synchronous coupling of \(\mathcal{N}(\langle e, G^+ \hat{x} \rangle, h)\) and \(\mathcal{N}(\langle e, G^+ \hat{y} \rangle, h)\). Furthermore, we have
\begin{align*}
    \langle e, G^+ Y_r' \rangle &= \langle e, G^+ \hat{y} \rangle + \sqrt{h} \langle (G^{1/2})^+ e, (I - 2 (G^{1/2})^+ e e^T (G^{1/2})^+) Z \rangle\\
    &= \langle e, G^+ \hat{y} \rangle + \sqrt{h} \langle (G^{1/2})^+ e, Z \rangle - 2 \sqrt{h} \langle e, G^+ e \rangle \langle (G^{1/2})^+ e, Z \rangle\\
    &= \langle e, G^+ \hat{y} \rangle - \sqrt{h} \langle (G^{1/2})^+ e,  Z \rangle,
\end{align*}
and so \(\langle e, G^+ X' \rangle, \langle e, G^+ Y_r' \rangle\) is the one-dimensional reflection coupling. Finally we obtain,
\begin{align*}
    \frac{\phi_{\hat{y}, \eta G}(X')}{\phi_{\hat{x}, \eta G}(X')} &= \frac{\phi_{(G^{1/2})^+(\hat{y} - \hat{x}), \eta (G^{1/2})^+ G^{1/2}}(\sqrt{\eta} Z)}{\phi_{\mathbf{0}, \eta (G^{1/2})^+ G^{1/2}}(\sqrt{\eta} Z)}\\
    &= \exp \bigg ( - \frac{1}{2 \eta}\| \sqrt{\eta} Z - (G^{1/2})^+(\hat{y} - \hat{x}) \|_{(G^{1/2})^+ G^{1/2}}^2 + \frac{1}{2 \eta}\| \sqrt{\eta} Z \|_{(G^{1/2})^+ G^{1/2}}^2 \bigg )\\
    &= \exp \bigg ( - \frac{1}{2 \eta} \|\hat{y} - \hat{x}\|_{G^+}^2 + \frac{1}{\eta} \sqrt{\eta} \langle (G^{1/2})^+(\hat{y} - \hat{x}), Z \rangle \bigg )\\
    &= \exp \bigg ( - \frac{1}{2 \eta} |\langle e, G^+ (\hat{y} - \hat{x}) \rangle|^2 + \frac{1}{\eta} \sqrt{\eta} \langle e, G^+ (\hat{y} - \hat{x}) \rangle \langle (G^{1/2})^+ e, Z \rangle \bigg )\\
    &= \exp \bigg ( - \frac{1}{2 \eta} ( \sqrt{\eta} \langle e, G^+ Z \rangle - \langle e, G^+ (\hat{y} - \hat{x}) \rangle)^2 + \frac{|\sqrt{\eta} \langle (G^{1/2})^+ e, Z \rangle|^2}{2 \eta} \bigg )\\
    &= \frac{\phi_{\langle e, G^+ (\hat{y} - \hat{x}) \rangle, \eta }(\sqrt{\eta} \langle (G^{1/2})^+ e , Z \rangle)}{\phi_{0, \eta}(\sqrt{\eta} \langle (G^{1/2})^+ e , Z \rangle)}\\
    &= \frac{\phi_{\langle e, G^+ \hat{y} \rangle, \eta}(\langle e, G^+ X' \rangle)}{\phi_{\langle e, G^+ \hat{x} \rangle, \eta}(\langle e, G^+ X' \rangle)}.
\end{align*}
From this, we deduce that \(\langle e, G^+ X' \rangle, \langle e, G^+ Y' \rangle\) are coupled as in \eqref{eq:coupling_oned_1}, \eqref{eq:coupling_oned_2}. The equivalence follows by setting
\begin{gather}
    t' = \langle e, G^+ X' \rangle, \qquad s' = \langle e, G^+ Y' \rangle \label{eq:coupling_one_to_multi_1}\\
    t = \langle e, G^+ \hat{x} \rangle, \qquad s = \langle e, G^+ \hat{y} \rangle, \qquad z = \langle (G^{1/2})^+ e, Z \rangle.
\end{gather}
% Since \(\langle e, Z \rangle\) and \(Z - \langle e, Z \rangle e\) are independent Gaussian random variables and \(\|Z\|^2 = |\langle e, Z \rangle|^2 + \|Z - \langle e, Z \rangle e\|^2\) we can conclude this equivalence once we set \(\tilde{m}^2 = m^2\). 
Through this equivalence, we can extend the previous lemma to obtain the following result about the high dimensional coupling.

\begin{lemma}\label{lem:coupling_property}
For the coupling defined in \eqref{eq:contraction_coupling}, we obtain that for \(\eta \leq 4 m^2\), we have the following:
\begin{equation*}
    \E[R'] = \hat{r}, \qquad \E \Big [ (R' - \hat{r})^2 \mathbbm{1}_{R' \in I_{\hat{r}}} \Big ] \geq \frac{1}{2} c_0 \min(\sqrt{\eta}, \hat{r}) \sqrt{\eta},
\end{equation*}
where \(c_0\) and \(I_r\) is as in Lemma \ref{lem:coupling_oned}.
\end{lemma}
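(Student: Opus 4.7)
The plan is to reduce Lemma \ref{lem:coupling_property} to the one-dimensional result of Lemma \ref{lem:coupling_oned} via the identifications already set up in the discussion preceding the statement, namely
\[
t = \langle e, G^+ \hat{x}\rangle,\quad s = \langle e, G^+ \hat{y}\rangle,\quad t' = \langle e, G^+ X'\rangle,\quad s' = \langle e, G^+ Y'\rangle,\quad z = \langle (G^{1/2})^+ e, Z\rangle,
\]
together with $\tilde m = m$. The excerpt has already verified that under these identifications the marginal laws and all three case-selection rules (the identical branch via Mehler's acceptance ratio, the reflection branch, and the synchronous branch) of the multivariate coupling \eqref{eq:contraction_coupling} coincide with the one-dimensional coupling \eqref{eq:coupling_oned_1}--\eqref{eq:coupling_oned_2}. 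What remains is to translate $\hat r$ and $R'$ into the 1D quantities $|t-s|$ and $|t'-s'|$.

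First I would record the trivial identity $|t-s| = \hat r$: by definition of $e = (\hat x - \hat y)/\|\hat x - \hat y\|_{G^+}$ one has $\hat x - \hat y = \hat r \, e$ and $\|e\|_{G^+}^2 = \langle e, G^+ e\rangle = 1$, so $t-s = \hat r \langle e, G^+ e\rangle = \hat r$. In particular $I_{|t-s|} = I_{\hat r}$.

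The main step is to show that $R' = |t'-s'|$ in each of the three branches of \eqref{eq:contraction_coupling}. The identical branch ($Y' = X'$) and the synchronous branch ($Y' = Y'_s$) are immediate: in the latter $X' - Y'_s = \hat x - \hat y = \hat r\, e$ so $R' = \hat r = t'-s'$. The reflection branch is where the computation must be done carefully. Writing
\[
X' - Y'_r \;=\; (\hat x - \hat y) \;+\; 2\sqrt{\eta}\,\langle (G^{1/2})^+ e, Z\rangle\, G^{1/2}(G^{1/2})^+ e,
\]
applying $(G^{1/2})^+$ and using the identity $(G^{1/2})^+ G^{1/2}(G^{1/2})^+ = (G^{1/2})^+$ gives
\[
(G^{1/2})^+(X' - Y'_r) \;=\; (G^{1/2})^+(\hat x - \hat y) \;+\; 2\sqrt{\eta}\,\langle (G^{1/2})^+ e, Z\rangle\, (G^{1/2})^+ e.
\]
Since $e$ is parallel to $\hat x - \hat y$, the two vectors $(G^{1/2})^+(\hat x - \hat y)$ and $(G^{1/2})^+ e$ are parallel, with $\|(G^{1/2})^+ e\| = \|e\|_{G^+} = 1$ and $\|(G^{1/2})^+(\hat x - \hat y)\| = \hat r$. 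Taking the Euclidean norm collapses to the one-dimensional expression
\[
R' \;=\; \bigl|\,\hat r + 2\sqrt{\eta}\,\langle (G^{1/2})^+ e, Z\rangle\,\bigr|,
\]
which matches $|t'-s'|$ computed in the excerpt as $\hat r + 2\sqrt{\eta}\,z$ (up to sign). The principal obstacle is this pseudoinverse bookkeeping: one must check that although $\|\cdot\|_{G^+}$ is only a seminorm, the components of $X'-Y'_r$ outside the range of $G^{1/2}$ are invisible to it, so $R'$ is genuinely determined by its one-dimensional projection along $(G^{1/2})^+ e$.

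With $R' = |t'-s'|$ and $|t-s| = \hat r$ established, and the coupling of $(t',s')$ already identified as the 1D coupling, both conclusions follow directly from Lemma \ref{lem:coupling_oned}: the first part gives $\E[R'] = \E[|t'-s'|] = |t-s| = \hat r$, and the second gives
\[
\E\bigl[(R' - \hat r)^2 \mathbbm{1}_{R' \in I_{\hat r}}\bigr] \;=\; \E\bigl[(|t'-s'| - |t-s|)^2 \mathbbm{1}_{|t'-s'|\in I_{|t-s|}}\bigr] \;\geq\; \tfrac{1}{2} c_0 \min(\sqrt{\eta}, \hat r)\sqrt{\eta},
\]
provided $\eta \leq 4 m^2$, as required. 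No further estimates are needed.
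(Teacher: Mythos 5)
Your proposal is correct and follows essentially the same route as the paper: reduce to the one-dimensional result of Lemma~\ref{lem:coupling_oned} by verifying that $R' = |t'-s'|$ and $\hat r = |t-s|$ under the identifications set up before the statement. The only presentational difference is that the paper introduces a $\langle\cdot,\cdot\rangle_{G^+}$-orthonormal basis $\{e_i\}$ with $e_1 = e$ and checks that the components $\langle e_i, G^+(X'-Y')\rangle$ vanish for $i\geq 2$, whereas you compute $X'-Y'_r$ directly and observe that $(G^{1/2})^+(X'-Y'_r)$ is a scalar multiple of $(G^{1/2})^+e$; these are two phrasings of the same orthogonality fact, and both correctly dispatch the pseudoinverse bookkeeping.
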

\begin{proof}
Let \(\{e_i\}_{i=1}^n\) be a basis of \(\R^n\) with respect to the inner product \(\langle \cdot, \cdot \rangle_{G^+}\) with \(e_1 = e\). Then, we have that
\begin{align}
    (R')^2 &= \sum_{i=1}^n |\langle e_i, G^+ (X' - Y') \rangle|^2 \nonumber\\
    &= |t' - s'|^2 + \sum_{i=2}^n |\langle e_i, G^+ (X' - Y') \rangle|^2,\label{eq:coupling_property_2}
\end{align}
where \(t', s'\) are as defined in \eqref{eq:coupling_one_to_multi_1}. For any \(i \neq 1\), we can use that \(e_i \perp e\), to obtain that
\begin{align*}
    \langle e_i, G^+ Y_r' \rangle &= \langle e_i, G^+ \hat{y} \rangle + \sqrt{h} \langle e_i, e \rangle + 2 \sqrt{h} \langle e_i, Z \rangle\\
    &= \langle e_i, G^+ \hat{y} \rangle + 2 \sqrt{h} z.
\end{align*}
From this, we obtain that,
\begin{equation*}
    \langle e_i, G^+ (X' - Y_r') \rangle = \langle e_i, G^+ (\hat{x} - \hat{y}) \rangle = 0.
\end{equation*}
This also holds for the synchronous coupling and hence we obtain \(\langle e_i, G^+ (X' - Y') \rangle = 0\). Combined with \eqref{eq:coupling_property_2}, we obtain that \(R' = |t' - s'|\). Similarly it can be shown that \(\hat{r} = |t - s|\) and thus, from Lemma \ref{lem:coupling_oned}, the statement of the lemma follows.
\end{proof}

\subsection{Proof of Proposition \ref{prop:contraction}}

We begin by considering the setting where \(Z'\) is truncated Gaussian noise and that \(b = \tilde{b}\), \(\sigma = \tilde{\sigma}\). We will then extend this to the more general setting in Section \ref{sec:contraction_full_noise}. We begin by decomposing \(\xi \sim N(0, I_d)\) in to directions parallel and perpendicular to the radial vector,
\begin{equation*}
    \xi_1 = v v^T \xi, \qquad \xi_2 = (I - v v^T) \xi, \qquad v = \frac{\tilde{x} - \tilde{y}}{\|\tilde{x} - \tilde{y}\|_{G^+}}.
\end{equation*}
We then clip each direction according to constants \(\bar{z}_1, \bar{z}_2 > 0\) and add them together:
\begin{equation}\label{eq:contraction_gauss_trunc}
    Z' = (1 \wedge \bar{z}_1 \|\xi_1\|_{G^+}^{-1}) \xi_1 + (1 \wedge \bar{z}_2 \|\xi_2\|_{G^+}^{-1}) \xi_2.
\end{equation}

To prove that the process is contractive, we consider two cases based on the initial distance \(r\).

\subsubsection{The case of \(r \geq r_1\)}

When \(r\) is large, we can rely on contractive properties following from the weight decay. For this, we obtain the following.
\begin{lemma}\label{lem:contraction_rhat}
Suppose that Assumption \ref{ass:contraction} holds and that \(4 \bar{z}_1 \leq \lambda L_\sigma^{-1} \sqrt{\eta}, 2 \bar{z}_2 \leq \sqrt{\lambda} L_\sigma, \eta \leq \lambda^{-1}\). Then whenever \(r \geq 4 B / \lambda\), we have
\begin{equation}\label{contraction_rhat_0}
    \hat{r} \leq \bigg ( 1 - \frac{\eta \lambda}{8} \bigg ) r,
\end{equation}
and when \(r < 4B/\lambda\),
\begin{equation}\label{contraction_rhat_1}
    \hat{r} \leq (1 + \eta L) r,
\end{equation}
where \(L = 2 (L_b - \lambda)_+ + 4 \eta^{-1/2} L_\sigma \bar{z}_1\).
\end{lemma}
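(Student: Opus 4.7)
The strategy is to apply the triangle inequality for the seminorm $\|\cdot\|_{G^+}$ to the decomposition
$$\hat{x} - \hat{y} = (1 - \eta\lambda)(x - y) + \eta\bigl(b(x) - b(y)\bigr) + \sqrt{\eta}\bigl(\sigma(x) - \sigma(y)\bigr) Z',$$
and control each of the three summands separately. The first contributes exactly $(1 - \eta\lambda)r$. For the drift contribution I would use two different bounds on $\|b(x) - b(y)\|_{G^+}$: the boundedness estimate $\|b(x) - b(y)\|_{G^+} \leq 2B$ in the large-$r$ regime, and the Lipschitz estimate $\leq L_b r$ in the small-$r$ regime. For the diffusion contribution, I would use the operator-norm Lipschitz hypothesis on $\sigma$ to obtain $\|(\sigma(x) - \sigma(y)) Z'\|_{G^+} \leq L_\sigma r\, \|Z'\|_{G^+}$.

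The crucial step is bounding the contribution of $Z'$ via the truncation in \eqref{eq:contraction_gauss_trunc}. Since $\xi_1, \xi_2$ are orthogonal with respect to $\langle\cdot,\cdot\rangle_{G^+}$ by construction, the parallel part $Z'_\parallel$ has $G^+$-norm bounded by $\bar{z}_1$ and the perpendicular part $Z'_\perp$ has $G^+$-norm bounded by $\bar{z}_2$. More importantly, only the parallel component truly enters at first order: decomposing $Z'$ along $\tilde{x} - \tilde{y}$ and expanding $\|\hat{x} - \hat{y}\|_{G^+}^2$, the perpendicular component appears only in the quadratic term $\eta\|(\sigma(x)-\sigma(y))Z'_\perp\|_{G^+}^2 \leq \eta L_\sigma^2 r^2 \bar{z}_2^2$, which the hypothesis $2\bar{z}_2 \leq \sqrt{\lambda} L_\sigma$ (together with $\eta \leq \lambda^{-1}$) is designed to absorb into the $\eta\lambda$-scale correction. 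This explains why only $\bar{z}_1$ appears in the final exponent $L$ in \eqref{contraction_rhat_1}.

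For case \eqref{contraction_rhat_0}, when $r \geq 4B/\lambda$, I combine the weight-decay gain $(1 - \eta\lambda) r$ with the drift bound $2\eta B \leq (\eta\lambda/2) r$ and the diffusion bound controlled by $4\bar{z}_1 \leq \lambda L_\sigma^{-1}\sqrt{\eta}$, which yields a first-order contribution $\sqrt{\eta} L_\sigma \bar{z}_1 r \leq (\eta\lambda/4) r$. Tightened bookkeeping, together with absorption of the quadratic $\bar{z}_2$-term into the residual $O(\eta\lambda)$ correction, then delivers the claimed $(1 - \eta\lambda/8) r$. For case \eqref{contraction_rhat_1}, when $r < 4B/\lambda$, I instead use the Lipschitz bound for the drift, giving $\|\tilde{x} - \tilde{y}\|_{G^+} \leq (1 + \eta(L_b - \lambda)_+) r$; adding the diffusion bound $\sqrt{\eta} L_\sigma \bar{z}_1 r$ produces the exponent $L = 2(L_b - \lambda)_+ + 4\eta^{-1/2} L_\sigma \bar{z}_1$ stated in the lemma, up to combining constants.

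The main technical obstacle is the bookkeeping: tracking numerical constants so that the various $\eta\lambda$-scale terms combine to exactly $\eta\lambda/8$ in case~1, and—more substantively—verifying that the perpendicular-noise contribution is genuinely second-order in $\eta$ so that it can be absorbed into the weight-decay term without contributing to $L$. This is what motivates the somewhat unusual scaling of the hypothesis $2\bar{z}_2 \leq \sqrt{\lambda} L_\sigma$, which, combined with $\eta \leq \lambda^{-1}$, ensures $\eta L_\sigma^2 \bar{z}_2^2 \ll \eta \lambda$.
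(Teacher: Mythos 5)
Your proposal follows the same route as the paper: split into the two cases according to whether $r \geq 4B/\lambda$, use the boundedness of $b$ in the large-$r$ regime and the Lipschitz bound in the small-$r$ regime, and then absorb the diffusion contribution by decomposing the truncated noise into parallel and perpendicular components and observing that the perpendicular part contributes only at order $\eta$. You also correctly read off from the structure of the hypotheses that $\bar{z}_2$ is meant to disappear into the $O(\eta\lambda)$ residual while $\bar{z}_1$ survives into $L$, which is exactly why the paper's $L$ contains only $\bar{z}_1$.

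One thing to clean up: your opening sentence announces that you will ``apply the triangle inequality for the seminorm'' to the three-term decomposition of $\hat{x}-\hat{y}$, but as you then acknowledge mid-proof, the plain triangle inequality does \emph{not} work here --- it would give $\hat{r}\leq \tilde{r}+\sqrt{\eta}\,L_\sigma\,r\,(\bar{z}_1+\bar{z}_2)$, putting $\bar{z}_2$ at the same order as $\bar{z}_1$ and contradicting the stated form of $L$. The entire argument must run at the level of the \emph{squared} seminorm, with the claimed orthogonal splitting $\hat{r}^2 = \|\tilde{x}-\tilde{y} + \sqrt{\eta}(\sigma(x)-\sigma(y))(\text{trunc }\xi_1)\|_{G^+}^2 + \eta\|(\sigma(x)-\sigma(y))(\text{trunc }\xi_2)\|_{G^+}^2$, exactly as the paper does in its display for $\hat{r}^2$. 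After that, you need the cross-term bound $2\sqrt{\eta}\langle \tilde{x}-\tilde{y},(\sigma(x)-\sigma(y))Z'_\parallel\rangle_{G^+} \leq 2\sqrt{\eta}\,L_\sigma\,\bar{z}_1\,\tilde{r}^2$ for the parallel piece, and the conditions $4\bar{z}_1\leq \lambda L_\sigma^{-1}\sqrt{\eta}$, $2\bar{z}_2\leq\sqrt{\lambda}\,L_\sigma$ then absorb the remaining quadratic terms; squaring back out with $(1-\eta\lambda/4)^{1/2}\leq 1-\eta\lambda/8$ and $\sqrt{1+\eta L}\leq 1+\eta L$ gives the two displayed inequalities. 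You gesture at all this but leave the bookkeeping implicit; it is worth being explicit that the triangle inequality alone would yield the wrong form, so the squared-norm expansion is the load-bearing step rather than a refinement.
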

\begin{proof}
From the triangle inequality and the Lipschitz property of \(b\), we obtain
\begin{align*}
    \tilde{r} &\leq (1 - \eta \lambda) \|x - y\|_{G^+} + \eta \|b(x) - b(y)\|_{G^+}\\
    &\leq (1 + \eta (L_b - \lambda)_+) r.
\end{align*}
Alternatively, we can use the fact that \(\|b\|_{G^+} \leq B\) to obtain, \(\tilde{r} \leq (1 - \eta \lambda) r + 2 \eta B\). In particular, if \(r \geq 4B/\lambda\), we obtain \(\tilde{r} \leq (1 - \eta \lambda / 2) r\). Next we bound \(\hat{r}\) using the decomposition,
\begin{align}
    \hat{r}^2 &= \|\tilde{x} - \tilde{y} + \sqrt{\eta}(\sigma(x) - \sigma(y)) Z'\|_{G^+}^2 \nonumber\\
    &= \|\tilde{x} - \tilde{y} + \sqrt{\eta}(\sigma(x) - \sigma(y)) (1 \wedge \bar{z}_1 \|\xi_1\|_{G^+}^{-1})\xi_1\|_{G^+}^2 + \|\sqrt{\eta}(\sigma(x) - \sigma(y)) (1 \wedge \bar{z}_2 \|\xi_2\|_{G^+}^{-1})\xi_2\|_{G^+}^2 \nonumber\\
    &\leq \|\tilde{x} - \tilde{y} + \sqrt{\eta}(\sigma(x) - \sigma(y)) (1 \wedge \bar{z}_1 \|\xi_1\|_{G^+}^{-1})\xi_1\|_{G^+}^2 \label{eq:contraction_rhat_2}
\end{align}
The second term is then bounded by,
\begin{align}
    \|\sqrt{\eta}(\sigma(x) - \sigma(y)) (1 \wedge \bar{z}_2 \|\xi_2\|_{G^+}^{-1})\xi_2\|_{G^+}^2 &\leq \eta \|\sigma(x) - \sigma(y)\|_{op, G^+} (1 \wedge \bar{z}_2 \|\xi_2\|_{G^+}^{-1})^2 \|\xi_2\|_{G^+}^2 \nonumber\\
    &\leq \eta L_\sigma^2 \bar{z}_2^2 r^2 \label{eq:contraction_rhat_3},
\end{align}
and the first term is bounded by,
\begin{align}
    \|\tilde{x} - \tilde{y} + \sqrt{\eta}(\sigma(x) - \sigma(y)) (1 \wedge \bar{z}_1 \|\xi_1\|_{G^+}^{-1})\xi_1\|_{G^+}^2 &\leq \tilde{r}^2 + \eta L_\sigma^2 \bar{z}_1^2 r^2 + 2 \sqrt{\eta} L_\sigma \langle v, G^+ \xi_1 \rangle \tilde{r}^2 \nonumber\\
    &\leq (1 + 2 \sqrt{\eta} L_\sigma \bar{z}_1) \tilde{r}^2 + \eta L_\sigma^2 \bar{z}_1^2 r^2. \label{eq:contraction_rhat_4}
\end{align}
We substitute \eqref{eq:contraction_rhat_3} and \eqref{eq:contraction_rhat_4} in to \eqref{eq:contraction_rhat_2} to obtain
\begin{align*}
    \hat{r}^2 &\leq (1 + 2 \sqrt{\eta} L_\sigma \bar{z}_1) \tilde{r}^2 + \eta L_\sigma^2 (\bar{z}_1^2 + \bar{z}_2^2) r^2\\
    &\leq (1 + \eta (L_b - \lambda)_+)^2 (1 + 2 \sqrt{\eta} L_\sigma \bar{z}_1) r^2 + \eta L_\sigma^2 (\bar{z}_1^2 + \bar{z}_2^2) r^2\\
    &\leq (1 + \eta (L_b - \lambda)_+ + 2 \eta^{3/2} (L_b - \lambda)_+ L_\sigma \bar{z}_1 + 2 \sqrt{\eta} L_\sigma \bar{z}_1 + \eta L_\sigma^2 (\bar{z}_1^2 + \bar{z}_2^2) ) r^2\\
    &\leq (1 + 2\eta (L_b - \lambda)_+ + 4 \sqrt{\eta} L_\sigma \bar{z}_1) r^2,
\end{align*}
where we have used that \(2 \eta^{1/2} L_\sigma \bar{z}_1 \leq 1, \eta^{1/2} L_\sigma (z_1^2 + z_2^2) \leq \bar{z}_1\), producing the bound in \eqref{contraction_rhat_1}. In the case that \(r \leq 4 B / \lambda\), we can use the fact that \(2 \eta^{1/2} L_\sigma \bar{z}_1 \leq \eta \lambda / 2\) and \(L_\sigma^2 (\bar{z}_1^2 + \bar{z}_2^2) \leq \lambda / 4\) to refine this bound:
\begin{align*}
    \hat{r}^2 &\leq (1 - \eta \lambda / 2)^2 (1 + 2 \sqrt{\eta} L_\sigma \bar{z}_1) r^2 + \eta L_\sigma^2 (z_1^2 + z_2^2) r^2\\
    &\leq (1 - \eta \lambda / 2) (1 - \eta^2 \lambda^2 / 4)^2 r^2 + \eta L_\sigma^2 (z_1^2 + z_2^2) r^2\\
    &\leq (1 - \eta \lambda / 4) r^2.
\end{align*}
Using the fact that \((1 - \eta \lambda / 4)^{1/2} \leq 1 - \eta \lambda / 8\), we obtain the bound in \eqref{contraction_rhat_0}.
\end{proof}

We will also need a property of \(f\) given in \cite{Majka2020-lp}.
\begin{lemma}\label{lem:contraction_concavity}
The function \(f\) satisfies the property that for all \(r \geq r_2\),
\begin{equation*}
    f \Big ( \Big ( 1 - \tfrac{\eta K}{2} \Big ) r \Big ) - f(r) \leq - \eta c f(r).
\end{equation*}
\end{lemma}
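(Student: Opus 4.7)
The target inequality can be rewritten as
\[
\frac{f(r) - f((1-\eta K/2)r)}{f(r)} \geq \eta c, \qquad r \geq r_2,
\]
so the plan is to lower bound the drop in $f$ over the interval $[\hat r, r]$ where $\hat r := (1-\eta K/2)r$. The structure of $f$ makes this a one-variable calculus exercise: on $[r_2, \infty)$ the function is the quadratic $Q + P(r^2 - r_2^2)$ with $Q = \tfrac{1}{a}(1-e^{-ar_2})$, $P = \tfrac{e^{-ar_2}}{2r_2}$; its derivative $f'(r) = re^{-ar_2}/r_2$ makes it convex. On $[0, r_2]$ it is the concave $(1-e^{-ar})/a$ with $f'(r) = e^{-ar}$. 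The two pieces glue $C^1$-smoothly at $r_2$, where $f'$ attains its global minimum $e^{-ar_2}$.

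I would split into two cases depending on where $\hat r$ lands. \textbf{Case 1} ($\hat r \geq r_2$): both endpoints lie in the convex region, so the tangent-line inequality gives
\[
f(r) - f(\hat r) \;\geq\; f'(\hat r)(r - \hat r) \;=\; \frac{\hat r\, e^{-ar_2}}{r_2}\cdot\frac{\eta K r}{2} \;\geq\; \frac{\eta K r^2 e^{-ar_2}}{4 r_2},
\]
using $\hat r \geq r/2$ (valid since $\eta K \leq 1$, which will follow from the step-size assumption $\eta_0 \leq 16/\lambda$). Dividing by $f(r) \leq Q + Pr^2$, one checks that the resulting ratio, as a function of $r \in [r_2, \infty)$, is minimized at $r = r_2$, yielding a lower bound proportional to $\eta\, r_2 e^{-ar_2}\lambda/Q$, which is precisely the second constant in the definition of $c$ in Proposition~\ref{prop:contraction}; as $r \to \infty$ the ratio tends to $\eta K/2$, controlled by the first constant in that definition.

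\textbf{Case 2} ($\hat r < r_2 \leq r$): $r$ is confined to $[r_2,\, r_2/(1-\eta K/2)] \subseteq [r_2, 2r_2]$, so $f(r)$ lies in a bounded interval. Splitting
\[
f(r) - f(\hat r) \;=\; \bigl[f(r) - f(r_2)\bigr] + \bigl[f(r_2) - f(\hat r)\bigr]
\]
and using the universal lower bound $f'(s) \geq e^{-ar_2}$ on $[0, \infty)$, both differences are at least $e^{-ar_2}$ times the corresponding increments; summing gives $f(r) - f(\hat r) \geq e^{-ar_2}(r - \hat r) = e^{-ar_2} \eta K r / 2$. Since $f(r)$ is uniformly bounded in this regime, dividing produces a lower bound of the required form.

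The main obstacle is simply the bookkeeping: tracking the constants through both cases and verifying they align with the minimum in the definition of $c$ in Proposition~\ref{prop:contraction}. Conceptually the argument is straightforward once one exploits (i) convexity of $f$ on $[r_2, \infty)$ via the tangent bound, and (ii) the universal positive lower bound $f' \geq e^{-ar_2}$ that bridges the concave piece to the convex piece.
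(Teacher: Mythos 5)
The paper does not actually prove this lemma; it is cited verbatim from \citet{Majka2020-lp}, so there is no in-paper argument to compare against. Your proposal supplies a self-contained direct proof, and its structure --- exploit the explicit piecewise form of $f$, split on whether the shrunk point $\hat r=(1-\eta K/2)r$ falls into the concave piece or stays in the quadratic piece, use convexity (tangent bound) in the second region and the global lower bound $f'\geq e^{-ar_2}$ to bridge the joint --- is the natural and essentially correct route.

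Two small bookkeeping slips are worth flagging. First, in Case~1 you minimise the ratio over $r\in[r_2,\infty)$ and state the minimum is at $r=r_2$, but the Case~1 hypothesis $\hat r\geq r_2$ forces $r\geq r_2/(1-\eta K/2)>r_2$, so $r=r_2$ is not in the Case~1 range; this is harmless since the ratio $r\mapsto\dfrac{r^2}{Q+Pr^2}$ is increasing (its derivative is $2rQ/(Q+Pr^2)^2>0$), so the value at $r_2$ is still a valid lower bound, but the logic should be stated in that order. Second, the remark that ``as $r\to\infty$ the ratio tends to $\eta K/2$, controlled by the first constant'' reads as if that limit were a lower bound, whereas by monotonicity it is the supremum; the infimum is attained at the left endpoint, and that value (using $P=e^{-ar_2}/2r_2$ and $Pr_2^2\leq Q$, which holds because $e^{x}-1\geq x$) works out to $\eta K\,r_2e^{-ar_2}/(8Q)$, matching the second constant in the paper's definition of $c$ when $K=\lambda/4$. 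With those caveats noted, the argument closes, and the Case~2 analysis (where $r\in[r_2,\,r_2/(1-\eta K/2)]\subseteq[r_2,2r_2]$ so $f(r)\leq Q+3Pr_2^2\leq 4Q$) produces a bound of the same order. You would also want to make explicit the assumption $\eta K\leq 1$ (equivalently $\eta\leq 4/\lambda$ if $K=\lambda/4$); the paper's listed $\eta_0$ constraints include $\eta_0\leq 16/\lambda$, which alone does not give this, so this hypothesis should be recorded as part of the lemma's preconditions rather than deferred.
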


Using the fact that \(f\) is increasing, it follows from lemmas \ref{lem:contraction_rhat} and \ref{lem:contraction_concavity} that,
\begin{equation*}
    f(\hat{r}) \leq f \Big ( \Big ( 1 - \tfrac{\eta K}{2} \Big ) r \Big ) \leq (1 - \eta c) f(r).
\end{equation*}

Thus, to obtain contractions of \(\E[f(R')]\) when \(r \geq r_1\), it is sufficient to show that \(\E[f(R')|Z'] \leq f(\hat{r})\). Note that when \(\hat{r} \geq r_1\) or \(\|\sqrt{\eta} Z\| \geq m\), the synchronous coupling is used and so \(R' = \hat{r}\). Furthermore, if \(\hat{r} < r_1\) and \(\|\sqrt{\eta} Z\| < m\), we have that \(R' \leq r_2\) and thus, using the concavity of \(f\), we deduce that
\begin{equation*}
    \E[f(R')|Z'] - f(\hat{r}) \leq f'(\hat{r}) (\E[R'|Z'] - \hat{r}) = 0.
\end{equation*}
Thus, we have shown that whenever \(r \geq r_1\), \(\E[f(R')|Z'] \leq f(\hat{r})\).

\subsubsection{The case of \(r < r_1\)}
When \(r\) is small we no longer have contractions due to weight decay and must instead rely on properties of the coupling and function. From Taylor's theorem we have the following:
\begin{equation*}
    f(R') - f(\hat{r}) = f'(\hat{r}) (R' - \hat{r}) + \frac{1}{2} \sup_{\theta} f''(\theta) (R' - \hat{r})^2.
\end{equation*}
where the supremum is between all \(\theta \geq 0\) between \(R'\) and \(\hat{r}\). We note that in the present setting, \(\hat{r} \leq r_1 \) also (this follows from Lemma \ref{lem:contraction_rhat}) and furthermore \(R' - \hat{r} \leq 2 m \leq r_2\). Therefore, we can use that \(f\) is concave between \(R'\) and \(\hat{r}\) and so \(f''\) is negative. Using this fact, as well as the fact that \(\E[R'|Z'] = \hat{r}\), we obtain the bound,
\begin{align*}
    \E[f(R')|Z'] - f(\hat{r}) &\leq \frac{1}{2} \E \Big [ \sup_{\theta} f''(\theta) (R' - \hat{r})^2 \mathbbm{1}_{R' \in I_{\hat{r}}} \Big ]\\
    &\leq \frac{1}{2} \sup_{\theta \in I_{\hat{r}}} f''(\theta) \E \Big [ (R' - \hat{r})^2 \mathbbm{1}_{R' \in I_{\hat{r}}} \Big | Z' \Big ]\\
    &\leq \frac{1}{4} \sup_{\theta \in I_{\hat{r}}} f''(\theta) c_0 \min(\sqrt{\eta}, \hat{r}) \sqrt{\eta}.
\end{align*}
Furthermore, we analyse the contractions between \(\hat{r}\) and \(r\) using the fact that the function is concave between these values, obtaining,
\begin{equation*}
    f(\hat{r}) - f(r) \leq f'(r) (\hat{r} - r) \leq f'(r) \eta L r.
\end{equation*}
Since we have the derivative \(f'(r) = e^{-ar} = f'(\hat{r}) e^{-a(r-\hat{r})} \leq f'(\hat{r}) e^{a \eta L r_1}\), it holds that
\begin{equation}\label{eq:f_grad_bound}
    f(\hat{r}) - f(r) \leq f'(\hat{r}) e^{a \eta L r_1} \eta L \hat{r},
\end{equation}
where we have used that \(f(\hat{r}) - f(r) \leq 0\) holds trivially whenever \(r \geq \hat{r}\). Putting these together, we obtain the bound,
\begin{equation*}
    \E[f(R')|Z'] - f(r) \leq f'(\hat{r}) e^{a \eta L r_1} \eta L \hat{r} + \frac{1}{4} \sup_{\theta \in I_{\hat{r}}} f''(\theta) c_0 \min(\sqrt{\eta}, \hat{r}) \sqrt{\eta}.
\end{equation*}
To complete the analysis of this case, we borrow a result from \cite{Majka2020-lp}.
\begin{lemma}
The function \(f\), satisfies the property that for all \(\hat{r} \in [0, r_1]\),
\begin{equation*}
    f'(\hat{r}) e^{a \eta L r_1} \eta L \hat{r} + \frac{1}{4} c_0 \min(\sqrt{\eta}, \hat{r}) \sqrt{\eta} \sup_{I_{\hat{r}}} f''(\hat{r}) \leq - c h f(\hat{r}).
\end{equation*}
\end{lemma}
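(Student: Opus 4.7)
The plan is a direct calculation using the explicit form of $f$ on the relevant interval, combined with a dichotomy on the size of $\hat{r}$ relative to $\sqrt{\eta}$. Since $\hat{r} \in [0, r_1]$ and $I_{\hat{r}} \subseteq [0, r_1 + \sqrt{\eta}] \subseteq [0, r_2]$, throughout the argument $f$ is in its smooth regime $f(r) = a^{-1}(1 - e^{-ar})$, giving $f'(r) = e^{-ar}$ and $f''(r) = -a e^{-ar}$. A useful preliminary observation is that $f''$ is strictly increasing on $[0, r_2]$ (it rises toward zero from below), so the supremum $\sup_{I_{\hat{r}}} f''(\theta)$ is attained at the right endpoint of $I_{\hat{r}}$, and we can drop the sup in favour of an explicit value.

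I would then split into two cases. For $\hat{r} \leq \sqrt{\eta}$, the right endpoint of $I_{\hat{r}}$ is $\hat{r} + \sqrt{\eta} \leq 2\sqrt{\eta}$, so $\sup f'' = -a e^{-a(\hat{r} + \sqrt{\eta})}$ and $\min(\sqrt{\eta}, \hat{r}) = \hat{r}$. For $\hat{r} > \sqrt{\eta}$, the right endpoint is $\hat{r}$, so $\sup f'' = -a e^{-a \hat{r}}$ and $\min(\sqrt{\eta}, \hat{r}) = \sqrt{\eta}$. In both regimes, the crux is to compare a positive term of order $\eta L f'(\hat{r}) \hat{r}$ against a negative term of order $a c_0 \eta f'(\hat{r}) (\cdot)$. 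This is precisely where the choice $a = 6 L_b r_1 / c_0$ earns its keep: the factor $a$ in the second-order term provides the $c_0$-sized slack needed to dominate the first-order contribution.

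To close the argument in the regime $\hat{r} > \sqrt{\eta}$, I would factor $e^{-a\hat{r}}$ out of both terms and use $\hat{r} \leq r_1$ together with $e^{a \eta L r_1} \leq 3/2$ (which follows from the bound $\eta_0 \leq c_0^2 (\log(3/2))^2 \lambda^2 / (2304 L^2 B^2)$ in Proposition~\ref{prop:contraction}) to obtain a net bound proportional to $-e^{-a\hat{r}} \sqrt{\eta} \cdot a c_0 / 8$. Dividing through by $\eta$ and comparing against $c f(\hat{r}) \leq c a^{-1}(1 - e^{-a\hat{r}})$ is then an elementary exercise that matches the slot $\tfrac{3 L r_1}{16 \sqrt{\eta_0}}$ in the definition of $c$. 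In the regime $\hat{r} \leq \sqrt{\eta}$, I would instead factor $\hat{r}$ out and bound $f(\hat{r}) \leq \hat{r}$; after optimising the ratio between $e^{-a(\hat{r}+\sqrt{\eta})}$ and $e^{a \eta L r_1} e^{-a\hat{r}}$, the resulting $c$ lines up with $\tfrac{9 L^2 r_1^2}{2 c_0} e^{-6 L r_1^2 / c_0}$.

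The main obstacle is bookkeeping rather than mathematical depth: $c$ is defined as a minimum over four distinct expressions, each tuned to a different regime of the overall contraction argument (the first two from the $\hat{r} \geq r_1$ case handled via Lemma~\ref{lem:contraction_concavity}, the last two from this lemma), and verifying that every constraint on $\eta_0$ in Proposition~\ref{prop:contraction} is invoked correctly requires patience. Continuity at the threshold $\hat{r} = \sqrt{\eta}$ would need a brief check to ensure the two case analyses patch together without gap, and one must be careful that the constants $B, L, L_b$ appearing in $a$, $r_1$, and $L$ line up consistently with the definitions inherited from Proposition~\ref{prop:contraction} (in particular that $L \geq L_b$, so $a$ can be bounded below using $L$).
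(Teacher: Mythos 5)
The paper does not prove this lemma; it is stated as ``a result from \cite{Majka2020-lp}'' and left unproved, so there is no in-paper argument to compare against. Your sketch reconstructs the natural direct calculation---restrict to the exponential branch of $f$ on $[0,r_2]$, use monotonicity of $f''(r) = -a e^{-ar}$ to pin the supremum over $I_{\hat{r}}$ to its right endpoint, and split on $\min(\sqrt{\eta},\hat{r})$---and this is presumably the shape of the argument in the cited source. (You also implicitly correct the typo $h$ for $\eta$ in the displayed inequality.)

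That said, the sketch leaves a genuine gap in the regime $\hat{r} > \sqrt{\eta}$. There the left-hand side factors as $\eta e^{-a\hat{r}}\bigl(e^{a\eta L r_1}L\hat{r} - c_0 a/4\bigr)$, and with $a = 6Lr_1/c_0$ and $e^{a\eta L r_1}\le 3/2$, the crude bound $\hat{r}\le r_1$ gives the bracket $\le (3/2)Lr_1 - (3/2)Lr_1 = 0$, with no margin at all as $\hat{r}\to r_1$. But the target $-c\eta f(\hat{r})$ is bounded away from zero there, so an estimate that degenerates near $\hat{r}=r_1$ cannot suffice; your stated intermediate bound proportional to $-e^{-a\hat{r}}\eta\,a c_0/8$ would require $e^{a\eta L r_1}\hat{r}\le (3/4)r_1$, which fails for $\hat{r}$ close to $r_1$. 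Closing this case needs either a strict gap $e^{a\eta L r_1}\le 3/2 - \delta$ extracted from the $\eta_0$-constraints, or a sharper comparison retaining the exact $f(\hat{r}) = a^{-1}(1-e^{-a\hat{r}})$ rather than the bound $\hat{r}\le r_1$. Separately, your parenthetical ``$L\ge L_b$, so $a$ can be bounded below using $L$'' is backward: with $a = 6L_br_1/c_0$ as the paper writes it, $L\ge L_b$ yields $a\le 6Lr_1/c_0$, an \emph{upper} bound, whereas the cancellation needs $a\ge 6Lr_1/c_0$, i.e.\ $L_b\ge L$, which is not implied by the definitions (the $4\eta^{-1/2}L_\sigma\sqrt{2(n-1)}$ term inside $L$ can be large). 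This should be flagged as a likely typo in the paper's definition of $a$ (which should read $L$, consistent with $L$ appearing in the lemma and in the definition of $c$) rather than treated as something that lines up on its own.
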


Between this section and the previous, we have shown that for any \(x, y \in \R^n\),
\begin{equation*}
    \E[f(R')] \leq (1 - \eta c /2) f(r),
\end{equation*}
in the setting where \(Z'\) is the truncated Gaussian defined in \eqref{eq:contraction_gauss_trunc} and \(b = \tilde{b}, \sigma = \tilde{\sigma}\).

\subsubsection{Full noise and inaccurate drift}\label{sec:contraction_full_noise}

We now consider the more general case of \(b \neq \tilde{b}\), \(\sigma \neq \tilde{\sigma}\) necessarily and also set \(Z' = \xi\), so that it is Gaussian distributed. We do this by borrowing the contraction analysis above. We use the notation \(R'' = \|X' - Y'\|_{G^+}\) to not confuse it with \(R'\) used above. We obtain,
\begin{align*}
    R'' &\leq R' + \eta \|b(y) - \tilde{b}(y)\|_{G^+} + \sqrt{\eta} \|(\sigma(y) - \tilde{\sigma}(y)) \xi\|_{G^+} + \sqrt{\eta}\|(\sigma(x) - \sigma(y))(Z' - \xi_1 - \xi_2)\|_{G^+}\\
    &\leq R' + \eta \tilde{B} + \sqrt{\eta} \tilde{B}_\sigma \|\xi\|_{G^+} + \sqrt{\eta} \|\sigma(x) - \sigma(y)\|_{op, G^+} \|Z' - (1 \wedge \bar{z}_1 \|\xi_1\|_{G^+}^{-1}) \xi_1 - (1 \wedge \bar{z}_2 \|\xi_2\|^{-1}) \xi_2\|_{G^+}\\
    &\leq R' + \eta \tilde{B} + \sqrt{\eta} \tilde{B}_\sigma \|\xi\|_{G^+} + \sqrt{\eta} L_\sigma r ( \|\xi_1\|_{G^+} \mathbbm{1}_{\|\xi_1\|_{G^+} \geq \bar{z}_1} + \|\xi_2\|_{G^+} \mathbbm{1}_{\|\xi_2\|_{G^+} \geq \bar{z}_2}).
\end{align*}
We use the following stability bound for the function \(f\) given in the proof of Theorem 2.5 in \cite{Majka2020-lp}.

\begin{lemma}\label{lem:contraction_f_bound}
For any \(t, s \geq 0\), we have
\begin{equation*}
    f(t) - f(s) \leq (r_2^{-1} e^{-ar_2} (t \vee s) + 1) |t - s|.
\end{equation*}
\end{lemma}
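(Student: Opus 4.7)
The plan is to prove the bound by establishing a pointwise upper bound on $f'$ on the relevant interval and then integrating. Without loss of generality assume $t \geq s \geq 0$, so $t \vee s = t$ and $|t - s| = t - s$. First I would verify that $f$ is continuously differentiable on $[0, \infty)$: on $[0, r_2)$ the derivative is $f'(u) = e^{-au}$, and on $(r_2, \infty)$ it is $f'(u) = r_2^{-1} e^{-ar_2} u$, and these agree at $u = r_2$ with common value $e^{-ar_2}$.

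Next I would establish the pointwise bound $f'(u) \leq 1 + r_2^{-1} e^{-ar_2} t$ for all $u \in [s, t]$. This follows from two trivial observations: for $u \leq r_2$ we have $f'(u) = e^{-au} \leq 1$, so the bound holds since the right-hand side is at least $1$; and for $u > r_2$ we have $f'(u) = r_2^{-1} e^{-ar_2} u \leq r_2^{-1} e^{-ar_2} t$ by monotonicity, which is also dominated by the right-hand side. Both cases together give the uniform bound on $[s, t]$.

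Finally, I would apply the fundamental theorem of calculus:
\begin{equation*}
    f(t) - f(s) = \int_s^t f'(u)\, du \leq \bigl( 1 + r_2^{-1} e^{-ar_2} t \bigr) (t - s),
\end{equation*}
which is exactly the claimed bound after rewriting $t = t \vee s$ and $t - s = |t - s|$. No step here is really an obstacle; the only thing to be slightly careful about is the case $s \leq r_2 < t$, where the integral splits across the break-point of $f$, but since we use a single uniform upper bound on $f'$ over the whole interval, the split handles itself automatically and no piecewise bookkeeping is required.
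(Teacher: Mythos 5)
Your proof is correct. Since the paper defers to \citet{Majka2020-lp} for this lemma without reproducing a proof, there is no in-text argument to compare against; your derivation via the fundamental theorem of calculus with a uniform pointwise bound on $f'$ over $[s,t]$ is the natural and most elementary route. The verification that $f$ is $C^1$ at $r_2$ (both one-sided derivatives equal $e^{-ar_2}$), the observation that $f'(u)\leq 1$ on $[0,r_2]$ and $f'(u)=r_2^{-1}e^{-ar_2}u\leq r_2^{-1}e^{-ar_2}t$ on $(r_2,t]$, and the trivial handling of the $s>t$ case via monotonicity of $f$ together close the argument with no gaps.
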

Thus, the difference between \(f(R'')\) and \(f(R')\) is given by,
\begin{align}
    f(R'') - f(R') &\leq f(R' + \eta \tilde{B} + \sqrt{\eta} \tilde{B}_\sigma \|\xi\|_{G^+} + \sqrt{\eta} L_\sigma r ( \|\xi_1\|_{G^+} \mathbbm{1}_{\|\xi_1\|_{G^+} \geq \bar{z}_1} + \|\xi_2\|_{G^+} \mathbbm{1}_{\|\xi_2\|_{G^+} \geq \bar{z}_2})) - f(R') \nonumber\\
    &\leq (r_2^{-1} e^{-ar_2} (R' + \eta \tilde{B} + \sqrt{\eta} \tilde{B}_\sigma \|\xi\|_{G^+} + \sqrt{\eta} L_\sigma r ( \|\xi_1\|_{G^+} \mathbbm{1}_{\|\xi_1\|_{G^+} \geq \bar{z}_1} + \|\xi_2\|_{G^+} \mathbbm{1}_{\|\xi_2\|_{G^+} \geq \bar{z}_2})) \nonumber\\
    & \qquad + 1)(\eta \tilde{B} + \sqrt{\eta} \tilde{B}_\sigma \|\xi\|_{G^+} + \sqrt{\eta} L_\sigma r ( \|\xi_1\|_{G^+} \mathbbm{1}_{\|\xi_1\|_{G^+} \geq \bar{z}_1} + \|\xi_2\|_{G^+} \mathbbm{1}_{\|\xi_2\|_{G^+} \geq \bar{z}_2})).\label{eq:contraction_w_err}
\end{align}

We now control the expected value of this. Using concentration of the \(\chi^2\) distribution (see Example 2.11 of \citet{Wainwright2019-rz}), we obtain that for any \(\bar{z}_1 = \sqrt{2\lambda_{\text{gap}}(G)^{-1}(n-1)}\),
\begin{align*}
    \E[\|\xi_2\|_{G^+}^2 \mathbbm{1}_{\|\xi_2\|_{G^+} \geq \bar{z}_2}] &\leq \lambda_{\text{gap}}(G)^{-1} \E[\|\xi_2\|^2 \mathbbm{1}_{\|\xi_2\| \geq \lambda_{\text{gap}}(G)^{1/2}\bar{z}_2}]\\
    &\leq \lambda_{\text{gap}}(G)^{-1} \int_{\lambda_{\text{gap}}(G)\bar{z}_2^2}^\infty \prob(\|\xi_2\|^2 \geq r) \, dr + \lambda_{\text{gap}}(G)^{-1} \int^{\lambda_{\text{gap}}(G)\bar{z}_2^2}_0 \prob(\|\xi_2\| \geq \bar{z}_2) \, dr\\
    &\leq \lambda_{\text{gap}}(G)^{-1} \int_{\lambda_{\text{gap}}(G)\bar{z}_2^2}^\infty \exp \bigg ( - \frac{(r - (n-1))^2}{8n} \bigg ) dr\\
    & \qquad + \lambda_{\text{gap}}(G)^{-1} \exp \bigg ( - \frac{(\lambda_{\text{gap}}(G) \bar{z}_2^2-(n-1))^2}{8n} \bigg ) \bar{z}^2\\
    &\leq \lambda_{\text{gap}}(G)^{-1} (\sqrt{8(n-1)\pi} + \lambda_{\text{gap}}(G) \bar{z}_2^2) \exp \bigg ( - \frac{(\lambda_{\text{gap}}(G) \bar{z}_2^2-(n-1))^2}{8(n-1)} \bigg )\\
    &\leq \lambda_{\text{gap}}(G)^{-1} \bigg ( \sqrt{8(n-1)\pi} \exp(-(n-1)/16)\\
    & \qquad + \lambda_{\text{gap}}(G) \bar{z}_2^2 \exp(-\bar{z}_2^4/64) \bigg ) \exp \bigg ( - \frac{(\lambda_{\text{gap}}(G) \bar{z}_2^2-(n-1))^2}{16(n-1)} \bigg )\\
    &\leq \kappa_0 \lambda_{\text{gap}}(G)^{-1} \exp \bigg ( - \frac{(\lambda_{\text{gap}}(G) \bar{z}_2^2-(n-1))^2}{16(n-1)} \bigg )\\
    &\leq \kappa_0 \lambda_{\text{gap}}(G)^{-1} \exp \bigg ( - \frac{n-1}{16} \bigg ),
\end{align*}
for some universal constant \(\kappa_0 \geq 1\) (independent of \(n\) and \(\bar{z}\)). Similarly, we have
\begin{align*}
    \E[\|\xi_1\|_{G^+}^2 \mathbbm{1}_{\|\xi_1\|_{G^+} \geq \bar{z}_1}] \leq \kappa_0 \lambda_{\text{gap}}(G)^{-1} \exp \bigg ( - \frac{(\lambda_{\text{gap}}(G)\bar{z}_1^2-1)^2}{16} \bigg ),
\end{align*}
for any \(\bar{z}_1 \geq \lambda_{\text{gap}}(G)^{-1/2}\). Therefore, we choose \(\bar{z}_1 = \frac{\lambda}{4} L_\sigma^{-1} \sqrt{\eta}\)We now return to \eqref{eq:contraction_w_err} using these bounds as well as the fact that \(\E[R'|Z'] = \hat{r}\). Defining the quantity,
\begin{equation*}
    A := \kappa_0^{1/2} \lambda_{\text{gap}}(G)^{-1/2} \exp(-(n-1)/32) + \kappa_0^{1/2} \lambda_{\text{gap}}(G)^{-1/2} \exp(-(\lambda_{\text{gap}}(G) \bar{z}_1^2 - 1)^2/32),
\end{equation*}
we obtain that for \(\eta \leq \min\{ \tilde{B}/2, d \tilde{B}_\sigma^2 / 4, 1/2L, 1/2 L_\sigma^2 A^2\}\)
\begin{align*}
    \E[f(R'') - f(R')] &= (r_2^{-1} e^{-ar_2} (\E[\hat{r}^2]^{1/2} + \eta \tilde{B} + \sqrt{\eta d} \tilde{B}_\sigma + \sqrt{\eta} L_\sigma r A) + 1)(\eta \tilde{B} + \sqrt{\eta d} \tilde{B}_\sigma + \sqrt{\eta} L_\sigma r A)\\
    &\leq (r_2^{-1} e^{-ar_2} (1 + \eta L + \sqrt{\eta} L_\sigma A) r + 1)\sqrt{\eta} L_\sigma r A + \frac{1}{r_2} e^{-a r_2} (\eta^2 \tilde{B}^2 + \eta d \tilde{B}_\sigma^2)\\
    & \qquad + (r_2^{-1} e^{-ar_2} (1 + \eta L + \sqrt{\eta} L_\sigma A) r + 1) (\eta \tilde{B} + \sqrt{\eta d} \tilde{B}_\sigma)\\
    & \qquad + r_2^{-1} e^{-ar_2} (\eta \tilde{B} + \sqrt{\eta d} \tilde{B}_\sigma) \sqrt{\eta} L_\sigma r^2 A\\
    &\leq (4 r_2^{-1} e^{-ar_2} r + 1)\sqrt{\eta} L_\sigma r A + \frac{3}{2 r_2} e^{-a r_2} (\eta^2 \tilde{B}^2 + \eta d \tilde{B}_\sigma^2).
\end{align*}
When \(r \leq r_2\), we have
\begin{align*}
    (r_2^{-1} e^{-ar_2} (1 + \eta L + 3 \sqrt{\eta} L_\sigma A) r + 1) r &\leq ( e^{-ar_2} (1 + \eta L + \sqrt{\eta} L_\sigma A) + 1) r\\
    &\leq ( e^{-ar_2} (1 + \eta L + \sqrt{\eta} L_\sigma A) + 1) (a^{-1}(1 - e^{-ar_2}))^{-1} a^{-1}(1 - e^{-ar})\\
    &\leq 4 (a^{-1}(1 - e^{-ar_2}))^{-1} f(r),
\end{align*}
where in the final line, we used \(\eta \leq L^{-1}\) and \(\sqrt{\eta} L_\sigma \kappa_0^{1/2} \leq 1\). When \(r > r_2\), we have
\begin{align*}
    (r_2^{-1} e^{-ar_2} (1 + \eta L + \sqrt{\eta} L_\sigma A) r + 1) r &\leq ( e^{-ar_2} (1 + \eta L + \sqrt{\eta} L_\sigma A) + 1) r_2^{-1} r^2\\
    &\leq 2 ( 2 + e^{ar_2}) f(r).
\end{align*}
Thus, we obtain,
\begin{align*}
    \E[f(R'')] &\leq \E[f(R')] + \sqrt{\eta} L_\sigma A \tfrac{6 \vee (4a)}{1 - e^{-ar_2}} f(r) + \frac{1}{2r_2} e^{-a r_2} (\eta^2 \tilde{B}^2 + \eta d \tilde{B}_\sigma^2)\\
    &\leq (1 - \eta c /2 + \sqrt{\eta} L_\sigma A \tfrac{6 \vee (4a)}{1 - e^{-ar_2}}) f(r) + \frac{3}{2r_2} e^{-a r_2} (\eta^2 \tilde{B}^2 + \eta d \tilde{B}_\sigma^2),
\end{align*}
where we used \(A L_\sigma \tfrac{6 \vee (4a)}{1 - e^{-ar_2}} \leq \sqrt{\eta} c/4\).

\section{Proofs for the stability of the noisy gradient estimator}

Using the Wasserstein contraction obtained in the previous section, we will now prove Proposition \ref{prop:time_indep_bounds}.

\pasteenv{proposition}{prop:time_indep_bounds}

The proof of Proposition follows from an application of Proposition \ref{prop:contraction} to the process in \eqref{eq:sgd_gaussian_approx}. Similar to the proof of Proposition \ref{prop:sgd_stability}, we obtain stability estimates by analysing the trajectories \(\theta_k\) and \(\tilde{\theta}_k\) trained on \(S\) and \(S^N\) with coupled minibatch indices. In particular, given a set of minibatch indices \(B \subset [N]\) with \(|B| = N_B\), if we set
\begin{gather*}
    b(\theta) := \E \Big [ \clip_C(G(\theta, (x_i)_{i \in B})) \Big | \theta, B, S \Big ], \qquad \tilde{b}(\theta) := \E \Big [ \clip_C(G(\theta, (\tilde{x}_i)_{i \in B})) \Big | \theta, B, S^N \Big ]\\
    \sigma(\theta) := \sqrt{\eta} \, \Sigma_S(\theta, B)^{1/2}, \qquad \tilde{\sigma}(\theta) := \sqrt{\eta} \, \Sigma_{S^N}(\theta, B)^{1/2},
\end{gather*}
where we use \((\tilde{x}_i)_{i=1}^N\) to denote the dataset \(S^N\) (i.e. \(\tilde{x}_i = x_i\) for all \(i \neq N\) and \(\tilde{x}_N = \tilde{x}\)), then the trajectories \(\theta_k\) and \(\tilde{\theta}_k\) are updated as in \eqref{eq:stoch_proc_1}, \eqref{eq:stoch_proc_2}. Using the shorthand, \(v_{i, j}(\theta) = w_{t_{(i, j)}} \nabla_\theta \|s_\theta(X_{(i, j)}, t_{(i, j)}) - \nabla \log p_{t_{(i, j)}|0}(X_{(i, j)}| x_i)\|^2\), we obtain the bound,
\begin{align*}
    \Sigma_S(\theta, B) &\succcurlyeq \cov \bigg ( \frac{1}{P N_B} \sum_{i \in B} \sum_{j=1}^P \clip_C(v_{i, j}(\theta)) \bigg | \theta, B, S \bigg )\\
    &\succcurlyeq \frac{1}{P} \frac{1}{N_B^2} \sum_{i \in B} \cov \big ( \clip_C(v_{i, j}(\theta)) \big | \theta, B, S \big )\\
    &\succcurlyeq\frac{1}{P N_B} \bar{\Sigma}.
\end{align*}
Therefore, we have \(\sigma(\theta) \succcurlyeq \sqrt{\eta/PN_B} \, \bar{\Sigma}^{1/2} =: G^{1/2}\), and similarly, \(\tilde{\sigma}(\theta) \succcurlyeq G^{1/2}\). The weighted norm \(\|\cdot\|_{G^+}\) satisfies the property,
\begin{equation*}
    \|\theta\|_{G^+} \leq \lambda_{\text{max}}(G^+)^{1/2} \|\theta\| \leq \sqrt{\frac{PN_B}{\eta \lambda_{\text{gap}}}} \|\theta\|
\end{equation*}
Therefore, due to the gradient clipping, we have \(\|b(\theta)\|_{G^+} \leq \sqrt{PN_B/\eta \lambda_{\text{gap}}} C =: B\). Furthermore, by Assumption \ref{ass:covariance_smoothness}, we apply the same argument used in the proof of Proposition \ref{prop:sgd_stability} to obtain
\begin{align*}
    \| b(\theta) - b(\theta')\|_{G^+} \leq (\overline{M}_4 B_\ell C_\tau^{1/2} + \overline{L}_4^2 ) \|\theta - \theta'\|_{G^+}
\end{align*}
so \(L_b = \overline{M}_4 B_\ell C_\tau^{1/2} + \overline{L}_4^2\). To obtain the Lipschitz constant for the volatility matrix, we first obtain,
\begin{align*}
    \sigma(\theta) - \sigma(\theta') &\preccurlyeq \sqrt{\eta} \cov \bigg ( \frac{1}{P N_B} \sum_{i \in B} \sum_{j=1}^P \Big ( (1 \vee (C \|v_{i, j}(\theta)\|^{-1}))v_{i, j}(\theta) - (1 \vee (C \|v_{i, j}(\theta')\|^{-1}))v_{i, j}(\theta') \Big ) \bigg | \theta, B, S \bigg )^{1/2}.
\end{align*}
From this, we deduce,
\begin{align*}
    \|\sigma(\theta) - \sigma(\theta')\|_{op, G^+} &\leq \sqrt{\eta} \sup_{\|v\|_{G^+}=1} \var \Big ( \Big \langle G^+ v, \frac{1}{P N_B} \sum_{i \in B} \sum_{j=1}^P \Big ( (1 \vee (C \|v_{i, j}(\theta)\|^{-1}))v_{i, j}(\theta) \\
    & \qquad \qquad - (1 \vee (C \|v_{i, j}(\theta')\|^{-1}))v_{i, j}(\theta') \Big ) \Big \rangle \Big | \theta, B, S \bigg )^{1/2}\\
    &\leq \sqrt{\eta} \bigg ( \frac{1}{P N_B^2} \sum_{i \in B} \var \Big ( \|v_{i, j}(\theta) - v_{i, j}(\theta')\|_{G^+}\Big | \theta, B, S \Big ) \bigg )^{1/2}.
\end{align*}
To control this further, we use the Lipschitz assumption on to show that \(\upsilon\) is Lipschitz also:
\begin{align*}
    \|v_{i, j}(\theta)& - v_{i, j}(\theta')\|_{G^+}\\
    &\leq 2 \|s_\theta(X_{(i, j)}, t_{(i, j)}) - s_{\theta'}(X_{(i, j)}, t_{(i, j)})\|_{G^+} \| \nabla_\theta s_\theta(X_{(i, j)}, t_{(i, j)})\|_{op, G^+}\\
    & \qquad + 2 \|s_\theta(X_{(i, j)}, t_{(i, j)}) - \nabla \log p_{t_{(i, j)}|0}(X_{(i, j)}| x_i)\|_{G^+}  \| \nabla_\theta s_\theta(X_{(i, j)}, t_{(i, j)}) - \nabla_\theta s_{\theta'}(X_{(i, j)}, t_{(i, j)})\|_{op, G^+}\\
    &\leq 2 L(X_{(i, j)}, t_{(i, j)})^2 \|\theta - \theta'\|_{G^+} + \frac{2 c^{1/2} \mu_t}{\sigma_t^2} M(X_{(i, j)}, t_{(i, j)}) \|\theta - \theta'\|_{G^+}.
\end{align*}
Computing the variance of this leads to the bound,
\begin{align*}
    \|\sigma(\theta) - \sigma(\theta')\|_{op, G^+} \leq 2 \sqrt{\frac{\eta}{P N_B \lambda_{\text{gap}}}} (\overline{M}_4 B_\ell C_\tau^{1/2} + \overline{L}_4^2) \|\theta - \theta'\|_{G^+} =: L_\sigma \|\theta - \theta'\|_{G^+}.
\end{align*}
Next, we use a similar argument to the proof of Proposition \ref{prop:sgd_stability} to obtain
\begin{equation*}
    \|b(\theta) - \tilde{b}(\theta)\|_{G^+} \leq \sqrt{\frac{PN_B}{\eta \lambda_{\text{gap}}}} \|b(\theta) - \tilde{b}(\theta)\| \leq \sqrt{\frac{PN_B}{\eta \lambda_{\text{gap}}}} \frac{2C}{N_B} \mathbbm{1}_{N \in B} =: \tilde{B}_b.
\end{equation*}

\begin{align*}
    \|\sigma(\theta) - \tilde{\sigma}(\theta')\|_{op, G^+} &\leq \sqrt{\eta} \bigg ( \frac{1}{P N_B^2} \sum_{i \in B} \var \Big ( \|\clip_C(v_{i, j}(\theta)) - \clip_C(v_{i, j}(\theta'))\|_{G^+}\Big | \theta, B, S \Big ) \bigg )^{1/2}\\
    &\leq \sqrt{\frac{\eta}{P N_B^2}} \var \Big ( \|\clip_C(v_{N, j}(\theta)) - \clip_C(v_{N, j}(\theta'))\|_{G^+}\Big | \theta, B, S \Big )^{1/2} \mathbbm{1}_{N \in B}
\end{align*}
Since \(0 \leq \|\clip_C(v_{i, N}(\theta)) - \clip_C(\tilde{v}_{i, N}(\theta'))\|_{G^+} \leq 2C \sqrt{\frac{PN_B}{\eta \lambda_{\text{gap}}}}\)
\begin{align*}
    \|\sigma(\theta) - \tilde{\sigma}(\theta')\|_{op, G^+} &\leq \sqrt{\frac{\eta}{P N_B^2}} \sqrt{\frac{PN_B}{\eta \lambda_{\text{gap}}}} C \mathbbm{1}_{N \in B}\\
    &\leq \sqrt{\frac{1}{N_B \lambda_{\text{gap}}}} C \mathbbm{1}_{N \in B}\\
    &=: \tilde{B}_\sigma.
\end{align*}
Therefore we have satisfied all assumptions of Proposition \ref{prop:contraction} aside from Assumption \ref{ass:contraction_l_sigma}. To satisfy this assumption we use that \(L_\sigma \sim \sqrt{\eta/P}\) and so if \(\eta\) is sufficiently small, or \(P\) is sufficiently large, this assumption is satisfied once \(n\) is sufficiently large also.

Using Proposition \ref{prop:contraction}, we obtain the contraction,
\begin{align*}
    \E[d(\theta_{k+1}, \tilde{\theta}_{k+1})|\theta_k, \tilde{\theta}_k, B_k] \leq (1 - \eta c/2) d(\theta_k, \tilde{\theta}_k) + \frac{3}{2 r_2} e^{-ar_2} \bigg ( \eta \frac{4 P C^2}{\lambda_{\text{gap}} N_B} \mathbbm{1}_{N \in B} + \frac{\eta n}{N_B \lambda_{\text{gap}}} C^2 \mathbbm{1}_{N \in B} \bigg ).
\end{align*}
Using the fact that \(\prob(N \in B_k) = N_B/N\), we obtain,
\begin{align*}
    \E[d(\theta_{k+1}, \tilde{\theta}_{k+1})] \leq (1 - \eta c/2) \E[d(\theta_k, \tilde{\theta}_k)] + \eta \frac{3}{2 r_2} e^{-ar_2} \bigg ( \frac{4 P C^2}{\lambda_{\text{gap}}} + \frac{n}{\lambda_{\text{gap}}} C^2 \bigg ) \frac{1}{N}.
\end{align*}
Thus, by comparison, we obtain the bound,
\begin{align*}
    \E[d(\theta_K, \tilde{\theta}_K)] &\leq \frac{3}{2 r_2} e^{-ar_2} \bigg ( \frac{4 P C^2}{\lambda_{\text{gap}}} + \frac{n}{\lambda_{\text{gap}}} C^2 \bigg ) \frac{1}{N} \eta \sum_{k=0}^{K-1} (1 - \eta c/2)^k\\
    &= \frac{3}{2 r_2} e^{-ar_2} \bigg ( \frac{4 P C^2}{\lambda_{\text{gap}}} + \frac{n}{\lambda_{\text{gap}}} C^2 \bigg ) \frac{1 - (1 - \eta c/2)^K}{N c / 2}\\
    &\leq \frac{3}{2 r_2} e^{-ar_2} (4 P + n) \frac{C^2}{\lambda_{\text{gap}}N} (\eta K \wedge 2/c).
\end{align*}
By the definition of \(f(r)\), we have that it dominates \(r^2\) up to a multiplicative constant:
\begin{align*}
    f(r) &\geq \bigg ( \bigg ( \frac{1}{a} (1 - e^{-ar_2}) \bigg ) \wedge \bigg ( \frac{1}{2 r_2} e^{-ar_2} \bigg ) \bigg ) r^2\\
    &\geq \frac{1}{2 r_2} e^{-ar_2} \bigg ( \bigg ( \frac{2 r_2}{a} (e^{ar_2} - 1) \bigg ) \wedge 1 \bigg ) r^2\\
    &\geq \frac{1}{2 r_2} e^{-ar_2} ( (2 r_2^2) \wedge 1 ) r^2.
\end{align*}
Thus, using assumption \ref{ass:covariance_smoothness}, it follows that
\begin{align*}
    \int \E[\|s_{\theta_K}(X_t, t) - s_{\tilde{\theta}_K}(X_t, t) \|^2|X_0=\tilde{x}, S] \tau(dt) &\leq \bar{L}^2 \E \Big [ \| \theta_K -\tilde{\theta}_K\|_{G^+}^2 \Big ]\\
    &\leq \bar{L}^2 \bigg ( \frac{1}{2 r_2} e^{-ar_2} ( (2 r_2^2) \wedge 1 ) \bigg )^{-1} \E \Big [ d(\theta_K, \tilde{\theta}_K) \Big ]\\
    &\leq 3 \bar{L}^2 ((2 r_2^2)^{-1} \vee 1) (4 P + n) \frac{C^2}{\lambda_{\text{gap}}N} (\eta K \wedge 2/c).
\end{align*}
We then use the fact that when \(\eta\) is sufficiently small, we obtain the estimate \(\eta_0 \gtrsim \lambda^{-1}\) and therefore,
\begin{equation*}
    r_1^2, r_2^2 \gtrsim \frac{P N_B C}{\eta \lambda_{\text{gap}} \lambda^2}, \qquad L \lesssim (\overline{M}_4 B_\ell C_\tau^{1/2} + \overline{L}_4^2) (P N_B \lambda_{\text{gap}})^{-1/2} \vee 1
\end{equation*}
and since \(L\) and \(r_1\) explode as \(\eta \to 0^+\), we also have,
\begin{align*}
    r_2^2 c &\gtrsim L^2 r_1^4 \exp(-6 L r_1^2 / c_0)\\
    &\gtrsim \exp(- 6 L r_1^2 / c_0).
\end{align*}

\end{document}